\numberwithin{equation}{section}
\newtheorem{proposition}{Proposition}[]
\theoremstyle{definition}
\newtheorem{definition}{Definition}[]
\theoremstyle{remark}
\newtheorem{remark}{Remark}[section]
\DeclareRobustCommand{\abbrevcrefs}{%
    \crefname{theorem}{Theorem}{Theorems.}%
    \crefname{proposition}{Proposition}{Propositions}%
    \crefname{corollary}{Corollary}{Corollaries}%
    \crefname{lemma}{Lemma}{Lemmas}%
    \crefname{definition}{Definition}{Definitions}%
    \crefname{remark}{Remark}{Remarks}%
}
\DeclareRobustCommand{\cshref}[1]{{\abbrevcrefs\cref{#1}}}
\definecolor{lavender}{rgb}{0.9, 0.9, 0.98}
\begin{document}

\pagenumbering{gobble} 
\begin{titlepage}
\begin{center}
\vspace{1cm}

\textsf{\Huge{University of Oxford \\}}

\vspace{1cm}

\begin{figure}[htb]
\centering
\includegraphics[scale=1.5]{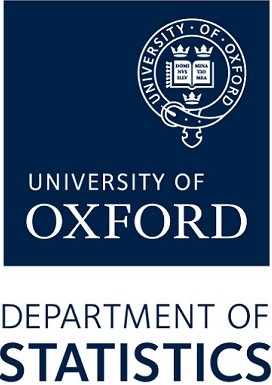}
\end{figure}

\vspace{2.0cm}

\Huge{Distribution-Free Finite-Sample Guarantees and Split Conformal Prediction}\\

\vspace{1.5cm}

\large{ by \\[14pt] Roel Hulsman \\[8pt] St. Anne's College} \\

\vspace{2cm}

\large{A dissertation submitted in partial fulfilment of the degree of Master of Science in Statistical Science.
} \\

\vspace{1cm}

\large{\emph{Department of Statistics, 24--29 St Giles, \\Oxford, OX1 3LB}} \\

\vspace{1.0cm}

\large{September 2022} \\

\end{center}
\end{titlepage}

\clearpage

\begin{abstract}

Modern black-box predictive models are often accompanied by weak performance guarantees that only hold asymptotically in the size of the dataset or require strong parametric assumptions. In response to this, split conformal prediction represents a promising avenue to obtain finite-sample guarantees under minimal distribution-free assumptions. Although prediction set validity most often concerns marginal coverage, we explore the related but different guarantee of tolerance regions, reformulating known results in the language of nested prediction sets and extending on the duality between marginal coverage and tolerance regions. Furthermore, we highlight the connection between split conformal prediction and classical tolerance predictors developed in the 1940s, as well as recent developments in distribution-free risk control. One result that transfers from classical tolerance predictors is that the coverage of a prediction set based on order statistics, conditional on the calibration set, is a random variable stochastically dominating the Beta distribution. We demonstrate the empirical effectiveness of our findings on synthetic and real datasets using a popular split conformal prediction procedure called conformalized quantile regression (CQR). 

\end{abstract}

\clearpage

\clearpage

\tableofcontents

\clearpage

\pagenumbering{arabic}

\section{Introduction}\label{sec:intro}

Black-box predictive models have become popular tools in the advent of large datasets and cheap computing resources. However, the predictive performance of these models is usually subject to weak statistical guarantees that only hold asymptotically in the size of the dataset or require strong parametric assumptions on the data generating process. Both might be unrealistic in practice, therefore the deployment of black-box predictive models is challenging in contexts where safety is key, such as medicine.  

A particular line of research aiming to improve this situation is called post-hoc calibration. Consider an arbitrary black-box predictor fitted on a proper training set, where the base predictor is black-box in the sense that we do not seek to understand or modify its behaviour, but instead wrap it into a larger post-hoc calibration algorithm. The purpose of such an algorithm is to calibrate the base predictor so that it satisfies some rigorous statistical guarantees under minimal assumptions. We specifically consider finite-sample guarantees while making no assumptions on the distribution of the underlying data, entering the field of distribution-free predictive inference.

Conformal prediction is a general framework to construct prediction sets that satisfy some distribution-free finite-sample guarantee under the assumption of iid data \cite{vovk2005algorithmic, vovk2009line}. The widely studied adaptation focused on in this thesis is called split conformal prediction \cite{lei2018distribution, papadopoulos2002inductive}. For a gentle introduction into conformal prediction we refer to \cite{angelopoulos2021gentle} and for a more technical tutorial to \cite{shafer2008tutorial}. Conformal prediction has been applied in various contexts, such as drug discovery \cite{alvarsson2021predicting}, image classification \cite{angelopoulos2020uncertainty}, natural language processing \cite{fisch2020efficient, fisch2021few, schuster2021consistent} and voting during the 2020 US presidential election \cite{cherian2020washington}. Although traditionally conformal prediction starts with the definition of a non-conformity score, we follow an alternative but equivalent interpretation leveraging nested prediction sets \cite{gupta2022nested}. 

Given the base predictor and a calibration set $\{(X_i,Y_i)\}^n_{i=1}$, we are interested in predicting the label $Y_{n+1}\in\mathcal{Y}$ corresponding to a new feature $X_{n+1}\in\mathcal{X}$, while quantifying the corresponding prediction uncertainty. Simply put, the split conformal prediction algorithm inputs the base predictor and the calibration set and outputs a prediction set $\mathcal{S}_{\widehat{\lambda}}(X_{n+1})$ that contains $Y_{n+1}$ with some distribution-free finite-sample guarantee of certainty. Note $\mathcal{S}_{\widehat{\lambda}}(X_{n+1})$ is indexed by a random variable $\widehat{\lambda}\in\Lambda$ that determines the size of the set, where $\Lambda\subset\mathbb{R}\cup\{\pm\infty\}$ is some closed set.

The most commonly used finite-sample guarantee in distribution-free predictive inference is that of marginal coverage, guaranteeing that a prediction set $\mathcal{S}_{\widehat{\lambda}}(X_{n+1})$ contains $Y_{n+1}$ with a pre-specified confidence level $1-\alpha\in(0,1)$ \textit{on average} over the iid sample $\{(X_i,Y_i)\}^{n+1}_{i=1}$. It is a well-known result what value of $\widehat{\lambda}$ yields marginal coverage in split conformal prediction \cite{lei2018distribution, papadopoulos2002inductive}. The guarantee of a tolerance region\footnote{Also known as training conditional validity \cite{vovk2012conditional} or probably approximately correct (PAC) coverage \cite{valiant1984theory, park2019pac, park2020pac}, the latter notion arising from statistical learning theory \cite{vapnik1999overview}.} slightly differs from marginal coverage, taking explicitly into account that the calibration set is random and thus that the coverage of $\mathcal{S}_{\widehat{\lambda}}(X_{n+1})$, conditional on the calibration set, is a random variable. $\mathcal{S}_{\widehat{\lambda}}(X_{n+1})$ is an $(\epsilon,\delta)$-tolerance region if it contains at least a pre-specified proportion $1-\epsilon\in(0,1)$ of the label population $\mathcal{Y}$ with at least a pre-specified probability $1-\delta\in(0,1)$ over the calibration data, see \cref{sec:delta_val} for a formal definition. The connection between marginal coverage and tolerance regions in the context of split conformal prediction is considered in \cite{vovk2012conditional}.

While conformal prediction has been pioneered in roughly the last two decades, procedures to construct tolerance regions based on order statistics have been studied extensively since the 1940s \cite{wilks1941determination, wald1943extension, scheffe1945non, tukey1947non, tukey1948nonparametric, fraser1951nonparametric, fraser1951sequentially, fraser1953nonparametric, kemperman1956generalized}. For a concise treatment we refer to \cite{guttman1970statistical} and for a more recent review to \cite{krishnamoorthy2009statistical}. \cite{vovk2012conditional} first studied the connection between these `classical tolerance predictors' and split conformal prediction, stating the property of prediction set validity in the traditional statistical language of classical tolerance predictors and interpreting split conformal prediction as a `conditional' version of the classical tolerance predictors proposed in \cite{wilks1941determination, scheffe1945non}. 

Exploring the more recent past, \cite{angelopoulos2022conformal, bates2021distribution, angelopoulos2021learn} introduced procedures to obtain finite-sample guarantees using a more general notion of statistical error than the probability of miscoverage, motivated by examples where miscoverage is not the natural notion of error. These `distribution-free risk control' algorithms open new avenues for post-hoc calibration, most notably by calibrating a base predictor leveraging upper confidence bounds on the unknown underlying risk function \cite{bates2021distribution} or through multiple hypothesis testing \cite{angelopoulos2021learn}, in the latter case moving into the direction of decision-making. Although the procedures are similar to split conformal prediction, they rely on entirely different proof techniques and the relation to split conformal prediction is not obvious.

\subsection{Summary and Outline}
This thesis reviews split conformal prediction, classical tolerance predictors and distribution-free risk control within the language of nested prediction sets, and explores the connections between these procedures. Although no new methodology is proposed, we present several novel insights. 

First, we reformulate known results regarding tolerance regions in the context of split conformal prediction in the language of nested prediction sets, specifying the value of $\widehat{\lambda}$ that results in an $(\epsilon,\delta)$-tolerance region. Furthermore, we expand on the duality between marginal coverage and tolerance regions, showing that a split conformal prediction set that satisfies marginal coverage is an $(\epsilon,\delta)$-tolerance region for certain $\epsilon,\delta$, and conversely, that a split conformal $(\epsilon,\delta)$-tolerance region automatically satisfies marginal coverage for certain $\alpha$. This is an extension of \cite[Proposition 2a-2b]{vovk2012conditional}, which only covers the former relation. 

Second, we elaborate on the crucial role of order statistics in the connection between classical tolerance predictors and split conformal prediction. In particular, we prove that the coverage of a split conformal prediction set, conditional on the calibration set, is a random variable stochastically dominating the Beta distribution. To the best of our knowledge, this result has only been hinted at in \cite{vovk2012conditional} and mentioned without proof in \cite{angelopoulos2021gentle} for label populations following a continuous distribution on $\mathcal{Y}$. Interestingly, this analytical distribution provides an alternative proof technique to obtain marginal coverage and tolerance regions in split conformal prediction.

Third, our focus with regard to distribution-free risk control lies in getting a better understanding of its relation to split conformal prediction. We show that conformal risk control (CRC) \cite{angelopoulos2022conformal} yields identical prediction sets to split conformal prediction for risks equal to the expectation of the 0-1 loss when the objective of interest is marginal coverage, although CRC applies more general to any bounded monotone risk function. Furthermore, upper confidence bound (UCB) calibration \cite{bates2021distribution} obtains identical prediction sets to split conformal prediction for risks equal to the expectation of the 0-1 loss when the objective of interest is tolerance regions, although UCB calibration applies more general to any monotone risk function. In turn, learn then test (LTT) \cite{angelopoulos2021learn} generalizes UCB calibration and thus split conformal prediction to any risk function. 

Finally, we verify our theoretical results by demonstrating how conformalized quantile regression (CQR) \cite{romano2019conformalized} can produce prediction sets with valid coverage, using quantile random forest (QRF) \cite{meinshausen2006quantile} as a base predictor. We apply CQR to synthetic and real datasets identical to the datasets used in \cite{romano2019conformalized} and demonstrate that any base predictor can be calibrated to produce valid prediction sets using split conformal prediction, independent of the accuracy of the base predictor. 

\cref{sec:guarantees} introduces notation and a brief outline of distribution-free predictive inference, as well as a formal definition of several finite-sample guarantees. Subsequently, \cref{sec:conformal} proceeds to a brief overview of the split conformal prediction procedure to obtain marginal coverage and tolerance regions, illustrated by CQR. This is followed by \cref{sec:dist_cov}, highlighting the connection between classical tolerance predictors and split conformal prediction. Afterwards, \cref{sec:riskcontrol} discusses distribution-free risk control, in particular the procedures proposed in \cite{angelopoulos2022conformal, bates2021distribution, angelopoulos2021learn}. \cref{sec:experiments} empirically verifies the earlier theoretical results through a case study of CQR. Finally, \cref{sec:conclusion} concludes and discusses further research suggestions.

\section{Distribution-Free Finite-Sample Guarantees}\label{sec:guarantees}

First, \cref{sec:pred_inf} provides a brief overview of distribution-free predictive inference and introduces some notation. Subsequently, \cref{sec:marg_cov} defines marginal coverage, \cref{sec:delta_val} defines tolerance regions and \cref{sec:risk_contr} generalizes these guarantees to a more general notion of statistical error.

\subsection{Distribution-Free Predictive Inference}\label{sec:pred_inf}

Consider an arbitrary base predictor fitted on a proper training set. We focus on post-hoc calibration throughout this thesis, such that we assume the proper training set and thus the base predictor to be fixed. In fact, the post-hoc calibration algorithms we discuss produce valid prediction sets for \textit{any given} base predictor. As such, all finite-sample guarantees and corresponding probability statements throughout this thesis are conditional on the proper training set.

The calibration set $\{(X_i,Y_i)\}^{n}_{i=1}$ of size $n$ consists of feature vectors $X_i\in\mathcal{X}$ and labels $Y_i\in\mathcal{Y}$ drawn from a population $\mathcal{X}\times\mathcal{Y}$. In general, the optimal split choice between proper training and calibration set depends on the accuracy of the base predictor, e.g. more complex procedures may require a larger proper training set. However, we do not pursue the optimal split choice in this thesis and assume an equal split. 

The base predictor and calibration set are leveraged by some post-hoc calibration method to construct a prediction set\footnote{Called `prediction interval' or `prediction region' in regression problems with respectively a one- or multi-dimensional population. Furthermore, we would like to point out the difference with the terms `confidence set/interval/region', which pertain to \textit{population parameters}.}  $\mathcal{S}_\lambda(X_{n+1})$ that contains the label $Y_{n+1}\in\mathcal{Y}$ corresponding to a new feature $X_{n+1}\in\mathcal{X}$ with some pre-specified guarantee of certainty. Technically, $\mathcal{S}_\lambda:\mathcal{X}\rightarrow\mathcal{Y}'$ is a set-valued random function that maps a feature vector $X_{n+1}\in\mathcal{X}$ to a set-valued prediction in some space of sets $\mathcal{Y}'$, e.g. $\mathcal{Y}'=2^\mathcal{Y}$. Since $\mathcal{S}_\lambda(X_{n+1})$ is random, we essentially train a functional estimator on the calibration set, but for simplicity we refer to $\mathcal{S}_\lambda(X_{n+1})$ as a prediction set.

The prediction set $\mathcal{S}_\lambda(X_{n+1})$ is indexed by a parameter $\lambda\in\Lambda$ that determines the size of the set, where $\Lambda\subset\mathbb{R}\cup\{\pm\infty\}$ is some closed set. This gives rise to a sequence of nested prediction sets $\{\mathcal{S}_\lambda(X_{n+1})\}_{\lambda\in\Lambda}$, which is fixed conditional on the proper training set. The sets are nested in the sense that larger values of $\lambda$ lead to larger prediction sets, i.e. 
\begin{equation}
    \forall \lambda_1\leq\lambda_2\in\Lambda \quad \Rightarrow \quad \mathcal{S}_{\lambda_1}(x)\subseteq \mathcal{S}_{\lambda_2}(x) \quad \forall x\in\mathcal{X}.
\end{equation}
Furthermore, the sequence is such that $\mathcal{S}_{\inf\Lambda}=\emptyset$ and $\mathcal{S}_{\sup\Lambda}=\mathcal{Y}$. This nested interpretation of prediction sets is inspired by \cite{gupta2022nested}. 

The key objective in distribution-free predictive inference is to infer $\widehat{\lambda}$, the smallest value of $\lambda$ such that $\mathcal{S}_{\widehat{\lambda}}(X_{n+1})$ contains $Y_{n+1}$ with some finite-sample guarantee under minimal distribution-free assumptions. The value of $\widehat{\lambda}$ is random through its dependence on the calibration set. We outline several finite-sample guarantees below. 

The underlying assumption often used in distribution-free predictive inference is that the calibration set $\{(X_i,Y_i)\}^{n}_{i=1}$ together with the new observation $(X_{n+1},Y_{n+1})$ is iid or exchangeable. We limit ourselves to iid data in this thesis, although many results discussed hold for exchangeable data as well. There are numerous recent advancements in extending conformal prediction beyond exchangeability, for example regarding covariate shift \cite{tibshirani2019conformal, park2021pac, qiu2022distribution}, label shift \cite{podkopaev2021distribution}, distribution shift \cite{cauchois2020robust, gibbs2021adaptive} and non-exchangeable data such as slowly-changing time-series \cite{barber2022conformal}. These have various applications regarding e.g. causal inference \cite{lei2021conformal, yin2022conformal, chernozhukov2021exact}, survival analysis \cite{candes2021conformalized}, private prediction sets \cite{angelopoulos2021private}, the design problem \cite{fannjiang2022conformal}, spatial data \cite{mao2020valid}, dependent data \cite{chernozhukov2018exact, dunn2018distribution, oliveira2022split} and off-policy evaluation \cite{taufiq2022conformal}.

\subsection{Marginal Coverage}\label{sec:marg_cov}

Perhaps the most basic finite-sample guarantee in distribution-free predictive inference is marginal coverage. A prediction set $\mathcal{S}_{\widehat{\lambda}}(X_{n+1})$ with marginal coverage at significance level $\alpha\in(0,1)$ contains the new label $Y_{n+1}$ with confidence level $1-\alpha$, on average over the randomness of the data.

\begin{definition}[Marginal Coverage]\label{def:marg_cov}
    A prediction set $\mathcal{S}_{\widehat{\lambda}}(X_{n+1})$ satisfies marginal coverage\footnote{Equivalently called `$\alpha$-tolerance predictor' \cite{vovk2005algorithmic} or `$1-\alpha$ expectation tolerance region' \cite{fraser1956nonparametric}.} with significance level $\alpha\in(0,1)$ if
    \begin{equation}\label{eq:marg_cov}
        \mathbb{P}\big[Y_{n+1}\in \mathcal{S}_{\widehat{\lambda}}(X_{n+1})\big]\geq 1-\alpha,
    \end{equation}
    for any sample size $n$ and any distribution on $\mathcal{X}\times\mathcal{Y}$.
\end{definition}
\begin{remark}
    Marginal coverage holds for any $n$ and is thus a finite-sample guarantee. Furthermore, the result holds for all distributions on $\mathcal{X}\times\mathcal{Y}$ and is thus distribution-free.
\end{remark}
\begin{remark}
    The probability holds marginally over the calibration set $\{(X_i,Y_i)\}^{n}_{i=1}$ and the new observation $(X_{n+1},Y_{n+1})$. This means that marginal coverage guarantees coverage \textit{on average} over all data $\{(X_i,Y_i)\}^{n+1}_{i=1}$. 
\end{remark}

We would like to point out the difference with the related but much stronger property of conditional coverage. While marginal coverage specifies a confidence level \textit{on average} over all $X_{n+1}\in\mathcal{X}$, conditional coverage specifies a confidence level \textit{for each} $X_{n+1}\in\mathcal{X}$. This can be formalized as
\begin{equation}\label{eq:cond_cov}
    \mathbb{P}\big[Y_{n+1}\in \mathcal{S}_{\widehat{\lambda}}(X_{n+1}) \ \big| \ X_{n+1}=x\big]\geq 1-\alpha,
\end{equation}
for almost all $x\in\mathcal{X}$, any sample size $n$ and any distribution on $\mathcal{X}\times\mathcal{Y}$. However, in regression problems conditional coverage is impossible to achieve by finite prediction sets without additional assumptions \cite{vovk2012conditional, lei2014distribution}. Under certain conditions, conditional coverage does hold asymptotically \cite{lei2018distribution}. Furthermore, there have been efforts for conditional coverage to approximately hold across regions of the feature space \cite{romano2019conformalized, izbicki2019flexible, guan2020conformal, romano2020classification, angelopoulos2020uncertainty, cauchois2021knowing, foygel2021limits}.

Similarly, class conditional coverage specifies a confidence level for subsets (`classes') of the label space $\mathcal{Y}$, such that coverage is approximately balanced across $\mathcal{Y}$. Various authors have demonstrated procedures to achieve (approximate) class conditional coverage \cite{lei2014classification, hechtlinger2018cautious, sadinle2019least, romano2019malice, guan2022prediction}. Although (class) conditional coverage is of great practical importance in many applications, it is outside of the scope of this thesis.

\subsection{Tolerance Regions}\label{sec:delta_val}

Marginal coverage holds on average over the calibration set $\{(X_i,Y_i)\}^n_{i=1}$ and the new observation $(X_{n+1},Y_{n+1})$. The key idea here is that the calibration set is a finite subset of the population and thus a random quantity, such that not all calibration sets yield coverage exactly equal to $1-\alpha$. The finite-sample guarantee of a tolerance region\footnote{We can distinguish between a `tolerance interval' and `tolerance region' when the population is respectively one- or multi-dimensional. We use the term tolerance region to address both.} is related to that of marginal coverage, but bounds the probability of a non-representative calibration set. A prediction set is a tolerance region if it is expected to contain at least a pre-specified proportion of the sampled population with a pre-specified probability.

\begin{definition}[$(\epsilon,\delta)$-Tolerance Region]\label{def:valid_delta}
    Let $\epsilon, \delta\in(0,1)$. A prediction set $\mathcal{S}_{\widehat{\lambda}}(X_{n+1})$ is an $(\epsilon, \delta)$-tolerance region\footnote{Equivalently called `$(\epsilon,\delta)$-validity' \cite{vovk2012conditional}, `probably approximately correct' (PAC) \cite{park2020pac}, `$(\epsilon, \delta)$-tolerance predictor' \cite{vovk2005algorithmic}, `$1-\delta$ tolerance region for a proportion $1-\epsilon$' \cite{fraser1956nonparametric}, or `$(1-\epsilon)$-content tolerance region at confidence $1-\delta$' \cite{fraser1956tolerance}.} if
    \begin{equation}\label{eq:valid_delta}
        \mathbb{P}\Big[\mathbb{P}\big[Y_{n+1}\in \mathcal{S}_{\widehat{\lambda}}(X_{n+1}) \ \big| \ \{(X_i,Y_i)\}^n_{i=1}\big] \geq 1-\epsilon \Big]\geq 1-\delta,
    \end{equation}
    for any sample size $n$ and any distribution on $\mathcal{X}\times\mathcal{Y}$. 
\end{definition}
An $(\epsilon,\delta)$-tolerance region automatically satisfies marginal coverage at significance level $\epsilon+\delta-\epsilon\delta$. This is apparent by writing the coverage probability of $\mathcal{S}_{\widehat{\lambda}}(X_{n+1})$ as
\begin{equation}
\begin{split}
    \mathbb{P}\big[Y_{n+1}\in \mathcal{S}_{\widehat{\lambda}}(X_{n+1})\big]&=\mathbb{E}\Big[\mathbb{P}\big[Y_{n+1}\in \mathcal{S}_{\widehat{\lambda}}(X_{n+1}) \ \big| \ \{(X_i,Y_i)\}^n_{i=1}\big]\Big]\\ &\geq(1-\epsilon)\mathbb{P}\Big[\mathbb{P}\big[Y_{n+1}\in \mathcal{S}_{\widehat{\lambda}}(X_{n+1}) \ \big| \ \{(X_i,Y_i)\}^n_{i=1}\big] \geq 1-\epsilon \Big]\\
    &\geq(1-\epsilon)(1-\delta)\\
    &=1-(\epsilon+\delta-\epsilon\delta),
\end{split}
\end{equation}
by subsequently applying the Law of Total Expectation, Markov's Inequality and the definition of an $(\epsilon,\delta)$-tolerance region. This lower bound on marginal coverage for $(\epsilon,\delta)$-tolerance regions holds for \textit{any} post-hoc calibration procedure to choose $\widehat{\lambda}$. In the case of split conformal prediction, we show a tighter bound in \cref{sec:tol_reg_split}.

\subsection{Risk Control}\label{sec:risk_contr}
Recently, \cite{angelopoulos2022conformal, bates2021distribution, angelopoulos2021learn} introduced finite-sample guarantees using a more general notion of statistical error than the coverage probability conditional on the calibration set, motivated by applications where the natural notion of error is not the rate of miscoverage. The objective is to control a risk $R:\Lambda\rightarrow\mathbb{R}$, corresponding to a prediction set $\mathcal{S}_\lambda(X_{n+1})$, where a higher risk indicates $\mathcal{S}_\lambda(X_{n+1})$ provides worse predictions of $Y_{n+1}$. If the sequence of nested prediction sets $\{\mathcal{S}_\lambda(X_{n+1})\}_{\lambda\in\Lambda}$ is fixed, then the risk is a deterministic function of $\lambda$ alone. The risk corresponding to $\widehat{\lambda}$ is dependent on the calibration set and thus random. 

The notion of marginal risk control (MRC) provides explicit bounds on $R(\widehat{\lambda})$ on average over the data $\{(X_i,Y_i)\}^{n+1}_{i=1}$, similar to marginal coverage. To the best of our knowledge, MRC has only appeared under the name of conformal risk control (CRC) \cite{angelopoulos2022conformal}, with the restriction that $R(\lambda)$ is the expectation of a bounded nested one-dimensional loss function. In turn, the notion of a risk controlling prediction (RCP) \cite{bates2021distribution, angelopoulos2021learn} provides a related but slightly different guarantee, similar to the notion of a tolerance region. 

\begin{definition}[Marginal Risk Control]\label{def:mrc}
    A prediction set $\mathcal{S}_{\widehat{\lambda}}(X_{n+1})$ satisfies marginal risk control (MRC) at upper bound level $\alpha<+\infty$ if 
    \begin{equation}\label{eq:crc}
        \mathbb{E}\big[R(\widehat{\lambda}) \big]\leq \alpha,
    \end{equation}
    for any sample size $n$ and any distribution on $\mathcal{X}\times\mathcal{Y}$.
\end{definition}
\begin{remark}\label{def:crc}
    If $R(\widehat{\lambda})=\mathbb{E}\big[L(Y_{n+1}, \mathcal{S}_{\widehat{\lambda}}(X_{n+1})) \ \big| \ \{(X_i,Y_i)\}^n_{i=1}\big]$, where $L:\mathcal{Y}\times\mathcal{Y}'\rightarrow(-\infty,B]$ is a bounded nested loss function for some $B<+\infty$, then this corresponds to the conformal risk control (CRC) guarantee mentioned in \cite{angelopoulos2022conformal}.
\end{remark}

\begin{definition}[$(\epsilon,\delta)$-Risk Controlling Prediction]\label{def:risk_control}
    Let $\epsilon, \delta\in(0,1)$. A prediction set $\mathcal{S}_{\widehat{\lambda}}(X_{n+1})$ is an $(\epsilon, \delta)$-risk controlling prediction (RCP) if
    \begin{equation}\label{eq:risk_control}
        \mathbb{P}\Big[R(\widehat{\lambda}) \leq \epsilon \Big]\geq 1-\delta,
    \end{equation}
    for any sample size $n$ and any distribution on $\mathcal{X}\times\mathcal{Y}$.
\end{definition}
\begin{remark}\label{rem:rcps}
    If $R(\widehat{\lambda})=\mathbb{E}\big[L(Y_{n+1},\mathcal{S}_{\widehat{\lambda}}(X_{n+1}))\ \big| \ \{(X_i,Y_i)\}^n_{i=1}\big]$, where $L:\mathcal{Y}\times\mathcal{Y}'\rightarrow\mathbb{R}_{\geq0}$ is a nested loss function, then this corresponds to the $(\epsilon, \delta)$-risk controlling prediction set (RCPS) guarantee mentioned in \cite{bates2021distribution}.
\end{remark}

\begin{remark}
    The parameters $\alpha$ in MRC and $\epsilon$ in an $(\epsilon, \delta)$-RCP do not necessarily represent a probability as in the case of marginal coverage and tolerance regions. Their magnitude depends upon the scale of the risk function. 
\end{remark}

Oftentimes we take the risk to be the expectation of a continuous nested loss function $L:\mathcal{Y}\times\mathcal{Y}'\rightarrow\mathbb{R}_{\geq0}$, but risk control is not limited to this particular case or even one-dimensional risks. The loss function is nested in a sense that larger prediction sets lead to smaller losses, i.e. for arbitrary $x\in\mathcal{X}$, $y\in\mathcal{Y}$,
\begin{equation}\label{eq:mon_cond}
    \forall \lambda_1\leq\lambda_2\in\Lambda \quad \Rightarrow \quad \mathcal{S}_{\lambda_1}(x)\subseteq \mathcal{S}_{\lambda_2}(x) \quad \Rightarrow \quad L(y,\mathcal{S}_{\lambda_1}(x))\geq L(y,\mathcal{S}_{\lambda_2}(x)).
\end{equation}

We can recover the familiar notion of coverage, conditional on the calibration set, by specifying the risk as the conditional expectation of the 0-1 loss, i.e.
\begin{equation}\label{eq:01loss}
    \begin{split}
    R_{0-1}(\widehat{\lambda})&=\mathbb{E}\Big[\mathbbm{1}\{Y_{n+1}\not\in\mathcal{S}_{\widehat{\lambda}}(X_{n+1})\}\ \big| \ \{(X_i,Y_i)\}^n_{i=1}\Big] \\
    &= 1- \mathbb{P}\big[Y_{n+1}\in\mathcal{S}_{\widehat{\lambda}}(X_{n+1})\ \big| \ \{(X_i,Y_i)\}^n_{i=1}\big].
    \end{split}
\end{equation}
For risks as in \cref{eq:01loss}, MRC reduces to marginal coverage and an $(\epsilon,\delta)$-RCP reduces to an $(\epsilon,\delta)$-tolerance region.

\section{Conformal Prediction}\label{sec:conformal}

\cref{sec:conf_pred} introduces split conformal prediction and specifically the nested set interpretation of split conformal prediction. Afterwards, \cref{sec:split} and \cref{sec:tol_reg_split} show how to construct split conformal prediction sets that result in marginal coverage and tolerance regions respectively. Finally, \cref{sec:cqr} demonstrates a specific split conformal regression procedure called conformalized quantile regression (CQR).

\subsection{Split Conformal Prediction}\label{sec:conf_pred}
Conformal prediction is a general framework to construct prediction sets that satisfy some distribution-free finite-sample guarantee under the assumption of iid data. The original specification is called full conformal prediction \cite{vovk2005algorithmic, vovk2009line}. A more computationally efficient and widely studied adaptation that splits the available data into a proper training and calibration set is split conformal prediction\footnote{Originally called `inductive conformal prediction' \cite{papadopoulos2002inductive}.} \cite{papadopoulos2002inductive, lei2018distribution}. However, a clear separation of the available data implies that the calibration set is not used to fit the base predictor and similarly, the proper training set is not used in post-hoc calibration. Several alternatives exist that address this issue, including cross conformal prediction \cite{vovk2015cross}, jackknife+ \cite{barber2021predictive} and out-of-bag conformal \cite{johansson2014regression, bostrom2017accelerating, linusson2020efficient, kim2020predictive}. However, we do not engage with these recent developments.

The traditional definition of split conformal prediction starts with a non-conformity score $r:\mathcal{X}\times\mathcal{Y}\rightarrow\mathbb{R}$, fully determining the conformal prediction procedure. The non-conformity score measures how well an observation $(x,y)$ conforms to the proper training set, where larger values indicate less conformity. Recently, \cite{gupta2022nested} proposed an alternative but equivalent view of conformal prediction that starts with the sequence of nested prediction sets $\{\mathcal{S}_\lambda(X_{n+1})\}_{\lambda\in\Lambda}$, from which the non-conformity score function $r(x,y)$ follows naturally. We follow this nested interpretation here. 

The nested sequence of prediction sets $\{\mathcal{S}_\lambda(X_{n+1})\}_{\lambda\in\Lambda}$ is constructed through the base predictor and considered fixed. However, there are various design choices for the sequence similar to the choice of non-conformity score in the traditional definition of conformal prediction. \cite[Table 1]{gupta2022nested} contains several examples of $\{\mathcal{S}_\lambda(X_{n+1})\}_{\lambda\in\Lambda}$ in various settings and both adaptive and symmetric expansions around the base predictor. From the sequence of prediction sets follows the non-conformity score function
\begin{equation}\label{eq:score}
    r(x,y):=\inf\{\lambda\in\Lambda:y\in\mathcal{S}_\lambda(x)\},
\end{equation}
i.e. the smallest $\lambda$ such that $y$ is included in $\mathcal{S}_\lambda(x)$. 

Now consider the fresh iid calibration set $\{(X_i,Y_i)\}^n_{i=1}$. We are interested in $\widehat{\lambda}_\mathrm{split}$, the smallest value of $\lambda$ such that the \textit{split conformal prediction set} $\mathcal{S}_{\widehat{\lambda}_\mathrm{split}}(X_{n+1})$ satisfies some distribution-free finite-sample guarantee. Using the non-conformity score function,
\begin{equation}\label{eq:split_set}
    \mathcal{S}_{\widehat{\lambda}_\mathrm{split}}(X_{n+1})=\{y\in\mathcal{Y}:r(X_{n+1},y)\leq \widehat{\lambda}_\mathrm{split}\},
\end{equation}
such that $\widehat{\lambda}_\mathrm{split}$ can be interpreted as an upper bound on the score function.

\subsection{Marginal Coverage and Split Conformal Prediction}\label{sec:split}

We are interested in the value of $\widehat{\lambda}_\mathrm{split}$ that results in marginal coverage at significance level $\alpha\in(0,1)$. To this regard, we take a closer look at the non-conformity score function (\cref{eq:score}). The set of non-conformity scores corresponding to the calibration set is $\{r_i\}^n_{i=1}$, where $r_i:=r(X_i,Y_i)$. 

For a pre-specified significance level $\alpha\in(0,1)$, we identify the quantile
\begin{equation}\label{eq:emp_quan}
    \widehat{Q}(\alpha):=\frac{1}{n}\lceil (1-\alpha)(n+1)\rceil\text{-th quantile of }\{r_i\}^n_{i=1},
\end{equation}
which is close to the $(1-\alpha)$-th quantile with a slight finite-sample correction. We can equivalently define $\widehat{Q}(\alpha)$ using order statistics as $\widehat{Q}(\alpha)=r_{(\lceil (1-\alpha)(n+1)\rceil)}$, i.e. the $\lceil (1-\alpha)(n+1)\rceil$-th largest element of $\{r_i\}^n_{i=1}$. This will become relevant in \cref{sec:dist_cov}. 

The proposition below, originally shown by \cite{papadopoulos2002inductive, lei2018distribution}, summarizes that $\widehat{\lambda}_\mathrm{split}=\widehat{Q}(\alpha)$ results in marginal coverage at significance level $\alpha$. The key idea behind this result is that if the sample $\{(X_i,Y_i)\}^{n+1}_{i=1}$ is iid, then the rank of the non-conformity score $r_{n+1}$ among $r_1,\dots,r_n,r_{n+1}$ is uniform over the set $\{1,\dots,n+1\}$.

\begin{proposition}\label{prop:marg_cov_split}
    Suppose the sample $\{(X_i,Y_i)\}^{n+1}_{i=1}$ is iid. Then the split conformal prediction set $\mathcal{S}_{\widehat{\lambda}_\mathrm{split}}(X_{n+1})$ (\cref{eq:split_set}) with $\widehat{\lambda}_\mathrm{split}=\widehat{Q}(\alpha)$ (\cref{eq:emp_quan}) satisfies marginal coverage with significance level $\alpha\in(0,1)$, i.e.
    \begin{equation}\label{eq:marg_cov_split}
        \mathbb{P}\big[Y_{n+1}\in \mathcal{S}_{\widehat{\lambda}_\mathrm{split}}(X_{n+1})\big]\geq 1-\alpha.
    \end{equation}
    Moreover, if the non-conformity scores are almost surely distinct, then the split conformal prediction set is nearly perfectly calibrated, i.e. 
    \begin{equation}\label{eq:marg_cov_split_as}
        \mathbb{P}\big[Y_{n+1}\in \mathcal{S}_{\widehat{\lambda}_\mathrm{split}}(X_{n+1})\big]\leq 1-\alpha+\frac{1}{n+1}.
    \end{equation}
\end{proposition}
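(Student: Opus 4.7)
The plan is to reduce the coverage event to a rank comparison among iid non-conformity scores, and then exploit the symmetry of iid (equivalently exchangeable) samples to compute that rank's distribution.

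First I would introduce the scores $r_i := r(X_i,Y_i)$ for $i=1,\ldots,n+1$. Since the underlying sample $\{(X_i,Y_i)\}_{i=1}^{n+1}$ is iid and $r$ is a fixed deterministic function (depending only on the frozen base predictor), the scores $r_1,\ldots,r_{n+1}$ are themselves iid, hence exchangeable. By \cref{eq:split_set}, the coverage event can be rewritten as
\begin{equation*}
\big\{Y_{n+1}\in\mathcal{S}_{\widehat{\lambda}_\mathrm{split}}(X_{n+1})\big\}=\big\{r_{n+1}\leq \widehat{Q}(\alpha)\big\}=\big\{r_{n+1}\leq r_{(k)}\big\},
\end{equation*}
where $k:=\lceil(1-\alpha)(n+1)\rceil$ and $r_{(k)}$ denotes the $k$-th order statistic of $r_1,\ldots,r_n$.

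Next I would translate the right-hand event into a statement about the rank $R$ of $r_{n+1}$ within the pooled sample $r_1,\ldots,r_{n+1}$. In the tie-free case, a quick combinatorial check shows $\{r_{n+1}\leq r_{(k)}\}=\{R\leq k\}$: if $r_{n+1}$ has pooled rank $R=j$, then exactly $j-1$ of the scores $r_1,\ldots,r_n$ fall strictly below it, so $r_{(k)}\geq r_{n+1}$ precisely when $j\leq k$. Exchangeability forces $R$ to be uniform on $\{1,\ldots,n+1\}$, giving
\begin{equation*}
\mathbb{P}\big[r_{n+1}\leq r_{(k)}\big]=\frac{k}{n+1}=\frac{\lceil(1-\alpha)(n+1)\rceil}{n+1}\geq 1-\alpha,
\end{equation*}
which establishes \cref{eq:marg_cov_split}. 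For the upper bound in \cref{eq:marg_cov_split_as}, I would invoke the second ceiling inequality $\lceil(1-\alpha)(n+1)\rceil\leq(1-\alpha)(n+1)+1$, which under the almost-sure distinctness assumption (guaranteeing the rank truly is uniform and no ties inflate the probability) yields $k/(n+1)\leq 1-\alpha+1/(n+1)$.

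The main obstacle is handling ties, because then $R$ is no longer literally uniform and the two events above can disagree on a tie set. I would address this either by augmenting the scores with an independent uniform perturbation (a random tie-breaking rule making all $(n+1)!$ orderings equally likely, so that the argument above applies verbatim to the perturbed ranks and the lower bound survives taking the perturbation to zero), or, equivalently, by noting that ties can only \emph{increase} $\mathbb{P}[r_{n+1}\leq r_{(k)}]$ since any exchangeable tie-breaking shows the event $\{R\leq k\}$ is contained in $\{r_{n+1}\leq r_{(k)}\}$. The sharp upper bound, by contrast, genuinely requires the almost-sure distinctness hypothesis, which is exactly why the proposition states it separately.
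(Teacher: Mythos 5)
Your proof is correct and takes essentially the same approach as the paper, which simply defers to the proof of Theorem~1 in Romano et al.\ (2019): rewrite the coverage event as $\{r_{n+1}\leq r_{(k)}\}$ with $k=\lceil(1-\alpha)(n+1)\rceil$, use exchangeability of the scores to deduce that the pooled rank of $r_{n+1}$ is uniform on $\{1,\dots,n+1\}$, and read off $k/(n+1)\geq 1-\alpha$ and, under a.s.\ distinctness, $k/(n+1)\leq 1-\alpha+1/(n+1)$. Your treatment of ties (random tie-breaking, or observing that ties only enlarge the event $\{r_{n+1}\leq r_{(k)}\}$) is a bit more explicit than the cited source but is the standard way to make the lower bound rigorous without the distinctness assumption.
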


\begin{proof}
    See \cref{app:prop1}.
\end{proof}

\begin{remark}
    No assumptions are made on the distribution of the proper training set, such that it does not need to come from the same distribution as the rest of the data. Furthermore, the result holds both conditional and unconditional on the proper training set, the former implying that split conformal prediction is able to calibrate \textit{any given} base predictor. 
\end{remark}

\subsection{Tolerance Regions and Split Conformal Prediction}\label{sec:tol_reg_split}

Instead of marginal coverage, suppose we are interested in the value of $\widehat{\lambda}_\mathrm{split}$ that results in an $(\epsilon,\delta)$-tolerance region at significance level $\epsilon,\delta\in(0,1)$. First, we introduce some notation. Throughout this thesis, $\mathrm{Bin}(k; m, p)$ denotes the Binomial cumulative distribution function with $m\geq 1$ trials and success probability $p\in(0,1)$, evaluated at a certain number of successes $0\leq k\leq m$. For pre-specified $\epsilon,\delta\in(0,1)$, we identify the quantile
\begin{equation}\label{eq:P_hat}
    \widehat{P}(\epsilon,\delta):=\Big(1-\frac{1}{n}\sup\big\{k:\mathrm{Bin}(k;n,\epsilon)\leq\delta\big\}\Big)\text{-th quantile of }\{r_i\}^n_{i=1}.
\end{equation}
We can equivalently define $\widehat{P}(\epsilon,\delta)$ using order statistics as $\widehat{P}(\epsilon,\delta)=r_{(n-\sup\{k:\mathrm{Bin}(k;n,\epsilon)\leq\delta\})}$, i.e. the $\Big(n-\sup\big\{k:\mathrm{Bin}(k;n,\epsilon)\leq\delta\big\}\Big)$-th largest element of $\{r_i\}^n_{i=1}$. This will become relevant in \cref{sec:dist_cov}. In practice, a numerical method is needed to invert the Binomial distribution. We include tables with conventional values of $\epsilon,\delta$ and $n$ in \cref{sec:tables}. 

\cshref{prop:tol_reg_split} below summarizes that $\widehat{\lambda}_\mathrm{split}=\widehat{P}(\epsilon,\delta)$ results in an $(\epsilon,\delta)$-tolerance region for iid data. To provide some intuition, the split conformal prediction set $\mathcal{S}_{\widehat{\lambda}_\mathrm{split}}(X_i)$ does not cover the label $Y_i$ if and only if $r(X_i,Y_i)>\widehat{\lambda}_\mathrm{split}$. If the sample $\{(X_i,Y_i)\}^{n+1}_{i=1}$ is iid, then choosing $\widehat{\lambda}_\mathrm{split}$ based on the calibration set corresponds to the success probability of $n$ independent Bernoulli trials $r(X_i,Y_i)>\widehat{\lambda}_\mathrm{split}$ and thus a Binomial distribution. See the proof of \cshref{prop:split_mc_tol} for more details. 

\begin{proposition}\label{prop:tol_reg_split}
    Let $\epsilon, \delta\in(0,1)$ and suppose the sample $\{(X_i,Y_i)\}^{n+1}_{i=1}$ is iid. Then the split conformal prediction set $\mathcal{S}_{\widehat{\lambda}_\mathrm{split}}(X_{n+1})$ (\cref{eq:split_set}) with $\widehat{\lambda}_\mathrm{split}=\widehat{P}(\epsilon,\delta)$ (\cref{eq:P_hat}) is an $(\epsilon,\delta)$-tolerance region, i.e.
    \begin{equation}
        \mathbb{P}\Big[\mathbb{P}\big[Y_{n+1}\in \mathcal{S}_{\widehat{\lambda}_\mathrm{split}}(X_{n+1}) \ \big| \ \{(X_i,Y_i)\}^n_{i=1}\big] \geq 1-\epsilon \Big]\geq 1-\delta.
    \end{equation}
    Moreover, if the non-conformity scores are almost surely distinct, then $\mathcal{S}_{\widehat{\lambda}_\mathrm{split}}(X_{n+1})$ is an $(\epsilon,\delta)$-tolerance region if and only if $\widehat{\lambda}_\mathrm{split}$ is at least $\widehat{P}(\epsilon,\delta)$. 
\end{proposition}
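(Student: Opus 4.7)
The plan is to condition on the calibration set so that $\widehat{\lambda}_{\mathrm{split}}$ becomes a deterministic quantity and the inner probability reduces to a CDF evaluation, and then to reduce everything to a statement about order statistics of uniform random variables.

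More concretely, since the scoring function $r$ is fixed by the proper training set, the scores $r_i := r(X_i,Y_i)$ for $i=1,\dots,n+1$ are iid; let $F$ denote their common CDF. Because $(X_{n+1},Y_{n+1})$ is independent of the calibration set while $\widehat{\lambda}_{\mathrm{split}}=\widehat{P}(\epsilon,\delta)$ is a measurable function of the calibration set, \cref{eq:split_set} gives
\begin{equation*}
\mathbb{P}\bigl[Y_{n+1}\in \mathcal{S}_{\widehat{\lambda}_{\mathrm{split}}}(X_{n+1}) \bigm| \{(X_i,Y_i)\}^n_{i=1}\bigr] \;=\; \mathbb{P}\bigl[r_{n+1}\leq \widehat{\lambda}_{\mathrm{split}} \bigm| \{(X_i,Y_i)\}^n_{i=1}\bigr] \;=\; F(\widehat{\lambda}_{\mathrm{split}}).
\end{equation*}
So the required inequality becomes $\mathbb{P}\bigl[F(r_{(n-k^*)})\geq 1-\epsilon\bigr]\geq 1-\delta$, where $k^*:=\sup\{k:\mathrm{Bin}(k;n,\epsilon)\leq\delta\}$ and $r_{(n-k^*)}$ is the $(n-k^*)$-th order statistic of $\{r_i\}_{i=1}^n$, as the excerpt already notes.

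Next I would invoke the probability integral transform. Write $R_i=F^{-1}(U_i)$ with $U_i$ iid uniform on $(0,1)$; monotonicity of $F^{-1}$ preserves order, so $r_{(j)}\stackrel{d}{=}F^{-1}(U_{(j)})$, and since $F(F^{-1}(u))\geq u$ we obtain $F(r_{(n-k^*)})\geq U_{(n-k^*)}$ almost surely in this coupling. It therefore suffices to show $\mathbb{P}[U_{(n-k^*)}\geq 1-\epsilon]\geq 1-\delta$. I would then apply the classical identity for uniform order statistics, namely
\begin{equation*}
\mathbb{P}[U_{(n-k^*)}\leq 1-\epsilon] \;=\; \sum_{i=n-k^*}^{n}\binom{n}{i}(1-\epsilon)^i \epsilon^{\,n-i} \;=\; \sum_{j=0}^{k^*}\binom{n}{j}\epsilon^{j}(1-\epsilon)^{n-j} \;=\; \mathrm{Bin}(k^*;n,\epsilon),
\end{equation*}
obtained by reindexing $j=n-i$ and recognising the Binomial cumulative mass as counting exceedances. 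The definition of $k^*$ as a supremum gives $\mathrm{Bin}(k^*;n,\epsilon)\leq\delta$, which delivers the desired bound. This computation is in essence the Wilks/Scheff\'e tolerance-region argument transplanted into the nested-set language.

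For the converse under almost-surely distinct scores, continuity of $F$ at the score values upgrades the coupling inequality to an equality in distribution, so that $F(r_{(j)})\stackrel{d}{=}U_{(j)}$ for each $j$. If $\widehat{\lambda}_{\mathrm{split}}<\widehat{P}(\epsilon,\delta)$, then by distinctness $\widehat{\lambda}_{\mathrm{split}}\leq r_{(n-k^*-1)}$ on a set of full measure, and repeating the above identity with $k^*$ replaced by $k^*+1$ gives $\mathbb{P}[F(\widehat{\lambda}_{\mathrm{split}})\geq 1-\epsilon]\leq 1-\mathrm{Bin}(k^*+1;n,\epsilon)<1-\delta$, where the strict inequality is precisely the supremum condition that excludes $k^*+1$. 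Conversely, monotonicity of $F$ ensures that enlarging $\widehat{\lambda}_{\mathrm{split}}$ beyond $\widehat{P}(\epsilon,\delta)$ can only increase the conditional coverage.

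The main obstacle I anticipate is bookkeeping around ties and the discreteness of $F$: the coupling $R_i=F^{-1}(U_i)$ gives stochastic dominance rather than equality, which is exactly what makes the tolerance guarantee conservative in general and sharp under distinct scores. Keeping track of strict versus non-strict inequalities in the Binomial identity, and in the off-by-one relationship between ``$(n-k^*)$-th order statistic'' and ``$k^*$ exceedances'', is where most of the room for error lies; everything else is a clean application of the probability integral transform and a counting identity.
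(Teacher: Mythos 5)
Your proof is correct, but it takes a genuinely different route from the paper's own. The paper proves this proposition as a corollary of its \cshref{prop:split_mc_tol}: it observes that $\widehat{P}(\epsilon,\delta)$ coincides with $\widehat{Q}(\alpha)$ for all $\alpha$ in the interval $\big[\tfrac{k^*+1}{n+1},\tfrac{k^*+2}{n+1}\big)$ with $k^*=\sup\{k:\mathrm{Bin}(k;n,\epsilon)\leq\delta\}$, and then checks that such an $\alpha$ satisfies the condition $\delta\geq\mathrm{Bin}(\lfloor\alpha(n+1)-1\rfloor;n,\epsilon)$ tautologically; the underlying engine (\cref{app:prop_new}) is a Vovk-style counting argument about the number of calibration scores exceeding a data-dependent threshold $r^*$, together with monotonicity of the Binomial CDF in its success probability. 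You instead prove the result directly via the probability integral transform: you couple the scores to uniforms through the generalized inverse $F^{-1}$, exploit $F(F^{-1}(u))\geq u$ to reduce to a uniform order statistic, and use the identity $\mathbb{P}[U_{(n-k^*)}\leq 1-\epsilon]=\mathrm{Bin}(k^*;n,\epsilon)$. This is exactly the Wilks/Scheff\'e argument that the paper isolates separately as \cshref{prop:wilks_cov} and \cshref{prop:dist_exp_cov}, and which it notes in \cref{sec:dis_cov_exp} gives an ``alternative, although more elaborate, proof technique.'' Your route is more self-contained and exposes the order-statistic structure more transparently; the paper's route buys the stronger duality of \cshref{prop:split_mc_tol} essentially for free before specializing. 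One small caveat in your converse: the step ``$\widehat{\lambda}_{\mathrm{split}}<\widehat{P}(\epsilon,\delta)$ implies $\widehat{\lambda}_{\mathrm{split}}\leq r_{(n-k^*-1)}$'' only holds if $\widehat{\lambda}_{\mathrm{split}}$ is constrained to be one of the order statistics $r_{(m)}$, which is how the paper implicitly interprets the ``if and only if'' (it reduces to the condition on $\alpha$ in $\widehat{Q}(\alpha)$); for an arbitrary measurable $\widehat{\lambda}_{\mathrm{split}}$ strictly between two adjacent order statistics the implication fails, so you should state that restriction explicitly.
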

\begin{proof}
    See \cref{app:prop2}.
\end{proof}

As mentioned in \cref{sec:delta_val}, tolerance regions and marginal coverage are closely related. In particular for split conformal prediction, a prediction set that satisfies marginal coverage at level $\alpha$ automatically implies an $(\epsilon,\delta)$-tolerance region for certain $\epsilon,\delta$. Conversely, an $(\epsilon,\delta)$-tolerance region automatically implies marginal coverage at a certain significance level $\alpha$. This is summarized in the proposition below, providing an extension on \cite[Proposition 2a-2b]{vovk2012conditional}, which only covers the former relation.

\begin{proposition}\label{prop:split_mc_tol}
    Let $\alpha,\epsilon, \delta\in(0,1)$ and suppose the sample $\{(X_i,Y_i)\}^{n+1}_{i=1}$ is iid.
    \begin{itemize}
        \item[(i)] The split conformal prediction set $\mathcal{S}_{\widehat{\lambda}_\mathrm{split}}(X_{n+1})$ (\cref{eq:split_set}) with $\widehat{\lambda}_\mathrm{split}=\widehat{Q}(\alpha)$ (\cref{eq:emp_quan}) is an $(\epsilon,\delta)$-tolerance region if
    \begin{equation}\label{eq:delta_cond}
        \delta \geq \mathrm{Bin}(\lfloor \alpha(n+1)-1\rfloor; n,\epsilon),
    \end{equation}
    or equivalently
    \begin{equation}\label{eq:eps_cond}
        \epsilon \geq \inf\big\{p: \mathrm{Bin}(\lfloor \alpha(n+1)-1\rfloor; n, p)\leq \delta\big\}.
    \end{equation}
    Moreover, if the non-conformity scores are almost surely distinct, then the above becomes an `if and only if' statement.
    \item[(ii)] The split conformal prediction set $\mathcal{S}_{\widehat{\lambda}_\mathrm{split}}(X_{n+1})$ (\cref{eq:split_set}) with $\widehat{\lambda}_\mathrm{split}=\widehat{P}(\epsilon,\delta)$ (\cref{eq:P_hat}) satisfies marginal coverage at confidence level equal to
    \begin{equation}
        \mathbb{P}[Y_{n+1}\in \mathcal{S}_{\widehat{\lambda}_\mathrm{split}}(X_{n+1})]\geq1-\frac{1}{n+1}\big(\sup\big\{k: \mathrm{Bin}(k; n, \epsilon)\leq\delta\big\}+1\big)
    \end{equation}
    Moreover, if the non-conformity scores are almost surely distinct, then the split conformal prediction set is nearly perfectly calibrated, i.e.
    \begin{equation}
        \mathbb{P}[Y_{n+1}\in \mathcal{S}_{\widehat{\lambda}_\mathrm{split}}(X_{n+1})]\leq1-\frac{1}{n+1}\big(\sup\big\{k: \mathrm{Bin}(k; n, \epsilon)\leq\delta\big\}\big).
    \end{equation}
    \end{itemize}
\end{proposition}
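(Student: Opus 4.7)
The plan is to reduce both parts to order-statistic comparisons and then apply the machinery already developed in \cref{prop:marg_cov_split} and \cref{prop:tol_reg_split}. Rewriting $\widehat{Q}(\alpha)$ and $\widehat{P}(\epsilon,\delta)$ explicitly as order statistics of the calibration scores $\{r_i\}_{i=1}^{n}$, namely $\widehat{Q}(\alpha) = r_{(\lceil(1-\alpha)(n+1)\rceil)}$ and $\widehat{P}(\epsilon,\delta) = r_{(n-K)}$ with $K := \sup\{k : \mathrm{Bin}(k;n,\epsilon) \leq \delta\}$, reduces the argument to a comparison of two integer indices plus one standard exchangeability lemma.

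For part (i), I would first observe that, by nesting, $\widehat{Q}(\alpha) \geq \widehat{P}(\epsilon,\delta)$ implies $\mathcal{S}_{\widehat{Q}(\alpha)}(X_{n+1}) \supseteq \mathcal{S}_{\widehat{P}(\epsilon,\delta)}(X_{n+1})$, and the right-hand side is already an $(\epsilon,\delta)$-tolerance region by \cref{prop:tol_reg_split}; hence so is the left-hand side. The index comparison $\lceil(1-\alpha)(n+1)\rceil \geq n-K$ rearranges, via the identities $\lceil(1-\alpha)(n+1)\rceil = n+1 - \lfloor\alpha(n+1)\rfloor$ and $\lfloor\alpha(n+1)-1\rfloor = \lfloor\alpha(n+1)\rfloor - 1$, to the equivalent condition $K \geq \lfloor\alpha(n+1)-1\rfloor$. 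Monotonicity of $k \mapsto \mathrm{Bin}(k;n,\epsilon)$ then yields the equivalence with \cref{eq:delta_cond}, and strict monotonicity of $p \mapsto \mathrm{Bin}(k;n,p)$ for $k < n$ yields the equivalence with \cref{eq:eps_cond}. The `if and only if' under distinct scores follows by invoking the `iff' direction of \cref{prop:tol_reg_split} together with the fact that the order statistics are almost surely strictly ordered, so the set comparison is characterised exactly by the index comparison.

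For part (ii), I would apply the rank lemma used in the proof of \cref{prop:marg_cov_split}: under iid sampling the rank of $r_{n+1}$ among $\{r_1,\dots,r_{n+1}\}$ is sub-uniformly distributed on $\{1,\dots,n+1\}$, so $\mathbb{P}[r_{n+1} \leq r_{(j)}] \geq j/(n+1)$, with equality whenever the scores are almost surely distinct. Substituting $j = n-K$ gives $\mathbb{P}[Y_{n+1} \in \mathcal{S}_{\widehat{P}(\epsilon,\delta)}(X_{n+1})] \geq (n-K)/(n+1) = 1 - (K+1)/(n+1)$, which is the stated lower bound after rearranging. Under distinct scores, equality in the rank lemma yields the exact value $1 - (K+1)/(n+1)$, which is bounded above by $1 - K/(n+1)$ with the same $1/(n+1)$ slack as in the near-calibration bound of \cref{prop:marg_cov_split}.

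The only substantive obstacle is the floor/ceiling bookkeeping linking the two quantile definitions and keeping track of the $\pm 1$ shifts between the `$\lceil\cdot\rceil$-from-below' convention of $\widehat{Q}$ and the `$\sup\{k:\dots\}$-from-above' convention of $\widehat{P}$. Edge cases such as $\alpha < 1/(n+1)$, where $\lceil(1-\alpha)(n+1)\rceil$ exceeds $n$ and $\widehat{Q}(\alpha)$ is conventionally set to $\sup\Lambda$, and $K = n$, where the prediction set is all of $\mathcal{Y}$, should be easily handled since the resulting inequalities become trivial.
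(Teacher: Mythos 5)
Your part (ii) argument is correct and is essentially the paper's: the paper notes that $\widehat{P}(\epsilon,\delta)=\widehat{Q}(\alpha)$ when $\alpha$ lies in the interval $\big[\frac{K+1}{n+1},\frac{K+2}{n+1}\big)$ with $K:=\sup\{k:\mathrm{Bin}(k;n,\epsilon)\leq\delta\}$, and then applies \cref{prop:marg_cov_split} at the endpoint $\alpha=\frac{K+1}{n+1}$; your explicit use of the rank-of-$r_{n+1}$ lemma with $j=n-K$ is the same computation unrolled. The floor/ceiling identities you use are also correct: $\lceil(1-\alpha)(n+1)\rceil=n+1-\lfloor\alpha(n+1)\rfloor$ and $\lfloor\alpha(n+1)-1\rfloor=\lfloor\alpha(n+1)\rfloor-1$, so the index comparison $\lceil(1-\alpha)(n+1)\rceil\geq n-K$ is indeed equivalent to $K\geq\lfloor\alpha(n+1)-1\rfloor$, hence to \cref{eq:delta_cond}.

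Part (i), however, has a circularity problem that you need to address. You reduce the claim to \cref{prop:tol_reg_split} ("the right-hand side is already an $(\epsilon,\delta)$-tolerance region by \cref{prop:tol_reg_split}; hence so is the left-hand side") and you invoke the `iff' direction of \cref{prop:tol_reg_split} for the almost-surely-distinct case. But in the paper, \cref{prop:tol_reg_split} is itself proved as a corollary of \cshref{prop:split_mc_tol}: its proof in \cref{app:prop2} substitutes the endpoint $\alpha$ into \cref{eq:delta_cond} and checks it holds tautologically. So the dependency graph is $\cshref{prop:split_mc_tol}\Rightarrow\cref{prop:tol_reg_split}$, and your argument runs it in the opposite direction, giving a cycle. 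Your reduction would be legitimate if you supplied an independent proof of \cref{prop:tol_reg_split} — one can give a direct exact-binomial argument in the spirit of the Bentkus/UCB calculation of \cref{sec:ucb} — but as written no such proof is provided and the standing proof of \cref{prop:tol_reg_split} cannot be reused.

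What you are missing is the paper's direct binomial argument for part (i), adapted from Vovk: define $r^{*}$ as the largest score with $\mathbb{P}[r(X_{n+1},Y_{n+1})>r^{*}\mid\{(X_i,Y_i)\}_{i=1}^{n}]>\epsilon$, observe that miscoverage conditional on the calibration set exceeds $\epsilon$ iff $\widehat{Q}(\alpha)<r^{*}$, which in turn holds iff at most $\lfloor\alpha(n+1)-1\rfloor$ calibration scores satisfy $r_i\geq r^{*}$. Because the scores are iid, this count is Binomial with success probability $\epsilon'' := \mathbb{P}[r(X_{n+1},Y_{n+1})\geq r^{*}\mid\cdot]\geq\epsilon$, and the monotone-in-$p$ property of the Binomial CDF yields $\mathrm{Bin}(\lfloor\alpha(n+1)-1\rfloor;n,\epsilon'')\leq\mathrm{Bin}(\lfloor\alpha(n+1)-1\rfloor;n,\epsilon)$, giving the sufficiency of \cref{eq:delta_cond}; equality $\epsilon''=\epsilon$ under a continuous score distribution upgrades this to `iff'. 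That argument is self-contained and breaks the cycle, and it is what your plan needs in place of the appeal to \cref{prop:tol_reg_split}.
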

\begin{proof}
    See \cref{app:prop_new}.
\end{proof}

\subsection{Conformalized Quantile Regression}\label{sec:cqr}
To illustrate the split conformal prediction algorithm, we provide an example for regression, i.e. $\mathcal{Y}=\mathbb{R}$. Recently, quantile regression \cite{koenker1978regression} has proven to be an efficient choice of base predictor for split conformal prediction in the context of heteroskedastic data, coined conformalized quantile regression (CQR) \cite{romano2019conformalized}. The nested interpretation of CQR given below is inspired by \cite{gupta2022nested}.

First, we consider the base predictor. For a pre-specified $\alpha\in(0,1)$, estimation of the conditional quantile functions $q_{\alpha/2}(\cdot)$ and $q_{1-\alpha/2}(\cdot)$ on the proper training set yields estimates $\hat{q}_{\alpha/2}(\cdot)$ and $\hat{q}_{1-\alpha/2}(\cdot)$ respectively. For simplicity, we assume $\hat{q}_{\alpha/2}(x)\leq \hat{q}_{1-\alpha/2}(x)$ for all $x\in\mathcal{X}$ to avoid the quantile crossing problem \cite{bassett1982empirical}. Under certain conditions, the conditional quantile estimators are known to be consistent \cite{takeuchi2006nonparametric, steinwart2011estimating}, such that the interval $\big[\hat{q}_{\alpha/2}(\cdot), \hat{q}_{1-\alpha/2}(\cdot)\big]$ asymptotically satisfies the desired coverage of $1-\alpha$. However, this is not guaranteed in finite samples. 

To achieve valid coverage in finite-samples, we turn to split conformal prediction. Consider a symmetric increase or reduction around the base predictor to construct the sequence of nested prediction sets $\{\mathcal{S}_\lambda(X_{n+1})\}_{\lambda\in\Lambda}$, i.e. for a specific $\lambda\in\Lambda\subset\mathbb{R}\cup\{\pm\infty\}$ we have
\begin{equation}
    \mathcal{S}_\lambda(x)=\big[\hat{q}_{\alpha/2}(x) - \lambda, \hat{q}_{1-\alpha/2}(x) + \lambda\big].
\end{equation}
Note that larger $\lambda$ lead to larger prediction sets under the assumption made earlier that $\hat{q}_{\alpha/2}(x)\leq \hat{q}_{1-\alpha/2}(x)$ for all $x\in\mathcal{X}$. Now the non-conformity score (\cref{eq:score}) becomes
\begin{equation}\label{eq:cqr_score}
    \begin{split}
        r(x,y)&= \inf\{\lambda\in\mathbb{R}:y\in\mathcal{S}_\lambda(x)\}\\
        &= \inf\{\lambda\in\mathbb{R}:\hat{q}_{\alpha/2}(x) - \lambda\leq y \leq \hat{q}_{1-\alpha/2}(x) + \lambda\}\\
        &= \sup\{\hat{q}_{\alpha/2}(x) - y, y - \hat{q}_{1-\alpha/2}(x)\},
    \end{split}
\end{equation}
i.e. the smallest $\lambda$ such that $y$ is contained in $\mathcal{S}_\lambda(x)$. We can interpret the score function as follows. If $y$ lies outside the interval $\big[\hat{q}_{\alpha/2}(x), \hat{q}_{1-\alpha/2}(x)\big]$, then the score $r(x,y)$ is positive to account for undercoverage. If $y$ lies correctly inside the interval $\big[\hat{q}_{\alpha/2}(x), \hat{q}_{1-\alpha/2}(x)\big]$, then the score $r(x,y)$ is non-positive to account for overcoverage. 

The set of non-conformity scores $\{r_i\}^n_{i=1}$ corresponding to the calibration set satisfies $r_i:= r(X_i,Y_i)=\max\{\hat{q}_{\alpha/2}(X_i) - Y_i, Y_i - \hat{q}_{1-\alpha/2}(X_i)\}$. For a pre-specified significance level $\alpha\in(0,1)$ and $\widehat{\lambda}_\mathrm{CQR}=\widehat{Q}(\alpha)$, the resulting \textit{CQR prediction set} is
\begin{equation}\label{eq:cqr_set}
    \mathcal{S}_{\widehat{\lambda}_\mathrm{split}}(X_{n+1}) = \big[\hat{q}_{\alpha/2}(X_{n+1}) - \widehat{Q}(\alpha), \hat{q}_{1-\alpha/2}(X_{n+1}) + \widehat{Q}(\alpha)\big].
\end{equation}
By \cshref{prop:marg_cov_split}, we know that if the sample $\{(X_i,Y_i)\}^{n+1}_{i=1}$ is iid, then the above CQR prediction set satisfies marginal coverage at level $\alpha$. By \cshref{prop:tol_reg_split}, we know that if we replace $\widehat{Q}(\alpha)$ with $\widehat{P}(\epsilon,\delta)$ in \cref{eq:cqr_set}, then the CQR prediction set is an $(\epsilon,\delta)$-tolerance region. By \cshref{prop:split_mc_tol}, we know that specifying $\alpha$ results in a tolerance region for certain $\epsilon,\delta$ and vice versa, specifying $\epsilon,\delta$ results in marginal coverage for a certain $\alpha$.

\section{Distribution of Coverage}\label{sec:dist_cov}

This section focuses on the probability of coverage of a new label $Y_{n+1}$, conditional on the calibration set, corresponding to a split conformal prediction set $\mathcal{S}_{\widehat{\lambda}_\mathrm{split}}(X_{n+1})$. In particular, the distribution of this random variable and its relation to classical tolerance predictors of the 1940s. This shows an alternative perspective to the results in \cref{sec:conformal}. We would like to point out that \cref{sec:dis_cov_exp} and \cref{sec:dist_cov_fin_sample} are a generalization of the results in \cite[Section 3.1-3.2]{angelopoulos2021gentle}, removing the assumption of almost surely distinct non-conformity scores.

\cref{sec:tol_reg} starts with a historic overview of the rich literature regarding classical tolerance predictors. Then, \cref{sec:wilks} outlines the simplest of these predictors and the analytical distribution of the coverage probability of the resulting prediction set. Subsequently, \cref{sec:dis_cov_exp} connects classical tolerance predictors and the analytical distribution of the coverage probability to split conformal prediction. Finally, \cref{sec:dist_cov_fin_sample} extends on the analytical results in the setting of a finite test set.

\subsection{Classical Tolerance Predictors}\label{sec:tol_reg}

While conformal prediction has been pioneered by Vladimir Vovk and colleagues in roughly the last two decades \cite{vovk2005algorithmic, vovk2009line}, the first important paper on classical tolerance predictors was written by Wilks in 1941 \cite{wilks1941determination} and the theory further extended throughout the 1940s and early 1950s. Wilks constructed classical tolerance predictors for a continuous univariate population, which was extended to the discontinuous case by \cite{scheffe1945non}. We provide more details on these tolerance predictors in \cref{sec:wilks}. More than half a century later, \cite[Appendix A]{vovk2012conditional} showed how to interpret split conformal prediction as a `conditional' version of these classical tolerance predictors. We extend on this in \cref{sec:dis_cov_exp}. 

The early work on classical tolerance predictors was mainly devoted to the construction of `distribution-free' tolerance predictors. Distribution-free in this context means that the distribution of the coverage of the tolerance predictor, conditional on the observed sample, is independent of the distribution of the sampled population. Distribution-free tolerance predictors are based on order statistics, using the key result from \cite{wilks1941determination} that given a sample from a continuous univariate distribution, the coverage of an interval bounded by an order statistic on each side has a distribution dependent only on those particular order statistics and thus independent of the sampled population. We expand on this in \cref{sec:wilks}. 

Wilks' classical tolerance predictors were extended to multivariate populations by Wald in 1943 \cite{wald1943extension}. In turn, Tukey provided an important generalization to tolerance predictors of any general shape in 1947 \cite{tukey1947non}. The assumption of a continuous population was then removed by \cite{tukey1948nonparametric, fraser1951nonparametric} and the setting of distribution-free tolerance predictors further extended by \cite{fraser1951sequentially, fraser1953nonparametric, kemperman1956generalized}. Interestingly, \cite[p. 257]{vovk2005algorithmic} showed that Tukey's classical tolerance predictors can be interpreted as a special case of \textit{full} conformal prediction \cite{vovk2005algorithmic, vovk2009line} (not \textit{split} conformal prediction).

\subsection{Simple Tolerance Predictors of Order Statistics}\label{sec:wilks}

To introduce some intuition behind classical tolerance predictors, we outline Wilks' classical tolerance predictor \cite{wilks1941determination} for a continuous univariate population on $\mathbb{R}$ here in full detail. This special case is contained in the more general result including discontinuous univariate populations in \cite{scheffe1945non}. \cshref{prop:wilks_cov} shows the analytical distribution of the probability of coverage of a tolerance region based on order statistics for a univariate population, given the observed sample. This includes the discontinuous case, but we defer the proof to \cite[p. 192]{scheffe1945non}.

Let the feature space $\mathcal{X}$ be a one-element set and suppose a sample $\{Y_i\}^n_{i=1}$ of $n$ independent labels $Y_i\in\mathcal{Y}$ is taken from a continuous univariate distribution $F_Y$. Ordering the $Y_i$ in ascending order yields a sample of order statistics $\{Y_{(i)}\}^n_{i=1}$ such that $Y_{(1)}<\dots<Y_{(n)}$. In addition, define $Y_{(0)}=-\infty$ and $Y_{(n+1)}:=+\infty$. Note that the $Y_{(i)}$ are almost surely distinct since we are sampling from a continuous population and thus $\mathbb{P}\big[Y_{(i)}=Y_{(j)}\big]=0$ for all $i,j$.

Wilks' procedure simply consists of constructing the prediction interval $Y_{(r)}\leq Y_{n+1}\leq Y_{(s)}$ for some $0\leq r<s\leq n+1$ based on the sample of order statistics, while choosing $r$ and $s$ such that $Y_{(r)}\leq Y_{n+1}\leq Y_{(s)}$ is an $(\epsilon,\delta)$-tolerance region. For simplicity, we exclude the case that both $r=0$ and $s=n+1$, since a new label $Y_{n+1}$ is contained in the interval $\big[Y_{(0)},Y_{(n+1)}\big]=\big(-\infty,+\infty\big)$ almost surely and the prediction interval is non-informative. 

Let $\mathrm{Beta}(p; m, k)$ denote the regularized incomplete Beta function with $p\in(0,1)$ and parameters $m,k>0$. The prediction interval $Y_{(r)}\leq Y_{n+1}\leq Y_{(s)}$ is an $(\epsilon,\delta)$-tolerance region if and only if $r$ and $s$ are chosen such that
\begin{equation}
    \mathrm{Beta}(1-\epsilon; s-r, n-s+r+1)\leq\delta.
\end{equation}
This follows directly from the following result on the distribution of the coverage probability, conditional on the calibration set, showing that it is independent of the distribution of the sampled population and that it equals a random variable stochastically dominating a Beta distribution, with equality if the $Y_i$ are almost surely distinct. 

\begin{proposition}[Distribution of Coverage for Univariate Populations]\label{prop:wilks_cov}
    Let $\{Y_{i}\}^{n+1}_{i=1}$ be iid from a univariate population. Then for $0\leq r<s\leq n+1$, the probability of coverage of the prediction interval $Y_{(r)}\leq Y_{n+1}\leq Y_{(s)}$, conditional on the observed sample, satisfies 
    \begin{equation}\label{eq:wilks_cov_disc}
        \mathbb{P}\big[Y_{(r)}\leq Y_{n+1}\leq Y_{(s)} \ \big|\  \{Y_{i}\}^n_{i=1}\big] \geq Z,
    \end{equation}
    where 
    \begin{equation}
        Z \sim \mathrm{Beta}(s-r, n-s+r+1),
    \end{equation}
    with equality if the $Y_i$ are almost surely distinct.
\end{proposition}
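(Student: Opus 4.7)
The plan is to reduce the problem to the well-understood case of uniform order statistic spacings via a randomized probability integral transform (PIT). First, since $Y_{n+1}$ is independent of $\{Y_i\}^n_{i=1}$ and has CDF $F_Y$, the conditional probability equals
\begin{equation*}
    C := F_Y(Y_{(s)}) - F_Y(Y_{(r)}^-),
\end{equation*}
where $F_Y(y^-) := \lim_{t \uparrow y} F_Y(t)$, adopting the conventions $F_Y(Y_{(0)}^-) = 0$ and $F_Y(Y_{(n+1)}) = 1$ to handle the boundary cases $r=0$ and $s=n+1$.

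Next I would introduce an auxiliary iid Uniform$(0,1)$ sample $U_1, \ldots, U_n$, independent of $\{Y_i\}^n_{i=1}$, and define
\begin{equation*}
    V_i := F_Y(Y_i^-) + U_i\bigl(F_Y(Y_i) - F_Y(Y_i^-)\bigr) \in \bigl[F_Y(Y_i^-), F_Y(Y_i)\bigr].
\end{equation*}
By the randomized PIT, the $V_i$ are iid Uniform$(0,1)$. Moreover, if $Y_i < Y_j$ then $V_i < V_j$ almost surely, so sorting the $V_i$ induces a valid ordering of the $Y_i$ with ties broken by the $U_i$; letting $Y_{(i)}$ be the element of the sample corresponding to $V_{(i)}$, one has $V_{(i)} \in [F_Y(Y_{(i)}^-), F_Y(Y_{(i)})]$. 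Therefore
\begin{equation*}
    C = F_Y(Y_{(s)}) - F_Y(Y_{(r)}^-) \geq V_{(s)} - V_{(r)},
\end{equation*}
with the natural conventions $V_{(0)} := 0$ and $V_{(n+1)} := 1$.

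To finish, I invoke the classical fact that the spacing $V_{(s)} - V_{(r)}$ of uniform order statistics is distributed as $\mathrm{Beta}(s-r, n-s+r+1)$, which yields the stochastic dominance claimed. When the $Y_i$ are almost surely distinct, $F_Y(Y_i^-) = F_Y(Y_i)$ almost surely, so $V_i = F_Y(Y_i)$ and the displayed inequality becomes an almost-sure equality, giving $C \overset{d}{=} \mathrm{Beta}(s-r, n-s+r+1)$. The main obstacle is handling atoms of $F_Y$ cleanly: without continuity, neither $F_Y(Y_i)$ nor $F_Y(Y_i^-)$ is itself uniform, and ties among the $Y_i$ leave the choice of $Y_{(i)}$ ambiguous; the randomized PIT resolves both issues simultaneously by smoothing each atom into a uniform slice of mass.
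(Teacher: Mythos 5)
Your proof is correct and takes a genuinely different route from the paper. The paper's own argument handles only the continuous case directly: it applies the plain probability integral transform, writes down the joint density of $(U_{(r)},U_{(s)})$ for uniform order statistics, and integrates out one variable to obtain the marginal $\mathrm{Beta}(s-r,n-s+r+1)$ density of the spacing; for the discontinuous case it simply defers to Scheff\'e and Tukey's 1945 paper, quoting their sandwich inequality without proof. You instead use the \emph{randomized} (distributional) probability integral transform $V_i := F_Y(Y_i^-) + U_i\bigl(F_Y(Y_i)-F_Y(Y_i^-)\bigr)$, which treats the continuous and discontinuous cases in a single stroke. The key observations — that the $V_i$ are iid uniform, that sorting the $V_i$ induces a total order on the $Y_i$ consistent with their natural weak order (since $V_i < V_j$ forces $Y_i \le Y_j$), and that $V_{(i)} \in [F_Y(Y_{(i)}^-), F_Y(Y_{(i)})]$ by construction — yield a pointwise coupling $C := F_Y(Y_{(s)}) - F_Y(Y_{(r)}^-) \ge V_{(s)} - V_{(r)}$, which is strictly stronger than the stochastic dominance asserted in the proposition. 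When $F_Y$ is continuous, $V_i = F_Y(Y_i)$ almost surely and the coupling collapses to an equality, recovering the exact Beta law. Your approach buys a self-contained, unified treatment of the discontinuous case via an explicit coupling, at the cost of invoking the Beta law of uniform order-statistic spacings as a black box rather than deriving it as the paper does (though the paper also cites the joint density of uniform order statistics rather than deriving it from scratch, so the two proofs are comparable in the external facts they rely on).
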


\begin{proof}
    See \cref{app:prop3}.
\end{proof}

\subsection{Distribution of Coverage for Split Conformal Prediction}\label{sec:dis_cov_exp}

Now we relate the analytical distribution of coverage for a univariate population to split conformal prediction, leveraging the idea mentioned in \cite[Appendix A]{vovk2012conditional} to interpret split conformal prediction as a `conditional' version of Wilks' classical tolerance predictor for a univariate population. Recall the set of non-conformity scores corresponding to the calibration set $\{r_i\}^n_{i=1}$, where $r_i=r(X_i,Y_i)$. If the sample $\{(X_i,Y_i)\}^{n+1}_{i=1}$ is iid, then the non-conformity scores can be interpreted as an iid sample from a univariate population and we can formulate a prediction set based on order statistics similar to \cref{sec:wilks}. 

The set of order statistics corresponding to the non-conformity scores is $\{r_{(i)}\}^n_{i=1}$ such that $r_{(1)}\leq\dots\leq r_{(n)}$. We further define $r_{(0)}:=-\infty$ and $r_{(n+1)}:=+\infty$. Now consider the prediction interval $\big[r_{(0)}, r_{(\lceil (1-\alpha)(n+1)\rceil)}\big]=\big(-\infty,r_{(\lceil (1-\alpha)(n+1)\rceil)}\big]$ and recall that $\widehat{Q}(\alpha)=r_{(\lceil (1-\alpha)(n+1)\rceil)}$. Then we can write the split conformal prediction set $\mathcal{S}_{\widehat{\lambda}_\mathrm{split}}(X_{n+1})$ (\cref{eq:split_set}) with $\widehat{\lambda}_\mathrm{split}=\widehat{Q}(\alpha)$ (\cref{eq:emp_quan}) equivalently as 
\begin{equation}\label{eq:split_set_score}
    \mathcal{S}_{\widehat{\lambda}_\mathrm{split}}(X_{n+1}) =\big\{y\in\mathcal{Y}:-\infty<r(X_{n+1},y)\leq r_{(\lceil (1-\alpha)(n+1)\rceil)}\big\}.
\end{equation}
From \cshref{prop:wilks_cov} we know that the probability of coverage of the prediction interval $-\infty<r(X_{n+1},Y_{n+1})\leq r_{(\lceil (1-\alpha)(n+1)\rceil)}$, conditional on the observed scores $\{r_i\}^n_{i=1}$, is a random variable stochastically dominating a Beta distribution, with equality if the non-conformity scores are almost surely distinct. This gives the following result for the distribution of the probability of coverage, conditional on the calibration set, corresponding to the split conformal prediction set $\mathcal{S}_{\widehat{\lambda}_\mathrm{split}}(X_{n+1})$.

\begin{proposition}[Distribution of Coverage]\label{prop:dist_exp_cov}
    Let $\alpha\in(0,1)$ and suppose the sample $\{(X_i,Y_i)\}^{n+1}_{i=1}$ is iid. Then the probability of coverage of the split conformal prediction set $\mathcal{S}_{\widehat{\lambda}_\mathrm{split}}(X_{n+1})$ (\cref{eq:split_set}) with $\widehat{\lambda}_\mathrm{split}=\widehat{Q}(\alpha)$ (\cref{eq:emp_quan}), conditional on the calibration set, satisfies
    \begin{equation}\label{eq:dist_cov_exp_nonc}
        \mathbb{P}\big[Y_{n+1}\in \mathcal{S}_{\widehat{\lambda}_\mathrm{split}}(X_{n+1}) \ \big| \ \{(X_i,Y_i)\}^n_{i=1}\big]\geq Z,
    \end{equation}
    where
    \begin{equation}
        Z\sim\mathrm{Beta}(\lceil (1-\alpha)(n+1)\rceil,\lfloor \alpha(n+1)\rfloor),
    \end{equation}
    with equality if the non-conformity scores are almost surely distinct.

\end{proposition}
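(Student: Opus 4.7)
The plan is to reduce \cref{prop:dist_exp_cov} directly to the univariate order-statistic result of \cref{prop:wilks_cov} applied to the calibration scores. Since we condition implicitly on the proper training set throughout, the score function $r:\mathcal{X}\times\mathcal{Y}\rightarrow\mathbb{R}$ is deterministic, so if $\{(X_i,Y_i)\}_{i=1}^{n+1}$ is iid then $r_i:=r(X_i,Y_i)$ is an iid univariate sample for $i=1,\dots,n+1$. Moreover, the event $\{Y_{n+1}\in\mathcal{S}_{\widehat{\lambda}_{\mathrm{split}}}(X_{n+1})\}$ depends on the calibration set only through the scores $\{r_i\}_{i=1}^n$ (since $\widehat{Q}(\alpha)$ is a function of them), so the tower property lets me replace conditioning on $\{(X_i,Y_i)\}_{i=1}^n$ with conditioning on $\{r_i\}_{i=1}^n$ without changing the conditional probability.

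Next I would rewrite the coverage event in the form of \cref{prop:wilks_cov}. By \cref{eq:split_set_score}, $Y_{n+1}\in\mathcal{S}_{\widehat{\lambda}_{\mathrm{split}}}(X_{n+1})$ is equivalent to $r_{n+1}\leq r_{(s)}$ with $s:=\lceil(1-\alpha)(n+1)\rceil$. Since $r_{(0)}:=-\infty$ lower-bounds $r_{n+1}$ trivially, this event coincides with $r_{(0)}\leq r_{n+1}\leq r_{(s)}$, which is exactly the Wilks-type interval with indices $(r,s)=(0,\lceil(1-\alpha)(n+1)\rceil)$. Applying \cref{prop:wilks_cov} to the iid univariate sample $\{r_i\}_{i=1}^{n+1}$ then yields
\begin{equation*}
\mathbb{P}\bigl[Y_{n+1}\in\mathcal{S}_{\widehat{\lambda}_{\mathrm{split}}}(X_{n+1})\ \bigl|\ \{r_i\}_{i=1}^n\bigr]\ \geq\ Z,\qquad Z\sim\mathrm{Beta}(s,\,n-s+1),
\end{equation*}
with equality when the $r_i$ are almost surely distinct, which is exactly the "distinct scores" clause in the target statement.

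The remaining step is purely combinatorial bookkeeping to match parameters. The first parameter is $s=\lceil(1-\alpha)(n+1)\rceil$ by definition, so it only remains to show $n-s+1=\lfloor\alpha(n+1)\rfloor$. Using the identity $\lceil x\rceil=-\lfloor-x\rfloor$, I compute
\begin{equation*}
n-s+1=(n+1)-\lceil(1-\alpha)(n+1)\rceil=\lfloor(n+1)-(1-\alpha)(n+1)\rfloor=\lfloor\alpha(n+1)\rfloor,
\end{equation*}
completing the identification. I do not anticipate any real obstacle: the probabilistic content is entirely delegated to \cref{prop:wilks_cov}, and the only care required is (i) the justification that conditioning on the calibration set reduces to conditioning on the scores, and (ii) the index-arithmetic identity above relating the Wilks parameters to the ceiling/floor formulation used here.
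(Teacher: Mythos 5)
Your proof is correct and follows essentially the same route as the paper's: reduce to the order-statistic result (\cref{prop:wilks_cov}) applied to the iid non-conformity scores, identify the coverage event with the Wilks interval $(r,s)=(0,\lceil(1-\alpha)(n+1)\rceil)$, and verify the Beta parameters via the floor/ceiling identity. You additionally make explicit the point that conditioning on $\{(X_i,Y_i)\}_{i=1}^n$ reduces to conditioning on $\{r_i\}_{i=1}^n$, which the paper leaves implicit, but the argument is otherwise the same.
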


\begin{proof}
    See \cref{app:prop4}.
\end{proof}
\begin{remark}\label{rem:beta_P_hat}
    If instead $\widehat{\lambda}_\mathrm{split}=\widehat{P}(\epsilon,\delta)$ (\cref{eq:P_hat}), then under the same assumptions as \cshref{prop:dist_exp_cov}, the split conformal prediction set  $\mathcal{S}_{\widehat{\lambda}_\mathrm{split}}(X_{n+1})$ satisfies \cref{eq:dist_cov_exp_nonc} where
    \begin{equation}
        Z\sim\mathrm{Beta}\Big(n-\sup\big\{k: \mathrm{Bin}(k; n, \epsilon)\leq\delta\big\}, \sup\big\{k: \mathrm{Bin}(k; n, \epsilon)\leq\delta\big\}+1\Big).
    \end{equation}
    This becomes apparent by observing that $\widehat{P}(\epsilon,\delta)$ equals $\widehat{Q}(\alpha)$ for all $\alpha$ in the interval 
    \begin{equation}
        \frac{1}{n+1}\big(\sup\big\{k: \mathrm{Bin}(k; n, \epsilon)\leq\delta\big\}+1\big)\leq\alpha< \frac{1}{n+1}\big(\sup\big\{k: \mathrm{Bin}(k; n, \epsilon)\leq\delta\big\}+2\big),
    \end{equation}
    and thus in particular the smallest value in this interval. Note that the strict inequality follows from the ceiling function in $\widehat{Q}(\alpha)$.
\end{remark}

An interesting property of \cshref{prop:dist_exp_cov} is that it contains \cshref{prop:marg_cov_split}, \cshref{prop:tol_reg_split} and \cshref{prop:split_mc_tol}, providing an alternative, although more elaborate, proof technique using results from classical tolerance predictors. For completeness, we prove all three earlier results using conventional arguments in respectively \cref{app:prop1}, \cref{app:prop2} and \cref{app:prop_new}. Below, we show \cshref{prop:dist_exp_cov} contains \cshref{prop:marg_cov_split} and part \textit{(i)} of \cshref{prop:split_mc_tol}. Similar arguments apply to derive \cshref{prop:tol_reg_split} and part \textit{(ii)} of \cshref{prop:split_mc_tol} by first considering \cshref{rem:beta_P_hat}.

First, we show \cshref{prop:dist_exp_cov} contains \cshref{prop:marg_cov_split}. Note that the mean of $Z\sim\mathrm{Beta}(\lceil (1-\alpha)(n+1)\rceil,\lfloor \alpha(n+1)\rfloor)$ for a pre-specified $\alpha\in(0,1)$ satisfies
\begin{equation}\label{eq:beta_mean}
    1-\alpha\leq \mathbb{E}\big[Z\big]=1-\frac{\lfloor \alpha(n+1)\rfloor}{n+1}\leq 1-\alpha+\frac{1}{n+1}.
\end{equation}
Taking expectations in \cref{eq:dist_cov_exp_nonc} yields
\begin{equation}
    \mathbb{E}\Big[\mathbb{P}\big[Y_{n+1}\in \mathcal{S}_{\widehat{\lambda}_\mathrm{split}}(X_{n+1}) \ \big| \ \{(X_i,Y_i)\}^n_{i=1}\big]\Big]\geq \mathbb{E}\big[Z\big]\geq 1-\alpha,
\end{equation}
i.e. the marginal coverage of $\mathcal{S}_{\widehat{\lambda}_\mathrm{split}}(X_{n+1})$ is at least $1-\alpha$, obtaining the first part of \cshref{prop:marg_cov_split}. If the non-conformity scores are almost surely distinct, then $\mathcal{S}_{\widehat{\lambda}_\mathrm{split}}(X_{n+1})$ has marginal coverage satisfying the bounds in \cref{eq:beta_mean}, thus obtaining the second part of \cshref{prop:marg_cov_split}.

Second, we show \cshref{prop:dist_exp_cov} contains part \textit{(i)} of \cshref{prop:split_mc_tol}. From \cshref{prop:dist_exp_cov} directly follows that $\mathcal{S}_{\widehat{\lambda}_\mathrm{split}}(X_{n+1})$ with $\widehat{\lambda}_\mathrm{split}=\widehat{Q}(\alpha)$ is an $(\epsilon,\delta)$-tolerance region if
\begin{equation}\label{eq:tol_reg_beta}
    \mathrm{Beta}(1-\epsilon; \lceil (1-\alpha)(n+1)\rceil,\lfloor \alpha(n+1)\rfloor)\leq\delta,
\end{equation}
where we remind the reader that $\mathrm{Beta}(p; m, k)$ denotes the regularized incomplete Beta function with $p\in(0,1)$ and parameters $m,k>0$. This is equivalent to the condition on $\delta$ in part \textit{(i)} in \cshref{prop:split_mc_tol} by considering the well-known result from probability theory that for all $k, m\in\mathbb{N}$ where $k\leq m$, and all $p\in(0,1)$,
\begin{equation}
    \mathrm{Beta}(1-p;m+1-k,k)  = \mathrm{Bin}(k-1;m,p),
\end{equation}
referring to \cite[p. 22]{vovk2012conditional} for a simple proof. If the non-conformity scores are almost surely distinct, then \cshref{prop:dist_exp_cov} implies that $\mathcal{S}_{\widehat{\lambda}_\mathrm{split}}(X_{n+1})$ is an $(\epsilon,\delta)$-tolerance region if and only if \cref{eq:tol_reg_beta} holds, thus obtaining the last part of part \textit{(i)} of \cshref{prop:split_mc_tol}.

\subsection{Distribution of Empirical Coverage}\label{sec:dist_cov_fin_sample}

In practice, we use a finite test set to estimate $\mathbb{P}\big[Y_{n+1}\in \mathcal{S}_{\widehat{\lambda}_\mathrm{split}}(X_{n+1}) \ \big| \ \{(X_i,Y_i)\}^n_{i=1}\big]$, the probability of coverage of a split conformal prediction set, conditional on the calibration set. Say we have access to a finite iid test set $\{(X_i,Y_i)\}^{n+n_\mathrm{test}}_{i=n+1}$ of size $n_\mathrm{test}$. Consider $R$ random splits of the available $n+n_\mathrm{test}$ datapoints $\{(X_i,Y_i)\}^{n+n_\mathrm{test}}_{i=1}$ into a calibration and test set. For $j=1,\dots,R$, this yields a calibration set $\{(X_{i,j},Y_{i,j})\}^n_{i=1}$ and a test set $\{(X_{i,j},Y_{i,j})\}^{n+n_\mathrm{test}}_{i=n+1}$. Choosing $\widehat{\lambda}_{\mathrm{split}, j}$ based on the calibration set, we subsequently calculate the empirical coverage on the test set $C_j$ according to
\begin{equation}
    C_j:=\frac{1}{n_\mathrm{test}}\sum_{i=n+1}^{n+n_\mathrm{test}}\mathbbm{1}\big\{Y_{i,j}\in \mathcal{S}_{\widehat{\lambda}_{\mathrm{split}, j}}(X_{i,j}) \big\}, \quad j=1,\dots,R.
\end{equation}

Given the calibration set and iid data, $C_j$ is a consistent estimator for the probability of coverage, conditional on the calibration set, by the Law of Large Numbers. Its distribution is Binomial since $C_j$ is the average of indicator functions. Unconditional on the calibration set, the mean of the Binomial distribution is a random variable, following a Beta distribution by \cshref{prop:dist_exp_cov}. Therefore, the empirical coverage $C_j$ is a Beta-Binomial random variable, as summarized in \cshref{prop:dist_avg_cov}. As a general rule of thumb, a small number of test points $n_\mathrm{test}$ increases the variance of the empirical coverage $C_j$ and as such, the variance of $C_j$ will be larger than the Beta distribution specified in \cshref{prop:dist_exp_cov}. 

\begin{proposition}[Distribution of Empirical Coverage]\label{prop:dist_avg_cov}
    Let $\alpha\in(0,1)$, $j=1,\dots,R$ and suppose the sample $\{(X_i,Y_i)\}_{i=1}^{n+n_\mathrm{test}}$ is iid. Then the empirical coverage $C_j$ of the split conformal prediction set $\mathcal{S}_{\widehat{\lambda}_{\mathrm{split}, j}}(\cdot)$ (\cref{eq:split_set}) with $\widehat{\lambda}_{\mathrm{split}, j}=\widehat{Q}_j(\alpha)$ (\cref{eq:emp_quan}) satisfies
    \begin{equation}
        C_j:=\frac{1}{n_\mathrm{test}}\sum_{i=n+1}^{n+n_\mathrm{test}}\mathbbm{1}\big\{Y_{i,j}\in \mathcal{S}_{\widehat{\lambda}_{\mathrm{split}, j}}(X_{i,j}) \big\}\sim \frac{1}{n_\mathrm{test}}\mathrm{Bin}(n_\mathrm{test}, \mu),
    \end{equation}
    where
    \begin{equation}
        \mu\geq Z \quad \mathrm{and} \quad Z\sim\mathrm{Beta}(\lceil (1-\alpha)(n+1)\rceil,\lfloor \alpha(n+1)\rfloor),
    \end{equation}
    with equality if the non-conformity scores are almost surely distinct.

\end{proposition}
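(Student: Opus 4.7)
The plan is to condition on the calibration set $\{(X_{i,j},Y_{i,j})\}_{i=1}^n$ and exploit the fact that, conditionally, the map $\mathcal{S}_{\widehat{\lambda}_{\mathrm{split},j}}(\cdot)$ is a fixed (non-random) set-valued function of its argument, since $\widehat{\lambda}_{\mathrm{split},j} = \widehat{Q}_j(\alpha)$ is a deterministic function of the calibration data. Define
\begin{equation*}
    \mu := \mathbb{P}\big[Y_{n+1,j}\in \mathcal{S}_{\widehat{\lambda}_{\mathrm{split},j}}(X_{n+1,j}) \ \big| \ \{(X_{i,j},Y_{i,j})\}^n_{i=1}\big],
\end{equation*}
which is a random variable depending only on the calibration set.

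First, I would argue that the $n_{\mathrm{test}}$ test indicators are conditionally iid Bernoulli$(\mu)$. By the iid assumption on $\{(X_{i,j},Y_{i,j})\}_{i=1}^{n+n_{\mathrm{test}}}$, the test points $\{(X_{i,j},Y_{i,j})\}_{i=n+1}^{n+n_{\mathrm{test}}}$ are iid and independent of the calibration set. Hence, conditional on the calibration set, the indicators $\mathbbm{1}\{Y_{i,j}\in\mathcal{S}_{\widehat{\lambda}_{\mathrm{split},j}}(X_{i,j})\}$ for $i=n+1,\dots,n+n_{\mathrm{test}}$ are iid Bernoulli with common success probability $\mu$. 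Their sum is therefore $\mathrm{Bin}(n_{\mathrm{test}}, \mu)$ conditional on the calibration set, giving
\begin{equation*}
    n_{\mathrm{test}} C_j \ \big| \ \{(X_{i,j},Y_{i,j})\}_{i=1}^n \ \sim \ \mathrm{Bin}(n_{\mathrm{test}}, \mu).
\end{equation*}

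Next, I would invoke \cshref{prop:dist_exp_cov} to describe the marginal distribution of $\mu$. Since the sample is iid and $\widehat{\lambda}_{\mathrm{split},j}=\widehat{Q}_j(\alpha)$, \cshref{prop:dist_exp_cov} gives $\mu \geq Z$ in the stochastic-dominance sense, where $Z \sim \mathrm{Beta}(\lceil(1-\alpha)(n+1)\rceil,\lfloor\alpha(n+1)\rfloor)$, with equality in distribution whenever the non-conformity scores are almost surely distinct. Combining the conditional Binomial representation above with this unconditional characterization of $\mu$ yields the claimed Beta-Binomial mixture structure for $C_j$.

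There is no real technical obstacle here; the proof is a direct assembly of \cshref{prop:dist_exp_cov} with a conditional-independence argument. The only point requiring some care is making the conditioning explicit: one must verify that, given the calibration set, $\widehat{\lambda}_{\mathrm{split},j}$ becomes a constant and the map $x\mapsto\mathcal{S}_{\widehat{\lambda}_{\mathrm{split},j}}(x)$ is a deterministic set-valued function, so that the independence of the test points translates cleanly into conditional independence of the indicator variables. Once this is in place, the sum-of-iid-Bernoullis step and the appeal to \cshref{prop:dist_exp_cov} are immediate.
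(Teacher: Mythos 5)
Your proof is correct and follows essentially the same route as the paper: condition on the calibration set so that $\widehat{\lambda}_{\mathrm{split},j}$ and hence $\mathcal{S}_{\widehat{\lambda}_{\mathrm{split},j}}(\cdot)$ become fixed, observe that the test indicators are then iid Bernoulli($\mu$) giving the conditional Binomial law, and appeal to \cshref{prop:dist_exp_cov} for the stochastic-dominance characterization of $\mu$. The paper's proof (an adaptation of \cite[Section~3.2]{angelopoulos2021gentle}) is identical in structure; you merely spell out the conditional-independence step a bit more explicitly.
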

\begin{proof}
    See \cref{app:prop5}.
\end{proof}

\begin{remark}
    If the non-conformity scores are almost surely distinct, we write
    \begin{equation}
        C_j\sim\frac{1}{n_\mathrm{test}}\mathrm{BetaBin}(n_\mathrm{test},\lceil (1-\alpha)(n+1)\rceil,\lfloor \alpha(n+1)\rfloor),
    \end{equation}
    where $\mathrm{BetaBin}(\ell,m,k)$ is a Beta-Binomial distribution with $\ell,m,k\in\mathbb{N}$ positive integers.
\end{remark}

\begin{remark}\label{rem:C_bar}
    The statistic $\overline{C}=\frac{1}{R}\sum^R_{j=1}C_j$ is an estimate of the marginal coverage of $\mathcal{S}_{\widehat{\lambda}_\mathrm{split}}(\cdot)$. The distribution of $\overline{C}$ does not have a closed form solution, but since $\overline{C}$ is a random variable stochastically dominating the average of $R$ independent Beta-Binomial random variables, it satisfies
    \begin{equation}
    \begin{split}
        \mathbb{E}\big[\overline{C}\big]&=\mathbb{E}\big[C_j\big]\geq1-\frac{\lfloor\alpha(n+1)\rfloor}{n+1}, \\
        \mathbb{V}\big[\overline{C}\big]&=\frac{\mathbb{V}\big[C_j\big]}{R}\geq\frac{\lfloor\alpha(n+1)\rfloor\lceil (1-\alpha)(n+1)\rceil(n+n_\mathrm{test}+1)}{Rn_\mathrm{test}(n+1)^2(n+2)}=\mathcal{O}\Big(\frac{1}{R\min\{n, n_\mathrm{test}\}}\Big),
    \end{split}
    \end{equation}
    with equality if the non-conformity scores are almost surely distinct. Since $\mathbb{V}\big[\overline{C}\big]\rightarrow 0$ as $R\rightarrow\infty$, we have for large $R$ that $\overline{C}$ is approximately at least $1-\alpha$. Furthermore, if the non-conformity scores are almost surely distinct, then approximately $1-\alpha\leq\overline{C}\leq 1-\alpha+\frac{1}{n+1}$.
\end{remark}

\begin{remark}\label{rem:delta_bar}
    The statistic $\widehat{\delta}$ given by
    \begin{equation}
        \widehat{\delta}:=\frac{1}{R}\sum^R_{j=1}\mathbbm{1}\big\{C_j\leq\mathrm{Beta}(\epsilon;\lceil (1-\alpha)(n+1)\rceil,\lfloor \alpha(n+1)\rfloor)\big\}
    \end{equation}
    is a biased estimate of the parameter $\delta$ since the finite number of test points $n_\mathrm{test}$ increases the variance of the empirical coverage $C_j$. A better estimate for $\delta$ is
    \begin{equation}
        \overline{\delta}:=\frac{1}{R}\sum^R_{j=1}\mathbbm{1}\big\{C_j\leq \frac{1}{n_\mathrm{test}}\mathrm{BetaBin}(\epsilon;n_\mathrm{test},\lceil (1-\alpha)(n+1)\rceil,\lfloor \alpha(n+1)\rfloor)\big\}.
    \end{equation}
    Then approximately $\overline{\delta}\leq\delta$ in practice with equality if the non-conformity scores are almost surely distinct.
\end{remark}

\section{Distribution-Free Risk Control}\label{sec:riskcontrol}

This section shows how new procedures in distribution-free risk control contain split conformal prediction as a special case. \cref{sec:crc} discusses conformal risk control (CRC) \cite{angelopoulos2022conformal}, \cref{sec:ucb} considers upper confidence bound (UCB) calibration \cite{bates2021distribution} and finally \cref{sec:ltt} studies learn then test (LTT) \cite{angelopoulos2021learn}. The focus of this section is to get a better understanding of distribution-free risk control, and in particular its relation to split conformal prediction.

\subsection{Conformal Risk Control}\label{sec:crc}
\cite{angelopoulos2022conformal} extended split conformal prediction to conformal risk control (CRC), producing prediction sets that satisfy the MRC (\cshref{def:mrc}) finite-sample guarantee for a subset of risks (\cshref{def:crc}) under the similar assumption of iid data. This includes marginal coverage as a special case for risk functions as in \cref{eq:01loss}. Below, we provide a brief outline of CRC and how it produces identical prediction sets to split conformal prediction if the target guarantee is marginal coverage.

Similar to split conformal prediction, CRC starts with the sequence of nested prediction sets $\{\mathcal{S}_\lambda(X_{n+1})\}_{\lambda\in\Lambda}$, constructed from a given base predictor. The sequence is considered fixed and identical in both CRC and split conformal prediction. The key objective is $\widehat{\lambda}_\mathrm{CRC}$, the smallest value of $\lambda$ such that the prediction set $\mathcal{S}_{\widehat{\lambda}_\mathrm{CRC}}(X_{n+1})$ satisfies MRC with significance level $\alpha\in(-\infty,B]$, $B<+\infty$, under the assumption of an iid sample $\{(X_i,Y_i)\}^{n+1}_{i=1}$. 

CRC applies to bounded monotone risk functions $R:\Lambda\rightarrow(-\infty,B]$ of the form
\begin{equation}
    R(\lambda)=\mathbb{E}\big[L(Y_{n+1},\mathcal{S}_\lambda(X_{n+1}))\big],
\end{equation}
where $L:\mathcal{Y}\times\mathcal{Y}'\rightarrow(-\infty,B]$ is a bounded nested loss function for some $B<+\infty$. The loss function is nested (`monotone') in the sense of \cref{eq:mon_cond}. Note that the risk is a deterministic function of $\lambda$, but the risk corresponding to $\widehat{\lambda}_\mathrm{CRC}$ is dependent on the calibration set and thus random.

Consider the iid collection of non-increasing random functions $\ell_i:\Lambda\rightarrow(-\infty,B]$ for $i=1,\dots,n+1$, where $\ell_i(\lambda):=L(Y_i,\mathcal{S}_\lambda(X_i))$. Furthermore, write the empirical risk function on the calibration set as $\widehat{R}(\lambda):= \frac{1}{n}\sum^n_{i=1}\ell_i(\lambda)$, i.e. the average loss of a prediction set $\mathcal{S}_\lambda(\cdot)$ on the calibration set. Then \cite{angelopoulos2022conformal} shows that given the calibration set, $\widehat{\lambda}_\mathrm{CRC}$ is
\begin{equation}\label{eq:lam_crc}
    \widehat{\lambda}_\mathrm{CRC} :=\inf\big\{\lambda\in\Lambda : \frac{n}{n+1}\widehat{R}(\lambda)+\frac{B}{n+1}\leq\alpha\big\},
\end{equation}
such that $\mathcal{S}_{\widehat{\lambda}_\mathrm{CRC}}(X_{n+1})$ satisfies MRC with significance level $\alpha\in(-\infty,B]$. Furthermore, if the $\ell_i$ are iid from a continuous distribution, then the expected loss is not too conservative, i.e.
\begin{equation}\label{eq:lb_crc}
    \alpha-\frac{2B}{n+1}\leq\mathbb{E}\big[R(\widehat{\lambda}_\mathrm{CRC}) \big]\leq \alpha,
\end{equation}

In split conformal prediction, we are interested in the conditional probability that a new label $Y_{n+1}$ is contained in the prediction set $\mathcal{S}_\lambda(X_{n+1})$. The corresponding risk is the expectation of the 0-1 loss, i.e.
\begin{equation}\label{eq:R_split}
    \begin{split}
        R_\mathrm{0-1}(\lambda)&=\mathbb{E}\Big[\mathbbm{1}\big\{Y_{n+1}\not\in\mathcal{S}_\lambda(X_{n+1})\big\}\Big] \\
        &=\mathbb{E}\Big[\mathbbm{1}\big\{r(X_{n+1},Y_{n+1})>\lambda\big\}\Big], 
    \end{split}
\end{equation}
where the last equality involving the random score function $r(X,Y)$ (\cref{eq:score}) follows from the fact that $Y_{n+1}\in \mathcal{S}_\lambda(X_{n+1})$ if and only if $r(X_{n+1},Y_{n+1})$ is at most $\lambda$ (\cref{eq:split_set}).
A higher risk indicates a higher probability that $Y_{n+1}$ is not contained in $\mathcal{S}_\lambda(X_{n+1})$. 

The 0-1 loss satisfies the above monotonicity requirement and is bounded by $B=1$. Furthermore, the MRC guarantee reduces to marginal coverage at significance level $\alpha\in(0,1)$. The empirical risk $\widehat{R}_{0-1}(\lambda)$ is the miscoverage rate of a prediction set $\mathcal{S}_{\lambda}(X_{n+1})$ on the calibration set. Now we can write \cref{eq:lam_crc} for risks as in \cref{eq:R_split} as
\begin{equation}
\begin{split}
    \widehat{\lambda}_{\mathrm{CRC}} &=\inf\big\{\lambda\in\Lambda : \frac{1}{n+1}\sum^n_{i=1}\mathbbm{1}\big\{r(X_{i},Y_{i})>\lambda\big\}+\frac{1}{n+1}\leq\alpha\big\}\\
    &=\inf\big\{\lambda\in\Lambda : \sum^n_{i=1}\mathbbm{1}\big\{r(X_{i},Y_{i})\leq\lambda\big\}\geq(1-\alpha)(n+1)\big\}\\
    &=\inf\big\{\lambda\in\Lambda : \frac{1}{n}\sum^n_{i=1}\mathbbm{1}\big\{r(X_{i},Y_{i})\leq\lambda\big\}\geq\frac{1}{n}\lceil(1-\alpha)(n+1)\rceil\big\}\\
    &=\widehat{Q}(\alpha).
\end{split}
\end{equation}
This is identical to the value of $\lambda_\mathrm{split}$ specified in \cref{eq:emp_quan} which results in marginal coverage. Informally, CRC can be summarized as below. That is, for risks as in \cref{eq:R_split} using the 0-1 loss, CRC obtains $\widehat{\lambda}_\mathrm{CRC}=\widehat{Q}(\alpha)$ such that the corresponding prediction set satisfies marginal coverage at significance level $\alpha$. Split conformal prediction simply relates to the bottom line.
\begin{equation*}
\begin{split}
    \mathrm{CRC}:\quad  \mathrm{bounded} \ \mathrm{monotone}\ \mathrm{risks}\quad&\longrightarrow\quad \widehat{\lambda}_\mathrm{CRC} \quad\longrightarrow\quad\mathrm{MRC}\\
    0-1\ \mathrm{loss}\quad &\longrightarrow\quad \widehat{Q}(\alpha)\quad \longrightarrow\quad\mathrm{marginal}\ \mathrm{coverage}
\end{split}
\end{equation*}
Finally, if the random functions $\ell_i$ are iid from a continuous distribution, the lower bound in \cref{eq:lb_crc} holds. \cite{angelopoulos2022conformal} shows that the $\ell_i$ are iid if and only if the random score function $r(X,Y)$ follows a continuous distribution. The resulting lower bound in \cref{eq:lb_crc} is not as tight as the lower bound presented in \cshref{prop:marg_cov_split}. This is an inconsistency of CRC theory with the known results for split conformal prediction and an interesting direction for future research.

\subsection{Upper Confidence Bound Calibration}\label{sec:ucb}
\cite{bates2021distribution} proposed a procedure called upper confidence bound (UCB) calibration to calibrate prediction sets to obtain an $(\epsilon,\delta)$-RCP (\cshref{def:risk_control}) for a subset of risks (\cshref{rem:rcps}) under the similar assumption of iid data. This includes the construction of an $(\epsilon,\delta)$-tolerance region as a special case for risk functions as in \cref{eq:01loss}. In this section, we describe the procedure and show that the tolerance regions obtained through UCB calibration are identical to those obtained through split conformal prediction. 

Similar to split conformal prediction, UCB calibration starts with the sequence of nested prediction sets $\{\mathcal{S}_\lambda(X_{n+1})\}_{\lambda\in\Lambda}$. The objective of interest is $\widehat{\lambda}_\mathrm{UCB}$, the smallest value of $\lambda$ such that the prediction set $\mathcal{S}_{\widehat{\lambda}_\mathrm{UCB}}(X_{n+1})$ is an $(\epsilon,\delta)$-RCP. UCB calibration applies to any monotone risk function $R:\Lambda\rightarrow\mathbb{R}$. However, to make the connection to split conformal prediction clear, we define the risk function as the expectation of a continuous loss function, i.e.
\begin{equation}
    R(\lambda)=\mathbb{E}\big[L(Y_{n+1},\mathcal{S}_\lambda(X_{n+1}))\big],
\end{equation}
where $L:\mathcal{Y}\times\mathcal{Y}'\rightarrow\mathbb{R}_{\geq0}$ is a nested non-negative loss function. The loss function is nested (`monotone') in the sense of \cref{eq:mon_cond}. Again, note that the risk is a deterministic function of $\lambda$, but the risk corresponding to $\widehat{\lambda}_\mathrm{UCB}$ is dependent on the calibration set and thus random.

A key assumption in UCB calibration is that we have access to a pointwise data-dependent UCB $\widehat{R}^+(\lambda)$ for the risk $R(\lambda)$, such that for a pre-specified $\delta\in(0,1)$,
\begin{equation}\label{eq:pw_ucb}
    \mathbb{P}\big[R(\lambda)\leq \widehat{R}^+(\lambda)\big]\geq 1-\delta, \quad \forall\lambda\in\Lambda.
\end{equation}
In practice, we do not have access to the risk function $R(\lambda)$ and instead construct the UCB based on the empirical risk $\widehat{R}(\lambda):=\frac{1}{n}\sum^n_{i=1}\ell_i(\lambda)$, where the $\ell_i$ are defined as in \cref{sec:crc}. The empirical risk is the average loss of a prediction set $\mathcal{S}_\lambda(\cdot)$ on the calibration set.

By searching over the space $\Lambda$, \cite{bates2021distribution} shows that the target $\widehat{\lambda}_\mathrm{UCB}$ satisfies
\begin{equation}\label{eq:lam_ucb}
    \widehat{\lambda}_\mathrm{UCB}:=\inf\big\{\lambda\in\Lambda :\epsilon\geq \widehat{R}^+(\lambda'),  \forall \lambda'\geq\lambda\big\},
\end{equation}
i.e. the smallest value of $\lambda$ such that the risk function $R(\lambda')$ is upper bounded by $\epsilon$ for all $\lambda'\geq\widehat{\lambda}_\mathrm{UCB}$. In other words, that the risk $R(\widehat{\lambda}_\mathrm{UCB})$ is at most $\epsilon$ with probability at least $1-\delta$ and thus that $\mathcal{S}_{\widehat{\lambda}_\mathrm{UCB}}(X_{n+1})$ an $(\epsilon,\delta)$-RCP.

In split conformal prediction, the risk function of interest is $R_{0-1}(\lambda)$ as defined in \cref{eq:R_split}. This risk satisfies the above monotonicity requirement. Furthermore, for this risk the $(\epsilon,\delta)$-RCP guarantee reduces to an $(\epsilon,\delta)$-tolerance region. Now the empirical risk on the calibration set becomes
\begin{equation}
    \widehat{R}_{0-1}(\lambda) = \frac{1}{n}\sum^n_{i=1}\mathbbm{1}\big\{r(X_i,Y_i)>\lambda\big\}.
\end{equation}
\cite{bates2021distribution} introduced the Hoeffding-Bentkus (HB) UCB for bounded loss functions, which is a combination of Hoeffding's inequality \cite{hoeffding1994probability} and Bentkus' inequality \cite{bentkus2004hoeffding}. In the case of the 0-1 loss, the Bentkus bound improved by a factor $e$ is the most precise and should always be used. The intuition behind this is that for a fixed $\lambda$ and an iid sample, each $\ell_i$ is an independent Bernoulli trial such that the empirical risk $\widehat{R}(\lambda)$ is the mean of a Binomial random variable. The Bentkus bound for the empirical risk $\widehat{R}_{0-1}(\lambda)$ is given by
\begin{equation}\label{eq:ucb_bin}
    \widehat{R}^+_{0-1}(\lambda)=\inf\big\{p : \mathrm{Bin}\big( n\widehat{R}_{0-1}(\lambda); n, p\big)\leq\delta\big\}.
\end{equation}
Substituting $\widehat{R}^+_{0-1}(\lambda)$ in the definition of $\widehat{\lambda}_\mathrm{UCB}$ yields
\begin{equation}
\begin{split}
    \widehat{\lambda}_\mathrm{UCB}&=\inf\Big\{\lambda\in\Lambda : \epsilon\geq\inf\big\{p : \mathrm{Bin}\big( n\widehat{R}_{0-1}(\lambda); n, p\big)\leq\delta\big\}\Big\}\\
    &=\inf\Big\{\lambda\in\Lambda : \widehat{R}_{0-1}(\lambda)\leq\frac{1}{n}\sup\big\{k : \mathrm{Bin}\big( k; n, \epsilon\big)\leq\delta\big\}\Big\}\\
    &=\Big(1-\frac{1}{n}\sup\big\{k:\mathrm{Bin}(k;n,\epsilon)\leq\delta\big\}\Big)\text{-th quantile of }\{r_i\}^n_{i=1}\\
    &=\widehat{P}(\epsilon,\delta),
\end{split}
\end{equation}
where we inverted the Binomial distribution for $\widehat{R}_{0-1}(\lambda)$ in the second step and used the definition of the non-conformity score function (\cref{eq:score}) in the final step. This is identical to the value of $\lambda_\mathrm{split}$ that result in an $(\epsilon,\delta)$-tolerance region, see \cshref{prop:tol_reg_split}. Informally, UCB calibration can be summarized as below. That is, for risks as in \cref{eq:R_split} using the 0-1 loss, UCB calibration obtains $\widehat{\lambda}_\mathrm{UCB}=\widehat{P}(\epsilon,\delta)$ such that the corresponding prediction set is an $(\epsilon,\delta)$-tolerance region. Split conformal prediction simply relates to the bottom line.
\begin{equation*}
\begin{split}
    \mathrm{UCB}:\quad \mathrm{monotone}\ \mathrm{risks}\quad&\longrightarrow\quad \widehat{\lambda}_\mathrm{UCB}\ \ \ \quad\longrightarrow\quad(\epsilon,\delta)-\mathrm{RCP}\\
    0-1\ \mathrm{loss}\quad &\longrightarrow\quad \widehat{P}(\epsilon,\delta)\quad \longrightarrow\quad(\epsilon,\delta)-\mathrm{tolerance}\ \mathrm{region}
\end{split}
\end{equation*}

\subsection{Learn Then Test}\label{sec:ltt}
\cite{angelopoulos2021learn} generalized UCB calibration to allow for a more general notion of statistical error than only monotonic risk functions, coined learn then test (LTT). To this extend, the problem of risk control is recast as a multiple testing problem, linking distribution-free predictive inference to the rich literature on multiple hypothesis testing. Here, we provide a very brief introduction and highlight the connection to UCB calibration and thus split conformal prediction. 

Although the LTT framework is designed to be able to control multiple risks simultaneously, for the purposes of this thesis we restrict ourselves to a one-dimensional risk. The goal of LTT is to find a set $\widehat{\Lambda}_\mathrm{LTT}\subseteq\Lambda_\mathrm{LTT}$ such that for all $\lambda\in\widehat{\Lambda}_\mathrm{LTT}$, $\mathcal{S}_\lambda(X_{n+1})$ is an $(\epsilon,\delta)$-RCP. An important difference with UCB calibration is that the LTT procedure requires a discrete grid $\Lambda_\mathrm{LTT}=\{\lambda_1,\dots,\lambda_N\}$, such that if $\Lambda_\mathrm{LTT}$ is not already discrete, it must be discretized.

First, each $\lambda_j\in\Lambda_\mathrm{LTT}=\big\{\lambda_1,\dots,\lambda_N\big\}$ is associated with a null hypothesis
\begin{equation}
    \mathcal{H}_j:R(\lambda_j)>\epsilon.
\end{equation}
Rejecting $\mathcal{H}_j$ corresponds to selecting $\lambda_j$ as a point such that $\mathcal{S}_{\lambda_j}(X_{n+1})$ is an $(\epsilon,\delta)$-RCP. For each $\mathcal{H}_j$, a finite-sample valid $p$-value is calculated using an UCB, for example the HB bound for bounded losses introduced in \cite{bates2021distribution}. $p_j$ is a valid $p$-value for $\mathcal{H}_j$ if and only if $p_j$ has a distribution that stochastically dominates the standard uniform distribution $\mathcal{U}[0,1]$, i.e. for all $u\in[0,1]$, $\mathbb{P}\big[p_j\leq u\big]\leq u$. In the case of split conformal prediction, we can use the exact $p$-values corresponding to the binomial distribution, e.g. \cref{eq:ucb_bin}. 

Many nearby $p$-values are often highly dependent, since the same calibration set is used to estimate them. The challenge is to reject as many $\mathcal{H}_j$ as possible, while limiting the false positive rate by a pre-specified level $\delta\in(0,1)$. To do so, the familywise-error rate (FWER) \cite{bauer1991multiple} needs to be controlled. An FWER controlling algorithm is such that even for the largest value in $\widehat{\Lambda}_\mathrm{LTT}$, the corresponding prediction set $\mathcal{S}_{\sup\lambda\in\widehat{\Lambda}_\mathrm{LTT}}(X_{n+1})$ is an $(\epsilon,\delta)$-RCP. The simplest example is the Bonferroni correction \cite{bonferroni1936teoria, holm1979simple}, yielding $\widehat{\Lambda}_\mathrm{LTT}=\big\{\lambda_j: p_j<\frac{\delta}{|\Lambda_\mathrm{LTT}|}\big\}$.

A refinement of the Bonferroni correction is called fixed sequence testing \cite{bauer1991multiple, angelopoulos2021learn}, leveraging the dependency of $p$-values for nearby $\lambda$ and the corresponding smoothness with $\lambda$. In other words, oftentimes hypotheses are rejected in clusters of nearby $\lambda$. `Fixed sequence' refers to the prior ordering of $\lambda$ roughly from most likely to least likely to be rejected. Testing the sequence using $p_j\leq\delta$ until the first rejection yields a set of all hypotheses rejected so far. \cite{angelopoulos2021learn} shows that such an algorithm is guaranteed to control the FWER. The intuition behind the improvement compared to the Bonferroni correction is that the goal is not to reject as many hypotheses as possible, therefore limiting the multiplicity correction. 

In the case of split conformal prediction, we have a monotonic non-increasing risk function as in \cref{eq:R_split}. Using a closed discrete set $\Lambda_\mathrm{LTT}\subset\mathbb{R}\cup\{\pm\}$, we can use fixed sequence testing with the reverse natural ordering to obtain $\widehat{\Lambda}_\mathrm{LTT}$, starting at $\sup\Lambda_\mathrm{LTT}$ and proceeding to $\inf\Lambda_\mathrm{LTT}$. In combination with the exact $p$-values in \cref{eq:ucb_bin}, this results in essentially the same procedure as UCB calibration. One minor difference is that UCB calibration allows for a continuous space $\Lambda$, such that $\widehat{\Lambda}_\mathrm{LTT}$ might not contain $\widehat{\lambda}_\mathrm{UCB}$, although this difference is negligible for a finely grained grid.

\section{Experiments}\label{sec:experiments}

This section empirically validates the theoretical results in \cref{sec:conformal} and \cref{sec:dist_cov} in the context of regression, using conformalized quantile regression (CQR) \cite{romano2019conformalized} in combination with a quantile random forest (QRF) \cite{meinshausen2006quantile} base predictor. \cref{sec:methods} outlines the procedures and hyper-parameter tuning. Subsequently, \cref{sec:syn_data} benchmarks the tolerance regions produced by CQR against QRF without conformalization on a synthetic dataset. Then, \cref{sec:real_data} shows CQR correctly calibrates \textit{any} QRF base predictor on synthetic data as well as 4 real datasets. For all experiments mentioned in this thesis, we make extensive use of the codebase\footnote{\url{https://github.com/yromano/cqr}} provided by \cite{romano2019conformalized}.

\subsection{Conformalized Quantile Random Forests}\label{sec:methods}
We refer the reader back to \cref{sec:cqr} for an outline of CQR, which is conveniently implemented in the package-style codebase provided with \cite{romano2019conformalized}. In the experiments below we use a QRF base predictor, implemented in the package \texttt{sklearn}. The hyper-parameters of the QRF base predictor are the package defaults, except that we set the number of trees to 100 and the minimum number of leaves required at a node to 40. The quantile crossing problem cannot affect QRFs \cite{romano2019conformalized}.

\cite{romano2019conformalized} showed that this combination of base predictor and split conformal procedure is one of the top performing methods among standard and locally adaptive versions of split conformal prediction. However, we would like to remind the reader that the focus of this thesis is on prediction set validity, not efficiency. The prediction set size is highly dependent on the accuracy of the base predictor, but for computational reasons we operate a relatively small random forest. Therefore, the resulting prediction intervals might not be optimal in terms of size. Using a different base predictor or a thorough tuning process of hyper-parameters might result in smaller intervals, while preserving the finite-sample coverage guarantee. 

Following \cite{romano2019conformalized}, we tune the nominal quantiles of the QRF base predictor using cross-validation. In other words, we select the nominal quantiles that result in the shortest prediction intervals after calibration averaged over 10 folds of the proper training set, such that the target coverage of the base predictor is not exactly $1-\alpha$. This is because quantile regression is often too conservative, resulting in unnecessarily wide prediction intervals. Tuning the nominal quantiles of the base predictor does not invalidate the finite-sample coverage guarantee, but it might result in shorter intervals. 

For the sake of comparison, we include a QRF without conformalization as a baseline predictor in \cref{sec:syn_data}. Since QRF does not require a calibration set, we use the calibration set as additional training data. To ensure a fair comparison, the hyper-parameters are identical to the QRF base predictor in CQR, except that the nominal quantiles are fixed to $q_{\alpha/2}$ and $q_{1-\alpha/2}$. QRF without conformalization has asymptotic coverage of $1-\alpha$, but does not have a finite-sample coverage guarantee. 

Throughout this thesis, the target tolerance region is fixed at $\epsilon=\delta=0.1$. The full training set is consistently split into a proper training and calibration set of equal size. Given the size of the calibration set $n$, we obtain a target significance level $\alpha$ for marginal coverage through \cshref{prop:split_mc_tol}. The focus of the experiments lies on demonstrating prediction set validity, i.e. showing that the fitted prediction sets are $(\epsilon,\delta)$-tolerance regions and satisfy marginal coverage at significance level $\alpha$. To this extend, we use the statistics $\overline{C}$ (\cshref{rem:C_bar}) and $\overline{\delta}$ (\cshref{rem:delta_bar}) as estimators of the parameters $1-\alpha$ and $\delta$. We use $R=1000$ trials in all experiments.

\subsection{Synthetic Example}\label{sec:syn_data}

We start with simulated heteroskedastic data with outliers, identical to the synthetic example in \cite{romano2019conformalized}. Although \cite{romano2019conformalized} presented this example to illustrate the importance of adaptivity in conformal prediction, we focus on demonstrating prediction set validity in the context of tolerance regions. 

The data consists of iid uni-variate features $X_i$ from a uniform distribution $\mathcal{U}[1,5]$. The corresponding labels $Y_i$ are sampled as
\begin{equation}
    Y_i\sim \mathrm{Pois}(\sin^2(X_i)+0.1)+0.03X_i\gamma_{1,i}+25\mathbbm{1}\{U_i<0.01\}\gamma_{2,i},
\end{equation}
where the $\gamma_{1,i},\gamma_{2,i}\sim\mathcal{N}(0,1)$ are iid standard normal noise and the $U_i\sim\mathcal{U}[0,1]$ are iid standard uniform. Note that the last term creates a few but large outliers. 

We generate a proper training set of size $n_\mathrm{train}=1000$ and a calibration set of size $n=1000$. Furthermore, we generate a test set of size $n_\mathrm{test}=5000$. Given $\epsilon=\delta=0.1$ and the size of the calibration set $n=1000$, \cshref{prop:split_mc_tol} yields that a CQR prediction set with $\lambda_\mathrm{CQR}=0.913\%$-th quantile of the set of non-conformity scores $\{r_i\}^n_{i=1}$ yields an $(0.1,0.1)$-tolerance region. Furthermore, it has marginal coverage of $91.21\%$ ($\alpha=0.0879$). 

\cref{fig:test_data_outliers} visualises the test set on the full range of the label population $\mathcal{Y}$ and \cref{fig:cqr_rf_pred} shows the test set on a `zoomed in' subsection of $\mathcal{Y}$. We observe clear heteroskedasticity and several large outliers. Furthermore, \cref{fig:cqr_rf_pred} shows one instance of the CQR prediction interval and corresponding QRF predicted quantiles. To provide some intuition, the absolute difference between one of the predicted quantiles and the CQR prediction set equals $\lambda_\mathrm{CQR}$. The QRF base predictor without calibration achieves empirical coverage of 83.98\% on the test set. Note that this is intentionally lower than $1-\alpha$, since the nominal quantiles of the base predictor are tuned using cross-validation. Interestingly, the algorithm selects nominal quantiles below the nominal level, because the QRF base predictor seems to be too conservative. The CQR prediction interval achieves empirical coverage of 92.26\%, well above the target level, with an average interval length of 2.09. Notice in \cref{fig:cqr_rf_pred} how the length of constructed interval varies with $X$, reflecting adaptivity of the CQR prediction interval.

\begin{figure}[h]
    \centering
    \begin{subfigure}[b]{0.45\linewidth}
        \centering
        \includegraphics[width=\linewidth]{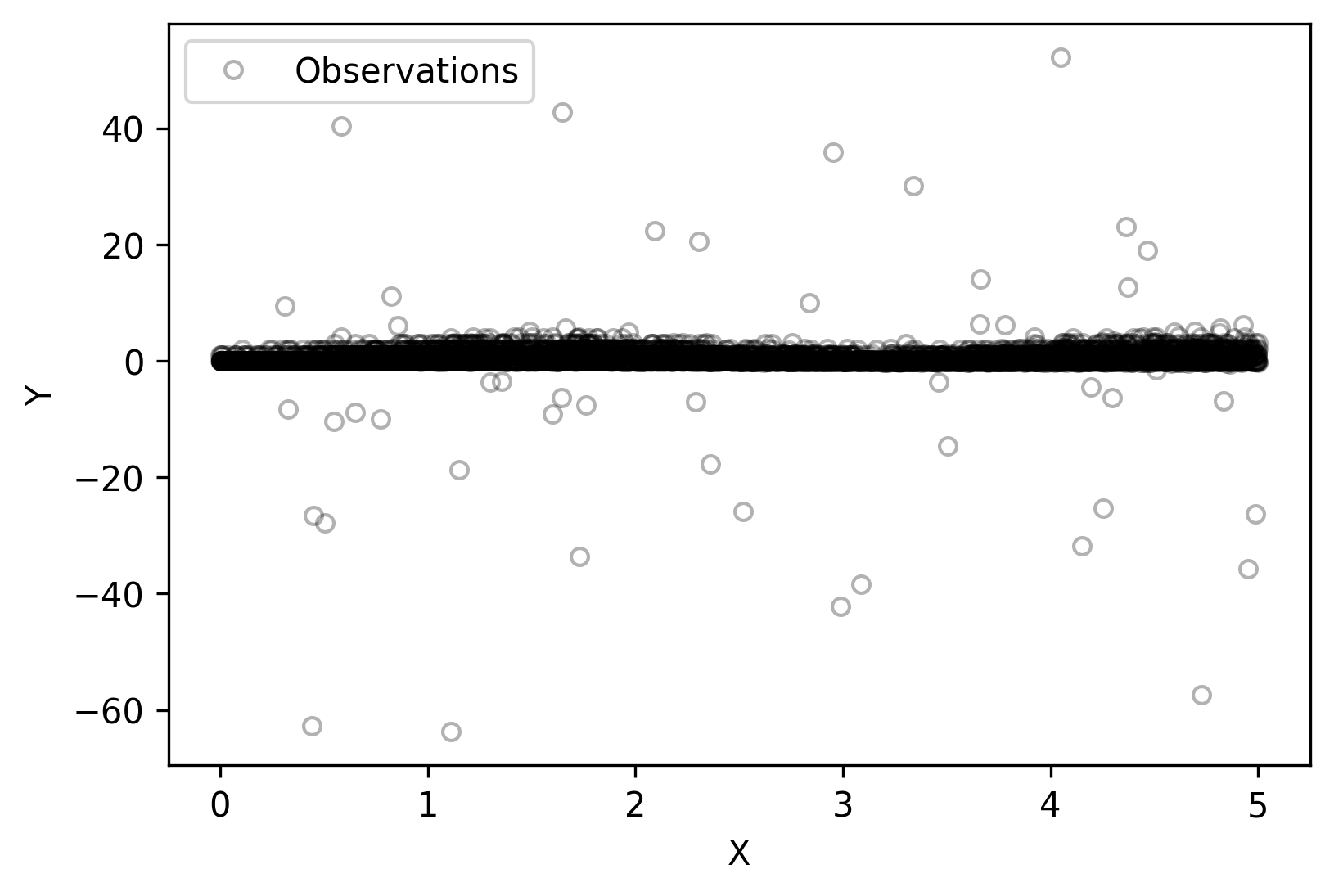}
        \caption{Full range scatterplot of test data.}
        \label{fig:test_data_outliers}
    \end{subfigure}
    \begin{subfigure}[b]{0.45\linewidth}
        \centering
        \includegraphics[width=\linewidth]{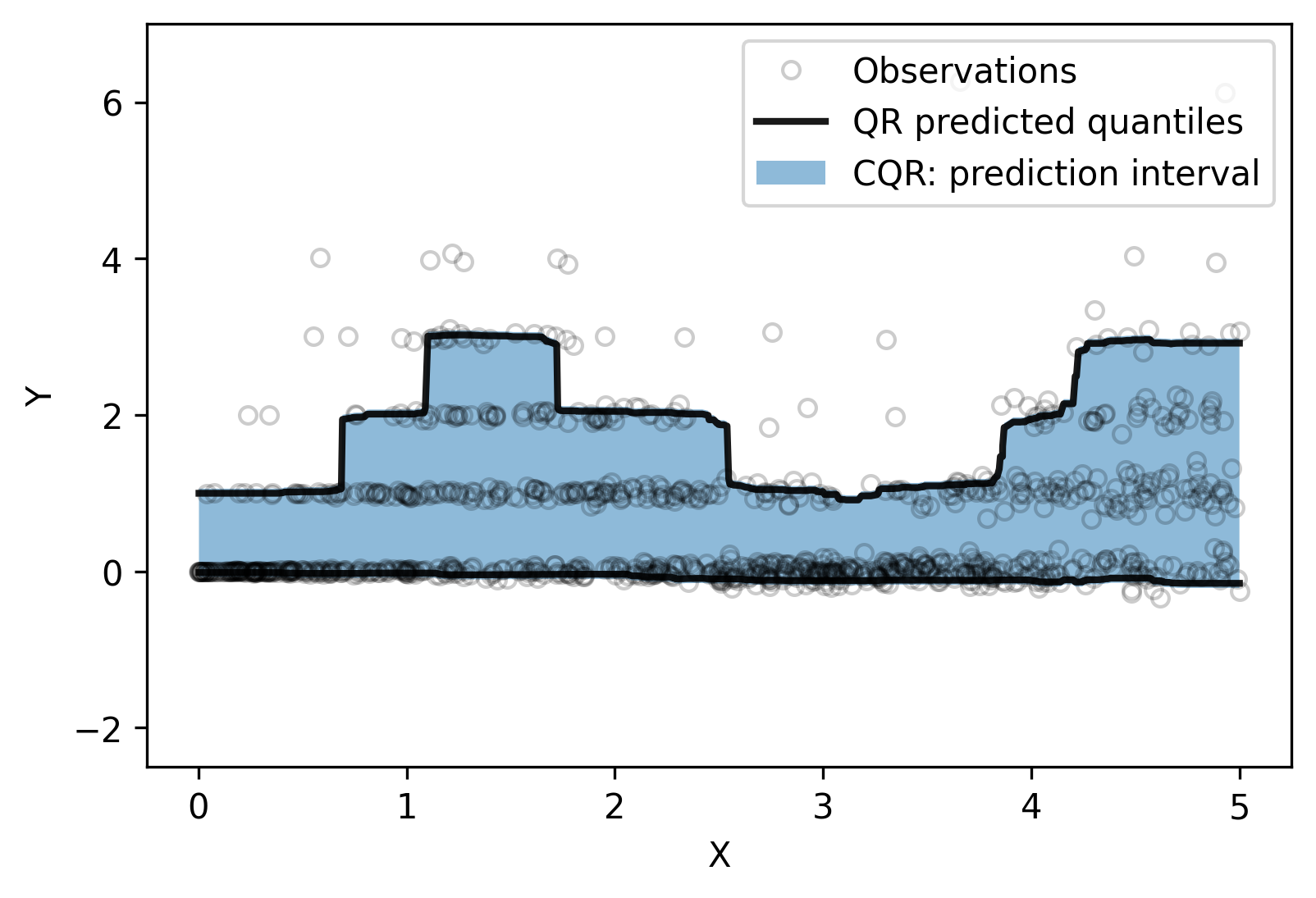}
        \caption{CQR Prediction Interval}
        \label{fig:cqr_rf_pred}
    \end{subfigure}
    \caption[Test data visualization and CQR prediction interval.]{Test data visualization and CQR prediction interval.}
    \label{fig:test_data}
\end{figure}

To validate CQR does in fact produce the correct $(\epsilon,\delta)$-tolerance region, we repeat the above procedure for $R=1000$ trials. That is, we generate a proper training, calibration and test set $R$ times and construct prediction sets using CQR. For each trial, we record the empirical coverage of the CQR prediction set on the test set and the average length of the CQR prediction interval. For the sake of comparison, we fit a QRF without conformalization on a combined proper training and calibration set and similarly record the empirical coverage and average interval length. 

\cref{fig:den_cov} shows a histogram of the empirical coverage on the test set after $R$ trials. In addition, \cref{fig:den_cov} contains a $\mathrm{Beta}(\lceil (1-\alpha)(n+1)\rceil,\lfloor \alpha(n+1)\rfloor)$ and $\mathrm{BetaBin}(n_\mathrm{test},\lceil (1-\alpha)(n+1)\rceil,\lfloor \alpha(n+1)\rfloor)$ distribution. For $n_\mathrm{test}=5000$, the Beta-Binomial distribution is very close to the Beta distribution. We observe that the CQR empirical coverage coincides with the Beta-Binomial distribution, as the theory in \cref{prop:dist_avg_cov} suggests. We observe that QRF without conformalization tends to undercover. This is unsurprising, since QRF does not have a finite-sample coverage guarantee, and demonstrates the importance of the post-hoc calibration. 

\cref{fig:hist_length} shows a histogram of the average interval lengths after $R$ trials. We observe that generally, CQR yields shorter interval lengths than QRF. This is likely due to the tuning of the nominal quantiles of the base predictor in CQR, which might result in shorter prediction intervals. In addition, the base predictor occasionally overcovers, which is mitigated by the signed conformity scores of \cref{eq:cqr_score}. 

\begin{figure}[h]
    \centering
    \begin{subfigure}[b]{0.45\linewidth}
        \centering
        \includegraphics[width=\linewidth]{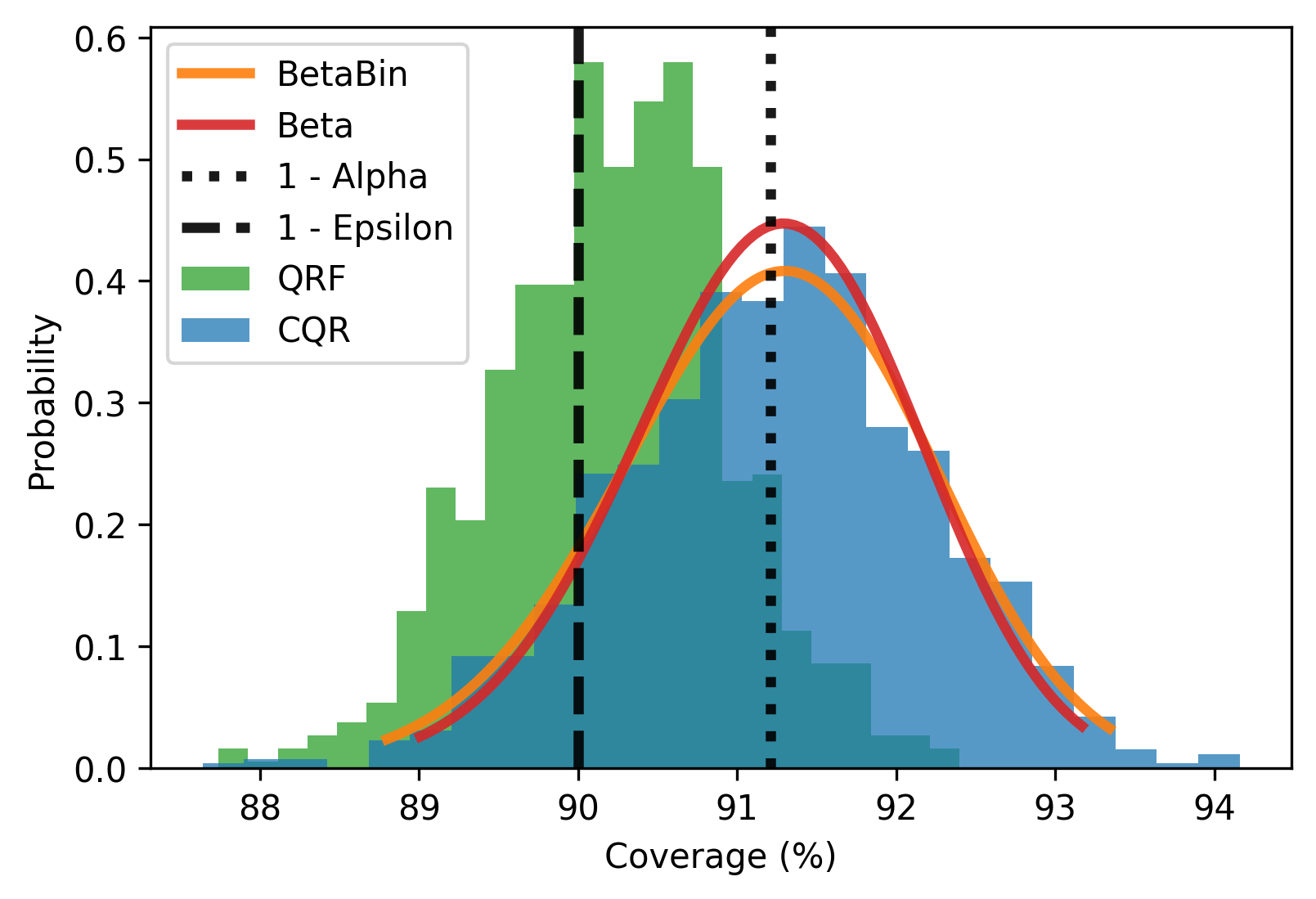}
        \caption{Histogram of empirical coverage.}
        \label{fig:den_cov}
    \end{subfigure}
    \begin{subfigure}[b]{0.45\linewidth}
        \centering
        \includegraphics[width=\linewidth]{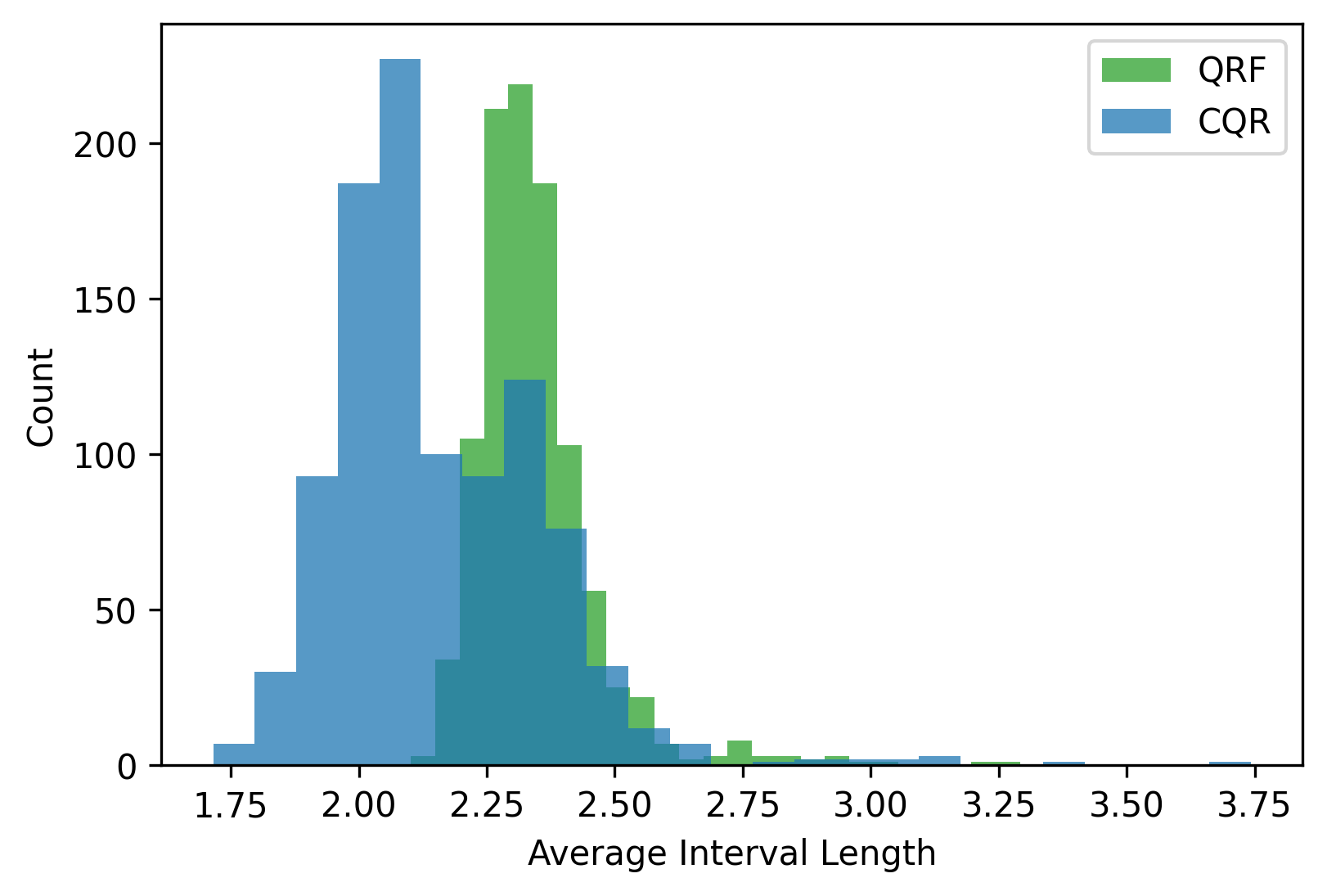}
        \caption{Histogram of average interval length.}
        \label{fig:hist_length}
    \end{subfigure}
    \caption[CQR versus QRF without conformalization.]{CQR versus QRF without conformalization. Number of trials $R=1000$. The $\mathrm{Beta}(\lceil (1-\alpha)(n+1)\rceil,\lfloor \alpha(n+1)\rfloor)$ and $\mathrm{BetaBin}(n_\mathrm{test},\lceil (1-\alpha)(n+1)\rceil,\lfloor \alpha(n+1)\rfloor)$ distribution are plotted on the left. More details in \cref{tab:syn_cqr_qrf}.}
    \label{fig:syn_cqr_qrf}
\end{figure}

The above observations are confirmed by the statistics displayed in \cref{tab:syn_cqr_qrf}. The average empirical coverage $\overline{C}$ after $R$ trials is very close to the target $1-\alpha$ for CQR, while QRF without conformalization undercovers. As expected, $\overline{\delta}$ is close to 10\% for CQR, while QRF is unsurprisingly not able to control $\delta$. The average interval length is a bit more than 8\% shorter in CQR than QRF.

\begin{table}[h]
    \setlength\tabcolsep{5pt}
    \centering
    \begin{tabular}{ccccccccc}
        \toprule
        Method & $n_\mathrm{train}$ & $n$ & $n_\mathrm{val}$ & $1-\alpha$ (\%) & $\overline{C}$ (\%) & $\delta$ (\%) & $\overline{\delta}$ (\%) & Avg. Length \\
        \midrule
        \rowcolor{lavender}
        CQR & 1000 & 1000 & 5000 & 91.21 & 91.22 & 10.00 & 9.70 & 2.15  \\
        QRF & 2000 & 0 & 5000 & 91.21 & 90.23 & 10.00 & 33.20 & 2.34  \\
        \bottomrule
    \end{tabular}
    \caption[Synthetic example benchmark of CQR.]{Statistics corresponding to \cref{fig:syn_cqr_qrf} for the experiment on synthetic data of CQR versus QRF without conformalization. $R=1000$ trials. See \cshref{rem:C_bar} and \cshref{rem:delta_bar} for the definition of $\overline{C}$ and $\overline{\delta}$.}
    \label{tab:syn_cqr_qrf}
\end{table}

An important detail in the setup of the above experiments is that for each of $R$ trials, a new base predictor is fit on the proper training set. As such, \cref{tab:syn_cqr_qrf} demonstrates that the CQR prediction sets are $(\epsilon,\delta)$-tolerance regions on average over the proper training set, i.e. \textit{unconditional} on the base predictor. However, the theoretical results in this thesis hold conditional on the proper training set and thus for \textit{any} base predictor. In the following section we demonstrate on synthetic and real data that CQR produces valid prediction sets for any base predictor, not only on average. 

\subsection{Calibrating a Given Base Predictor}\label{sec:real_data}

We construct tolerance regions by conformalizing $10$ QRF base predictors on synthetic data as well as 4 popular benchmark datasets for regression, namely bike-sharing (\texttt{bike}) \cite{bike}, Tennessee's student teacher achievement ratio (\texttt{star}) \cite{star}, community and crimes (\texttt{community}) \cite{community} and concrete compressive strength (\texttt{concrete}) \cite{concrete}. Note that this is a subset of the 11 benchmark datasets used in \cite{romano2019conformalized}, excluding large datasets for computational reasons. We do not benchmark against QRF without conformalization, since the focus here is to demonstrate post-hoc calibration for any base predictor using CQR. 

Throughout the 10 runs, we randomly reserve 40\% of the available data for the proper training set to train a QRF base predictor. The remaining data is randomly split $R=1000$ times into a calibration set of size equal to 40\% of the total data and a test set equal to 20\% of the total data. The features are standardized to have zero mean and unit variance using sample means and variance computed on the proper training set. The labels are rescaled by dividing by the mean absolute value of the labels in the proper training set. For each trial, the same base predictor is calibrated on a new calibration set and subsequently validated on the test set. This makes a total of 10 runs of $R=1000$ trials on 5 datasets for a total of 50000 experiments.

\cref{fig:real_data} shows histograms of empirical coverage and average interval lengths of $R$ trials for one base predictor. We observe that for each dataset, the histogram of empirical coverage coincides with the plotted $\mathrm{BetaBin}(n_\mathrm{test},\lceil (1-\alpha)(n+1)\rceil,\lfloor \alpha(n+1)\rfloor)$ distribution. For small values of $n_\mathrm{test}$, the $\mathrm{BetaBin}$ distribution is clearly distinct from the plotted $\mathrm{Beta}(\lceil (1-\alpha)(n+1)\rceil,\lfloor \alpha(n+1)\rfloor)$ distribution. The histograms confirm that CQR correctly controls the empirical coverage on each dataset, even in the case of real data where the assumption of iid data is never fully satisfied. 

\begin{figure}[h!]
    \centering
    
    \begin{subfigure}[b]{0.45\linewidth}
        \centering
        \includegraphics[width=\linewidth]{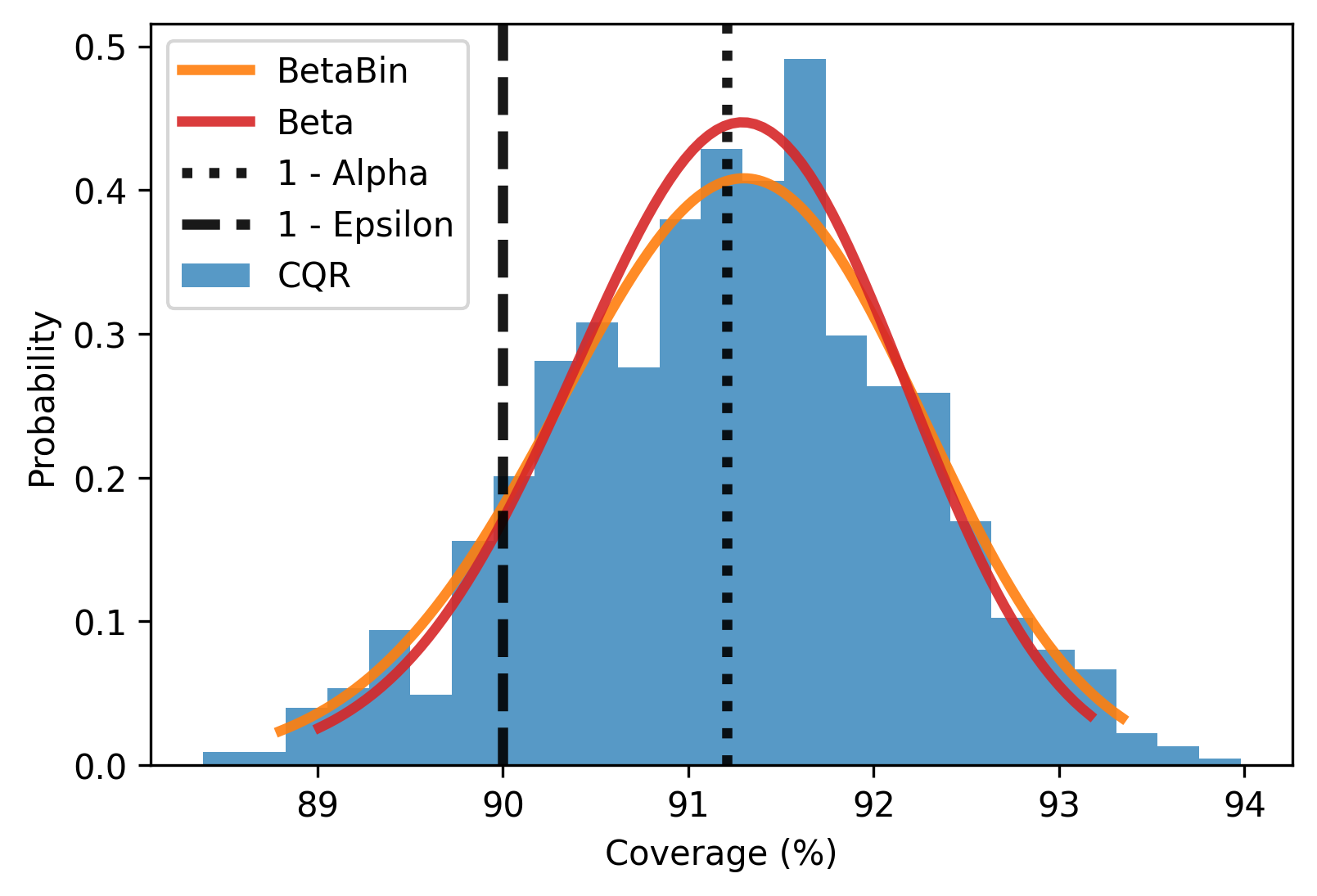}
        \caption{\texttt{synthetic} dataset.}
        \label{fig:toy_cov}
    \end{subfigure}
    \begin{subfigure}[b]{0.45\linewidth}
        \centering
        \includegraphics[width=\linewidth]{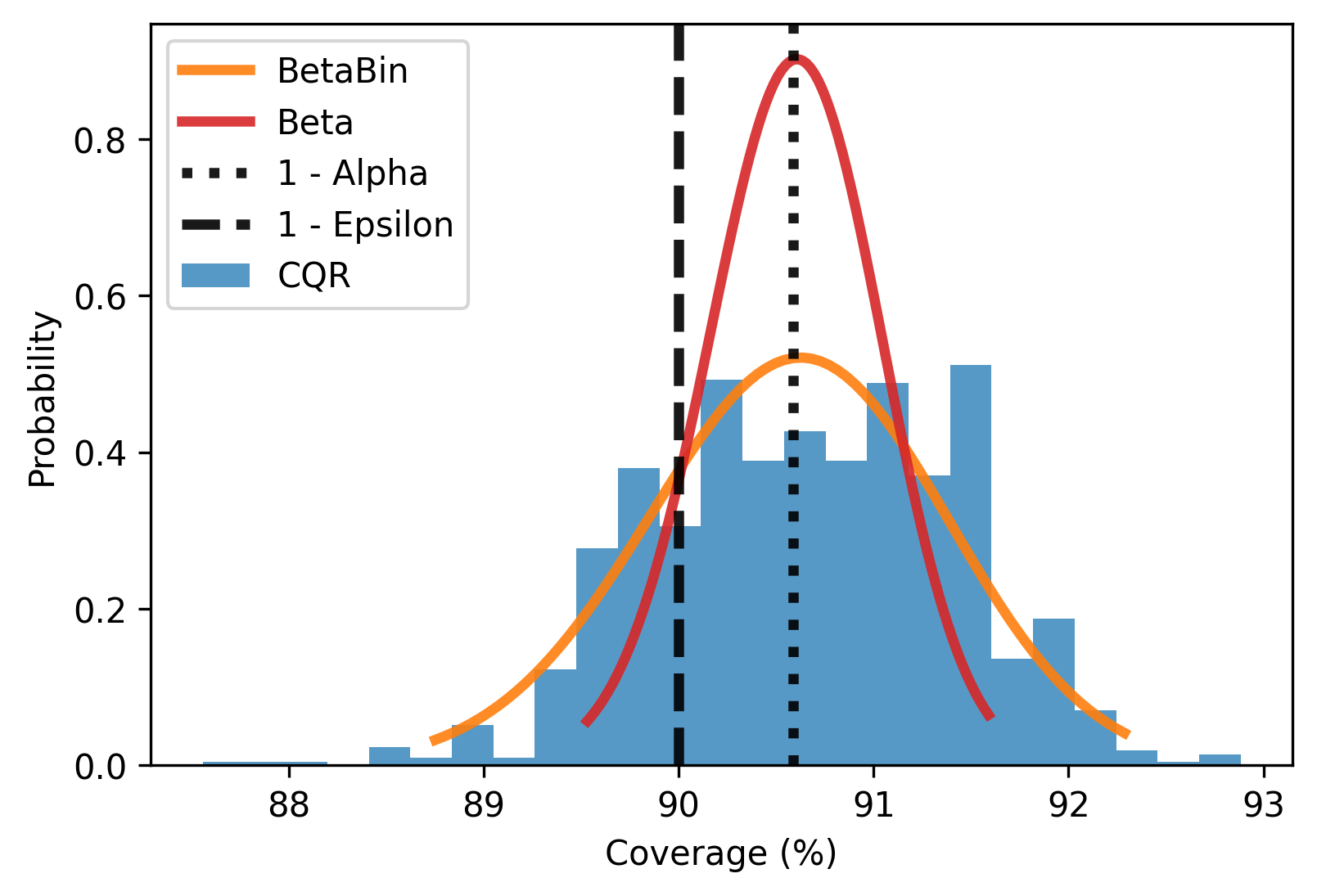}
        \caption{\texttt{bike} dataset.}
        \label{fig:bike_cov}
    \end{subfigure}\\

    \begin{subfigure}[b]{0.45\linewidth}
        \centering
        \includegraphics[width=\linewidth]{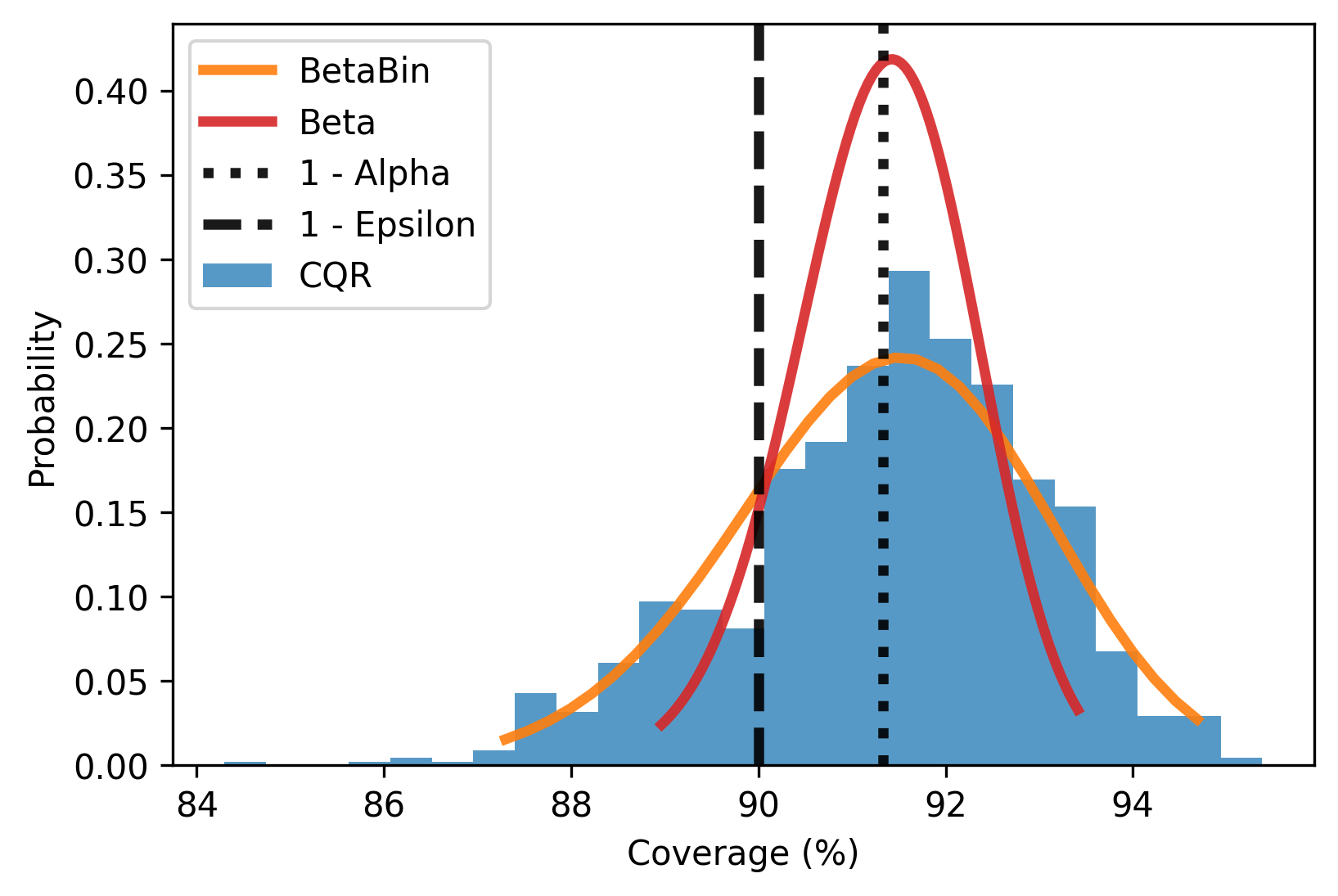}
        \caption{\texttt{star} dataset.}
        \label{fig:star_cov}
    \end{subfigure}
    \begin{subfigure}[b]{0.45\linewidth}
        \centering
        \includegraphics[width=\linewidth]{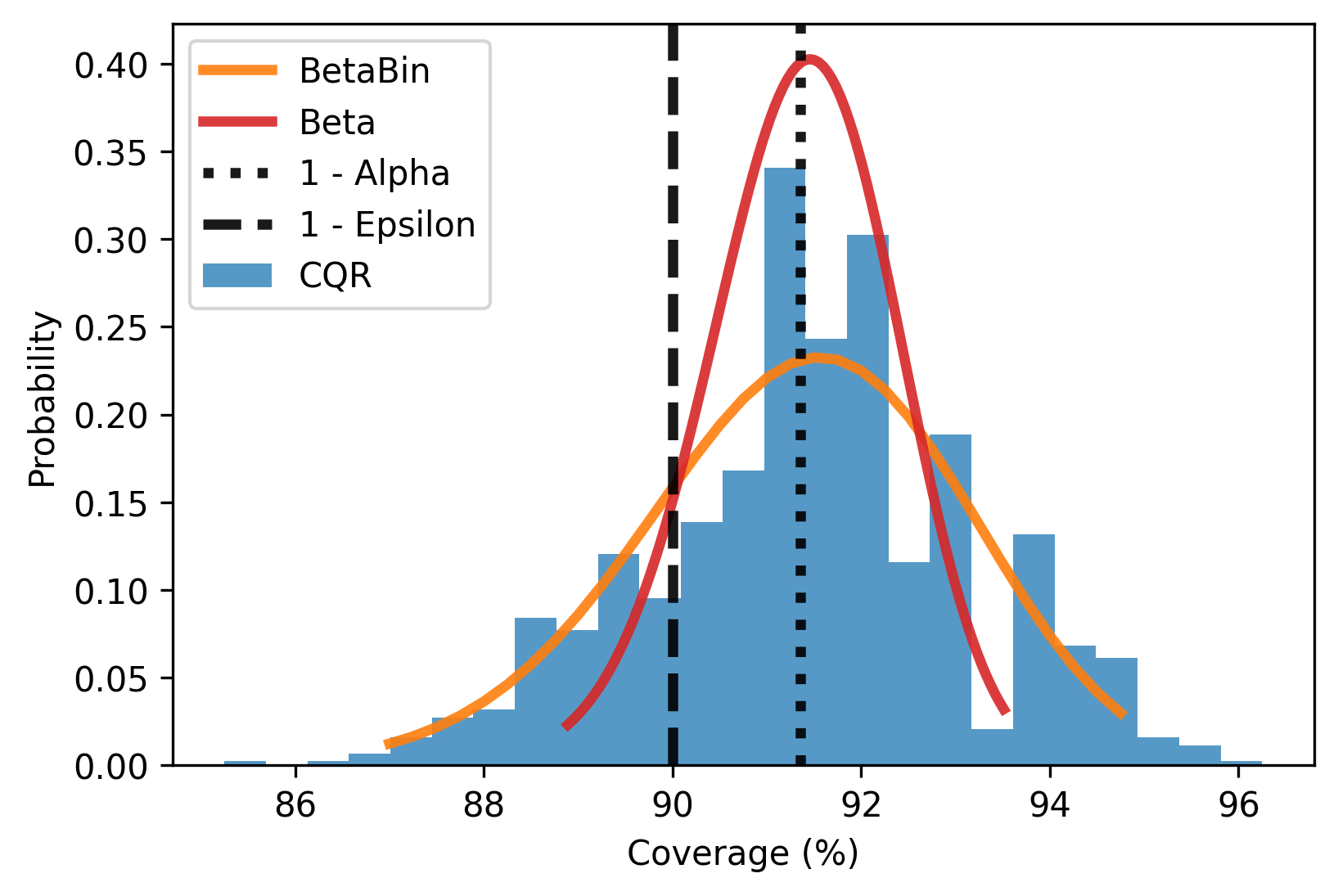}
        \caption{\texttt{community} dataset.}
        \label{fig:com_cov}
    \end{subfigure}\\

    \begin{subfigure}[b]{0.45\linewidth}
        \centering
        \includegraphics[width=\linewidth]{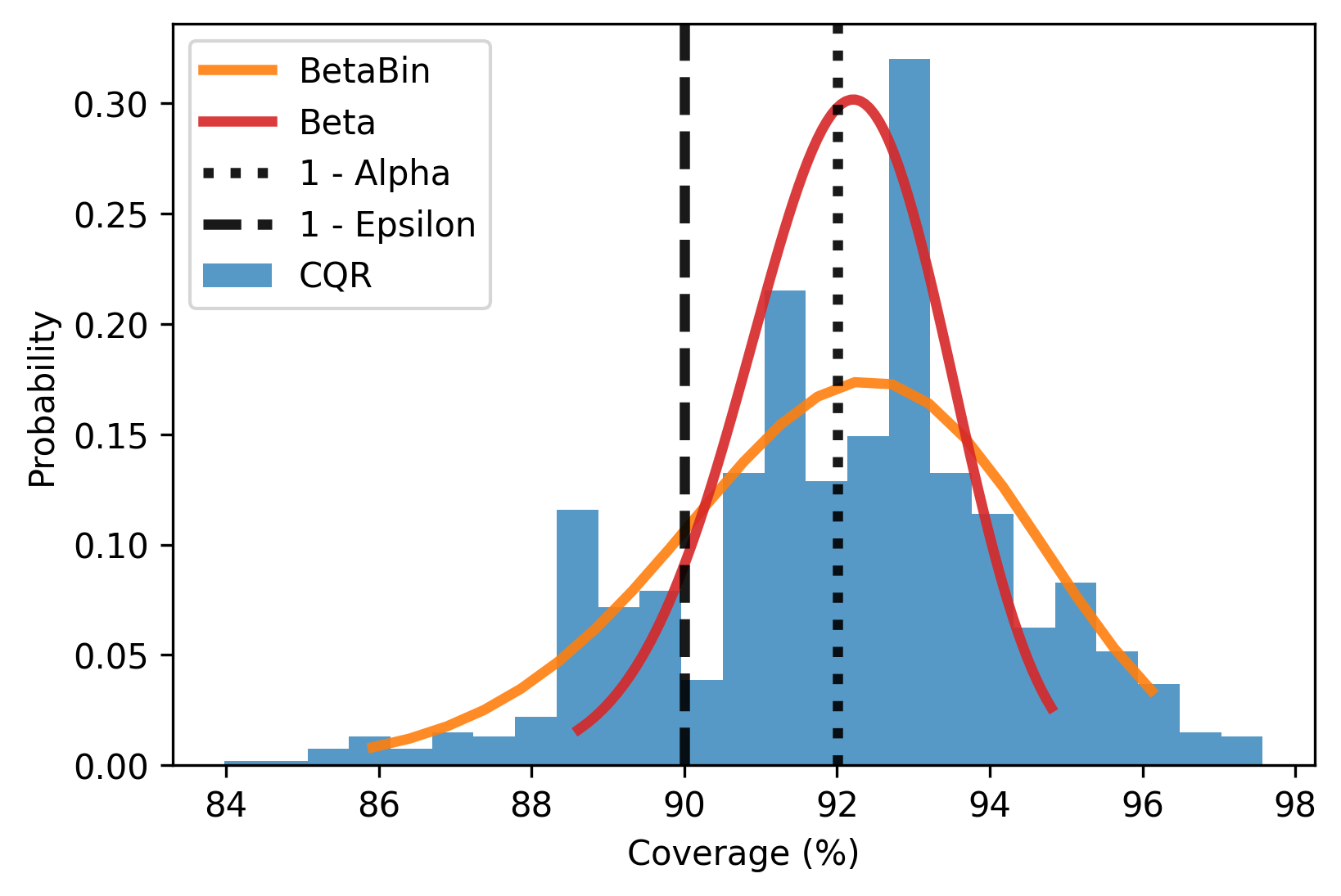}
        \caption{\texttt{concrete} dataset.}
        \label{fig:con_cov}
    \end{subfigure}

    \caption[Empirical coverage of CQR on real datasets.]{Histograms of empirical coverage of CQR prediction sets together with the $\mathrm{Beta}(\lceil (1-\alpha)(n+1)\rceil,\lfloor \alpha(n+1)\rfloor)$ and $\mathrm{BetaBin}(n_\mathrm{test},\lceil (1-\alpha)(n+1)\rceil,\lfloor \alpha(n+1)\rfloor)$ distribution. Note CQR uses 1 QRF base predictor and $R=1000$ random splits of the remaining data into calibration and test set. See \cref{tab:real_cqr} for more details.}
    \label{fig:real_data}
\end{figure}

\cref{tab:real_cqr} reports relevant statistics to \cref{fig:real_data} on average over 10 base predictors. We observe that the average empirical coverage on each dataset is very close to the target coverage. Furthermore, $\overline{\delta}$ is reasonably close to 10\% in most cases, showing CQR produces valid tolerance regions. The prediction sets for the \texttt{bike} and \texttt{community} datasets are too conservative, possibly because the iid assumption is not fully satisfied. 

Finally, we obtain reasonably small interval lengths for most datasets when comparing to the baseline interval length set in \cite{romano2019conformalized}. However, the average interval length for the \texttt{bike} and \texttt{concrete} datasets have almost doubled as compared to \cite{romano2019conformalized}. This might be due to the difference in base predictor and hyper-parameter tuning, but in the case of \texttt{concrete} also partly due to a higher target coverage, as the difference with the fixed $\alpha=0.1$ used in \cite{romano2019conformalized} is almost 2\%. 

\begin{table}[h]
\setlength\tabcolsep{4pt}
\centering
\begin{tabular}{lccccccccc}
\toprule
Dataset & $p$ & $n_\mathrm{train}$ & $n$ & $n_\mathrm{val}$ & $1-\alpha$ (\%) & $\overline{C}$ (\%) & $\delta$ (\%) & $\overline{\delta}$ (\%) & Avg. Length \\
\midrule
\rowcolor{lavender}
\texttt{synthetic} & 1 & 1000 & 1000 & 5000 & 91.21 & 91.22 & 10.00 & 9.78 & 2.18   \\
\texttt{bike} & 18 & 4354 & 4354 & 2178 & 90.59 & 90.65 & 10.00 & 7.75 & 1.13   \\
\rowcolor{lavender}
\texttt{star} & 39 & 864 & 864 & 433 & 91.33 & 91.35 & 10.00 & 10.16 & 0.18   \\
\texttt{community} & 100 & 797 & 797 & 400 & 91.35 & 91.50 & 10.00 & 7.79 & 1.61 \\
\rowcolor{lavender}
\texttt{concrete} & 8 & 412 & 412 & 206 & 92.01 & 92.04 & 10.00 & 10.80 & 1.05   \\
\bottomrule
\end{tabular}
\caption[CQR on real datasets.]{Conformalizing $10$ QRF base predictors each with $R=1000$ trials on a synthetic dataset and 4 real datasets to obtain $(0.1,0.1)$-tolerance regiona. See \cshref{rem:C_bar} and \cshref{rem:delta_bar} for the definition of $\overline{C}$ and $\overline{\delta}$. See \cref{fig:real_data} for a histogram of the empirical coverage of $1$ base predictor.}
\label{tab:real_cqr}
\end{table}

\section{Conclusion}\label{sec:conclusion}

In this thesis, we study marginal coverage and tolerance regions in the context of split conformal prediction, classical tolerance predictors and distribution-free risk control. Although marginal coverage is a widely studied finite-sample guarantee, it is often not the objective of interest in practice. Meanwhile, tolerance regions have received an increasing amount of interest in recent years. We reformulate tolerance regions in the context of split conformal prediction, that have been studied by \cite{vovk2012conditional}, in the language of nested prediction sets of \cite{gupta2022nested}. Furthermore, we extend on the duality between the two finite-sample guarantees, showing that a prediction set that satisfies marginal coverage is an $(\epsilon,\delta)$-tolerance region for certain values of $\epsilon,\delta$, and that a split conformal tolerance region automatically satisfies marginal coverage for certain $\alpha$. How tolerance regions relate to other finite-sample guarantees, such as (class) conditional coverage, is an open question.    

Furthermore, \cite{vovk2012conditional} connected split conformal prediction to the rich literature on classical tolerance predictors, in particular by interpreting split conformal prediction as a `conditional version' of the procedures in \cite{wilks1941determination, scheffe1945non}. That is, if the sample $\{(X_i,Y_i)\}^{n+1}_{i=1}$ is iid, then the non-conformity scores in split conformal prediction can be interpreted as an iid sample from a univariate population so that we can formulate a split conformal prediction set based on univariate order statistics. Drawing from this connection, we prove that the distribution of coverage of a split conformal prediction set, conditional on the calibration set, stochastically dominates the Beta distribution. Whether a similar analytical distribution based on order statistics can be formulated in distribution-free risk control is an interesting direction for future research.

In the context of distribution-free risk control, we demonstrate that split conformal prediction sets satisfying marginal coverage are contained as a special case in CRC \cite{angelopoulos2022conformal} for risks equal to the expectation of the 0-1 loss. Furthermore, for these risks the prediction sets obtained through UCB calibration \cite{bates2021distribution} are identical to the tolerance regions obtained through split conformal prediction. Finally, LTT \cite{angelopoulos2021learn} yields identical prediction sets to UCB calibration and thus split conformal prediction for risks equal to the expectation of the 0-1 loss, i.e. when the objective of interest is tolerance regions. Therefore, in applications where the natural notion of error is miscoverage, we see no reason to prefer any of the above algorithms over split conformal prediction. An unexplored avenue in distribution-free risk control is the relation of conformal risk control to upper confidence bound calibration and whether a unified procedure is able to generalize split conformal prediction with respect to both marginal coverage and tolerance regions. In other words, whether and in what ways \cshref{prop:split_mc_tol} generalizes to monotone risk functions.

Finally, we obtain empirical results in line with theory through a case study of CQR on synthetic and real datasets, identical to the datasets used in \cite{romano2019conformalized}. We demonstrate that we are able to strictly control the coverage of a prediction set so that we obtain an $(\epsilon,\delta)$-tolerance region, while the prediction set sizes roughly coincide with the baseline set in \cite{romano2019conformalized}. Although a more complex base predictor and tuning process might improve efficiency, our experiments illustrate prediction set validity independent of the accuracy of the base predictor. Combining split conformal prediction with efficient base predictors in various contexts is a research direction of great practical importance. 


\addcontentsline{toc}{section}{References}
\bibliographystyle{unsrtnat}
\bibliography{references}

\begin{thebibliography}{87}
\providecommand{\natexlab}[1]{#1}
\providecommand{\url}[1]{\texttt{#1}}
\expandafter\ifx\csname urlstyle\endcsname\relax
  \providecommand{\doi}[1]{doi: #1}\else
  \providecommand{\doi}{doi: \begingroup \urlstyle{rm}\Url}\fi

\bibitem[Vovk et~al.(2005)Vovk, Gammerman, and Shafer]{vovk2005algorithmic}
V.~Vovk, A.~Gammerman, and G.~Shafer.
\newblock \emph{Algorithmic Learning in a Random World}.
\newblock Springer, New York, 2005.
\newblock \doi{10.1007/b106715}.

\bibitem[Vovk et~al.(2009)Vovk, Nouretdinov, and Gammerman]{vovk2009line}
V.~Vovk, I.~Nouretdinov, and A.~Gammerman.
\newblock On-line predictive linear regression.
\newblock \emph{The Annals of Statistics}, 37\penalty0 (3):\penalty0
  1566--1590, 2009.
\newblock \doi{10.1214/08-AOS622}.

\bibitem[Lei et~al.(2018)Lei, G’Sell, Rinaldo, Tibshirani, and
  Wasserman]{lei2018distribution}
J.~Lei, M.~G’Sell, A.~Rinaldo, R.~J. Tibshirani, and L.~Wasserman.
\newblock Distribution-free predictive inference for regression.
\newblock \emph{Journal of the American Statistical Association}, 113\penalty0
  (523):\penalty0 1094--1111, 2018.
\newblock \doi{10.1080/01621459.2017.1307116}.

\bibitem[Papadopoulos et~al.(2002)Papadopoulos, Proedrou, Vovk, and
  Gammerman]{papadopoulos2002inductive}
H.~Papadopoulos, K.~Proedrou, V.~Vovk, and A.~Gammerman.
\newblock Inductive confidence machines for regression.
\newblock In \emph{Machine Learning: European Conference on Machine Learning},
  pages 345--356, 2002.
\newblock \doi{10.1007/3-540-36755-1_29}.

\bibitem[Angelopoulos and Bates(2022)]{angelopoulos2021gentle}
A.~N Angelopoulos and S.~Bates.
\newblock A gentle introduction to conformal prediction and distribution-free
  uncertainty quantification.
\newblock \emph{arXiv preprint arXiv:2107.07511}, 2022.
\newblock \doi{10.48550/arXiv.2107.07511}.

\bibitem[Shafer and Vovk(2008)]{shafer2008tutorial}
G.~Shafer and V.~Vovk.
\newblock A tutorial on conformal prediction.
\newblock \emph{Journal of Machine Learning Research}, 9\penalty0
  (12):\penalty0 371--421, 2008.

\bibitem[Alvarsson et~al.(2021)Alvarsson, McShane, Norinder, and
  Spjuth]{alvarsson2021predicting}
J.~Alvarsson, S.~A. McShane, U.~Norinder, and O.~Spjuth.
\newblock Predicting with confidence: using conformal prediction in drug
  discovery.
\newblock \emph{Journal of Pharmaceutical Sciences}, 110\penalty0 (1):\penalty0
  42--49, 2021.
\newblock \doi{10.1016/j.xphs.2020.09.055}.

\bibitem[Angelopoulos et~al.(2020)Angelopoulos, Bates, Malik, and
  Jordan]{angelopoulos2020uncertainty}
A.~N. Angelopoulos, S.~Bates, J.~Malik, and M.~I. Jordan.
\newblock Uncertainty sets for image classifiers using conformal prediction.
\newblock \emph{arXiv preprint arXiv:2009.14193}, 2020.
\newblock \doi{10.48550/arXiv.2009.14193}.

\bibitem[Fisch et~al.(2021{\natexlab{a}})Fisch, Schuster, Jaakkola, and
  Barzilay]{fisch2020efficient}
A.~Fisch, T.~Schuster, T.~Jaakkola, and R.~Barzilay.
\newblock Efficient conformal prediction via cascaded inference with expanded
  admission.
\newblock \emph{arXiv preprint arXiv:2007.03114}, 2021{\natexlab{a}}.
\newblock \doi{10.48550/arXiv.2007.03114}.

\bibitem[Fisch et~al.(2021{\natexlab{b}})Fisch, Schuster, Jaakkola, and
  Barzilay]{fisch2021few}
A.~Fisch, T.~Schuster, T.~Jaakkola, and R.~Barzilay.
\newblock Few-shot conformal prediction with auxiliary tasks.
\newblock In \emph{Proceedings of Machine Learning Research}, volume 139, pages
  3329--3339, 2021{\natexlab{b}}.

\bibitem[Schuster et~al.(2021)Schuster, Fisch, Jaakkola, and
  Barzilay]{schuster2021consistent}
T.~Schuster, A.~Fisch, T.~Jaakkola, and R.~Barzilay.
\newblock Consistent accelerated inference via confident adaptive transformers.
\newblock \emph{arXiv preprint arXiv:2104.08803}, 2021.
\newblock \doi{10.48550/arXiv.2104.08803}.

\bibitem[Cherian and Bronner(2020)]{cherian2020washington}
J.~Cherian and L.~Bronner.
\newblock How the washington post estimates outstanding votes for the 2020
  presidential election.
\newblock \emph{Washington Post}, 2020.
\newblock URL
  \url{https://s3.us-east-1.amazonaws.com/elex-models-prod/2020-general/write-up/election_model_writeup.pdf}.

\bibitem[Gupta et~al.(2022)Gupta, Kuchibhotla, and Ramdas]{gupta2022nested}
C.~Gupta, A.~K. Kuchibhotla, and A.~Ramdas.
\newblock Nested conformal prediction and quantile out-of-bag ensemble methods.
\newblock \emph{Pattern Recognition}, 127\penalty0 (108496), 2022.
\newblock \doi{10.1016/j.patcog.2021.108496}.

\bibitem[Vovk(2013)]{vovk2012conditional}
V.~Vovk.
\newblock Conditional validity of inductive conformal predictors.
\newblock \emph{Machine Learning}, 92:\penalty0 349--376, 2013.
\newblock \doi{10.1007/s10994-013-5355-6}.

\bibitem[Valiant(1984)]{valiant1984theory}
L.~G. Valiant.
\newblock A theory of the learnable.
\newblock \emph{Communications of the ACM}, 27\penalty0 (11):\penalty0
  1134--1142, 1984.
\newblock \doi{10.1145/1968.1972}.

\bibitem[Park et~al.(2020)Park, Bastani, Matni, and Lee]{park2019pac}
S.~Park, O.~Bastani, N.~Matni, and I.~Lee.
\newblock {PAC} confidence sets for deep neural networks via calibrated
  prediction.
\newblock \emph{arXiv preprint arXiv:2001.00106}, 2020.
\newblock \doi{10.48550/arXiv.2001.00106}.

\bibitem[Park et~al.(2021)Park, Li, Lee, and Bastani]{park2020pac}
S.~Park, S.~Li, I.~Lee, and O.~Bastani.
\newblock {PAC} confidence predictions for deep neural network classifiers.
\newblock \emph{arXiv preprint arXiv:2011.00716}, 2021.
\newblock \doi{10.48550/arXiv.2011.00716}.

\bibitem[Vapnik(1999)]{vapnik1999overview}
V.~N. Vapnik.
\newblock An overview of statistical learning theory.
\newblock \emph{IEEE transactions on neural networks}, 10\penalty0
  (5):\penalty0 988--999, 1999.
\newblock \doi{10.1109/72.788640}.

\bibitem[Wilks(1941)]{wilks1941determination}
S.~S. Wilks.
\newblock Determination of sample sizes for setting tolerance limits.
\newblock \emph{The Annals of Mathematical Statistics}, 12\penalty0
  (1):\penalty0 91--96, 1941.
\newblock \doi{10.1214/aoms/1177731788}.

\bibitem[Wald(1943)]{wald1943extension}
A.~Wald.
\newblock An extension of {W}ilks' method for setting tolerance limits.
\newblock \emph{The Annals of Mathematical Statistics}, 14\penalty0
  (1):\penalty0 45--55, 1943.
\newblock \doi{10.1214/aoms/1177731491}.

\bibitem[Scheffe and Tukey(1945)]{scheffe1945non}
H.~Scheffe and J.~W. Tukey.
\newblock Non-parametric estimation. {I}. {V}alidation of order statistics.
\newblock \emph{The Annals of Mathematical Statistics}, 16\penalty0
  (2):\penalty0 187--192, 1945.
\newblock \doi{10.1214/aoms/1177731119}.

\bibitem[Tukey(1947)]{tukey1947non}
J.~W. Tukey.
\newblock Non-parametric estimation {II}. {S}tatistically equivalent blocks and
  tolerance regions -- {T}he continuous case.
\newblock \emph{The Annals of Mathematical Statistics}, 18\penalty0
  (4):\penalty0 529--539, 1947.
\newblock \doi{10.1214/aoms/1177730343}.

\bibitem[Tukey(1948)]{tukey1948nonparametric}
J.~W. Tukey.
\newblock Nonparametric estimation, {III}. {S}tatistically equivalent blocks
  and multivariate tolerance regions -- {T}he discontinuous case.
\newblock \emph{The Annals of Mathematical Statistics}, 19\penalty0
  (1):\penalty0 30--39, 1948.
\newblock \doi{10.1214/aoms/1177730287}.

\bibitem[Fraser and Wormleighton(1951)]{fraser1951nonparametric}
D.~A.~S. Fraser and R.~Wormleighton.
\newblock Nonparametric estimation {IV}.
\newblock \emph{The Annals of Mathematical Statistics}, 22\penalty0
  (2):\penalty0 294--298, 1951.
\newblock \doi{10.1214/aoms/1177729650}.

\bibitem[Fraser(1951)]{fraser1951sequentially}
D.~A.~S. Fraser.
\newblock Sequentially determined statistically equivalent blocks.
\newblock \emph{The Annals of Mathematical Statistics}, 22\penalty0
  (3):\penalty0 372--381, 1951.
\newblock \doi{10.1214/aoms/1177729583}.

\bibitem[Fraser(1953)]{fraser1953nonparametric}
D.~A.~S. Fraser.
\newblock Nonparametric tolerance regions.
\newblock \emph{The Annals of Mathematical Statistics}, 24\penalty0
  (1):\penalty0 44--55, 1953.
\newblock \doi{10.1214/aoms/1177729081}.

\bibitem[Kemperman(1956)]{kemperman1956generalized}
J.~H.~B. Kemperman.
\newblock Generalized tolerance limits.
\newblock \emph{The Annals of Mathematical Statistics}, 27\penalty0
  (1):\penalty0 180--186, 1956.
\newblock \doi{10.1214/aoms/1177728356}.

\bibitem[Guttman(1970)]{guttman1970statistical}
I.~Guttman.
\newblock \emph{Statistical Tolerance Regions: Classical and Bayesian}.
\newblock Griffin, London, 1970.

\bibitem[Krishnamoorthy and Mathew(2009)]{krishnamoorthy2009statistical}
K.~Krishnamoorthy and T.~Mathew.
\newblock \emph{Statistical Tolerance Regions: Theory, Applications, and
  Computation}.
\newblock Wiley, New York, 2009.

\bibitem[Angelopoulos et~al.(2022{\natexlab{a}})Angelopoulos, Bates, Fisch,
  Lei, and Schuster]{angelopoulos2022conformal}
A.~N. Angelopoulos, S.~Bates, A.~Fisch, L.~Lei, and T.~Schuster.
\newblock Conformal risk control.
\newblock \emph{arXiv preprint arXiv:2208.02814}, 2022{\natexlab{a}}.
\newblock \doi{10.48550/arXiv.2208.02814}.

\bibitem[Bates et~al.(2021)Bates, Angelopoulos, Lei, Malik, and
  Jordan]{bates2021distribution}
S.~Bates, A.~N. Angelopoulos, L.~Lei, J.~Malik, and M.~I. Jordan.
\newblock Distribution-free, risk-controlling prediction sets.
\newblock \emph{Journal of the ACM}, 68\penalty0 (6):\penalty0 1--34, 2021.
\newblock \doi{10.1145/3478535}.

\bibitem[Angelopoulos et~al.(2022{\natexlab{b}})Angelopoulos, Bates, Cand{è}s,
  Jordan, and Lei]{angelopoulos2021learn}
A.~N. Angelopoulos, S.~Bates, E.~J. Cand{è}s, M.~I. Jordan, and L.~Lei.
\newblock Learn then test: Calibrating predictive algorithms to achieve risk
  control.
\newblock \emph{arXiv preprint arXiv:2110.01052}, 2022{\natexlab{b}}.
\newblock \doi{10.48550/arXiv.2110.01052}.

\bibitem[Romano et~al.(2019)Romano, Patterson, and
  Cand{è}s]{romano2019conformalized}
Y.~Romano, E.~Patterson, and E.~J. Cand{è}s.
\newblock Conformalized quantile regression.
\newblock In \emph{Advances in Neural Information Processing Systems},
  volume~32, pages 3543--3553, 2019.

\bibitem[Meinshausen(2006)]{meinshausen2006quantile}
N.~Meinshausen.
\newblock Quantile regression forests.
\newblock \emph{The Journal of Machine Learning Research}, 7\penalty0
  (35):\penalty0 983--999, 2006.

\bibitem[Tibshirani et~al.(2019)Tibshirani, Barber, Cand{è}s, and
  Ramdas]{tibshirani2019conformal}
R.~J. Tibshirani, R.~F. Barber, E.~J. Cand{è}s, and A.~Ramdas.
\newblock Conformal prediction under covariate shift.
\newblock In \emph{Advances in neural information processing systems},
  volume~32, pages 2530--2540, 2019.

\bibitem[Park et~al.(2022)Park, Dobriban, Lee, and Bastani]{park2021pac}
S.~Park, E.~Dobriban, I.~Lee, and O.~Bastani.
\newblock {PAC} prediction sets under covariate shift.
\newblock \emph{arXiv preprint arXiv:2106.09848}, 2022.
\newblock \doi{10.48550/arXiv.2106.09848}.

\bibitem[Qiu et~al.(2022)Qiu, Dobriban, and Tchetgen]{qiu2022distribution}
H.~Qiu, E.~Dobriban, and E.~T. Tchetgen.
\newblock Distribution-free prediction sets adaptive to unknown covariate
  shift.
\newblock \emph{arXiv preprint arXiv:2203.06126}, 2022.
\newblock \doi{10.48550/arXiv.2203.06126}.

\bibitem[Podkopaev and Ramdas(2021)]{podkopaev2021distribution}
A.~Podkopaev and A.~Ramdas.
\newblock Distribution-free uncertainty quantification for classification under
  label shift.
\newblock In \emph{Proceedings of Machine Learning Research}, volume 161, pages
  844--853, 2021.

\bibitem[Cauchois et~al.(2020)Cauchois, Gupta, Ali, and
  Duchi]{cauchois2020robust}
M.~Cauchois, S.~Gupta, A.~Ali, and J.~C. Duchi.
\newblock Robust validation: Confident predictions even when distributions
  shift.
\newblock \emph{arXiv preprint arXiv:2008.04267}, 2020.
\newblock \doi{10.48550/arXiv.2008.04267}.

\bibitem[Gibbs and Cand{è}s(2021)]{gibbs2021adaptive}
I.~Gibbs and E.~J. Cand{è}s.
\newblock Adaptive conformal inference under distribution shift.
\newblock In \emph{Advances in Neural Information Processing Systems},
  volume~34, pages 1660--1672, 2021.

\bibitem[Barber et~al.(2022)Barber, Cand{è}s, Ramdas, and
  Tibshirani]{barber2022conformal}
R.~F. Barber, E.~J. Cand{è}s, A.~Ramdas, and R.~J. Tibshirani.
\newblock Conformal prediction beyond exchangeability.
\newblock \emph{arXiv preprint arXiv:2202.13415}, 2022.
\newblock \doi{10.48550/arXiv.2202.13415}.

\bibitem[Lei and Cand{è}s()]{lei2021conformal}
L.~Lei and E.~J. Cand{è}s.
\newblock Conformal inference of counterfactuals and individual treatment
  effects.
\newblock \emph{Journal of the Royal Statistical Society: Series B (Statistical
  Methodology)}, 83\penalty0 (5):\penalty0 911--938.
\newblock \doi{10.1111/rssb.12445}.

\bibitem[Yin et~al.(2022)Yin, Shi, Wang, and Blei]{yin2022conformal}
M.~Yin, C.~Shi, Y.~Wang, and D.~M. Blei.
\newblock Conformal sensitivity analysis for individual treatment effects.
\newblock \emph{Journal of the American Statistical Association}, pages 1--30,
  2022.
\newblock \doi{10.1080/01621459.2022.2102503}.

\bibitem[Chernozhukov et~al.(2021)Chernozhukov, W{\"u}thrich, and
  Zhu]{chernozhukov2021exact}
V.~Chernozhukov, K.~W{\"u}thrich, and Y.~Zhu.
\newblock An exact and robust conformal inference method for counterfactual and
  synthetic controls.
\newblock \emph{Journal of the American Statistical Association}, 116\penalty0
  (536):\penalty0 1849--1864, 2021.
\newblock \doi{10.1080/01621459.2021.1920957}.

\bibitem[Cand{\`e}s et~al.(2022)Cand{\`e}s, Lei, and
  Ren]{candes2021conformalized}
E.~J. Cand{\`e}s, L.~Lei, and Z.~Ren.
\newblock Conformalized survival analysis.
\newblock \emph{arXiv preprint arXiv:2103.09763}, 2022.
\newblock \doi{10.48550/arXiv.2103.09763}.

\bibitem[Angelopoulos et~al.(2021)Angelopoulos, Bates, Zrnic, and
  Jordan]{angelopoulos2021private}
A.~N. Angelopoulos, S.~Bates, T.~Zrnic, and M.~I. Jordan.
\newblock Private prediction sets.
\newblock \emph{arXiv preprint arXiv:2102.06202}, 2021.
\newblock \doi{10.48550/arXiv.2102.06202}.

\bibitem[Fannjiang et~al.(2022)Fannjiang, Bates, Angelopoulos, Listgarten, and
  Jordan]{fannjiang2022conformal}
C.~Fannjiang, S.~Bates, A.~N. Angelopoulos, J.~Listgarten, and M.~I. Jordan.
\newblock Conformal prediction for the design problem.
\newblock \emph{arXiv preprint arXiv:2202.03613}, 2022.
\newblock \doi{10.48550/arXiv.2202.03613}.

\bibitem[Mao et~al.(2020)Mao, Martin, and Reich]{mao2020valid}
H.~Mao, R.~Martin, and B.~Reich.
\newblock Valid model-free spatial prediction.
\newblock \emph{arXiv preprint arXiv:2006.15640}, 2020.
\newblock \doi{10.48550/arXiv.2006.15640}.

\bibitem[Chernozhukov et~al.(2018)Chernozhukov, W{\"u}thrich, and
  Yinchu]{chernozhukov2018exact}
V.~Chernozhukov, K.~W{\"u}thrich, and Z.~Yinchu.
\newblock Exact and robust conformal inference methods for predictive machine
  learning with dependent data.
\newblock In \emph{Proceedings of Machine Learning Research}, volume~75, pages
  732--749, 2018.

\bibitem[Dunn et~al.(2022)Dunn, Wasserman, and Ramdas]{dunn2018distribution}
R.~Dunn, L.~Wasserman, and A.~Ramdas.
\newblock Distribution-free prediction sets for two-layer hierarchical models.
\newblock \emph{arXiv preprint arXiv:1809.07441}, 2022.
\newblock \doi{10.48550/arXiv.1809.07441}.

\bibitem[Oliveira et~al.(2022)Oliveira, Orenstein, Ramos, and
  Romano]{oliveira2022split}
R.~I. Oliveira, P.~Orenstein, T.~Ramos, and J.~V. Romano.
\newblock Split conformal prediction for dependent data.
\newblock \emph{arXiv preprint arXiv:2203.15885}, 2022.
\newblock \doi{10.48550/arXiv.2203.15885}.

\bibitem[Taufiq et~al.(2022)Taufiq, Ton, Cornish, Teh, and
  Doucet]{taufiq2022conformal}
M.~F. Taufiq, J.~Ton, R.~Cornish, Y.~W. Teh, and A.~Doucet.
\newblock Conformal off-policy prediction in contextual bandits.
\newblock \emph{arXiv preprint arXiv:2206.04405}, 2022.
\newblock \doi{10.48550/arXiv.2206.04405}.

\bibitem[Fraser(1957)]{fraser1956nonparametric}
D.~A.~S. Fraser.
\newblock \emph{Nonparametric Methods in Statistics.}
\newblock Wiley, New York, 1957.

\bibitem[Lei and Wasserman(2014)]{lei2014distribution}
J.~Lei and L.~Wasserman.
\newblock Distribution-free prediction bands for non-parametric regression.
\newblock \emph{Journal of the Royal Statistical Society: Series B (Statistical
  Methodology)}, 76\penalty0 (1):\penalty0 71--96, 2014.
\newblock \doi{10.1111/rssb.12021}.

\bibitem[Izbicki et~al.(2019)Izbicki, Shimizu, and Stern]{izbicki2019flexible}
R.~Izbicki, G.~T. Shimizu, and R.~B. Stern.
\newblock Flexible distribution-free conditional predictive bands using density
  estimators.
\newblock \emph{arXiv preprint arXiv:1910.05575}, 2019.
\newblock \doi{10.48550/arXiv.1910.05575}.

\bibitem[Guan(2020)]{guan2020conformal}
L.~Guan.
\newblock Conformal prediction with localization.
\newblock \emph{arXiv preprint arXiv:1908.08558}, 2020.
\newblock \doi{10.48550/arXiv.1908.08558}.

\bibitem[Romano et~al.(2020{\natexlab{a}})Romano, Sesia, and
  Cand{è}s]{romano2020classification}
Y.~Romano, M.~Sesia, and E.~J. Cand{è}s.
\newblock Classification with valid and adaptive coverage.
\newblock In \emph{Advances in Neural Information Processing Systems},
  volume~33, pages 3581--3591, 2020{\natexlab{a}}.

\bibitem[Cauchois et~al.(2021)Cauchois, Gupta, and Duchi]{cauchois2021knowing}
M.~Cauchois, S.~Gupta, and J.~C. Duchi.
\newblock Knowing what you know: {V}alid and validated confidence sets in
  multiclass and multilabel prediction.
\newblock \emph{Journal of Machine Learning Research}, 22\penalty0
  (81):\penalty0 1--42, 2021.

\bibitem[Barber et~al.(2021{\natexlab{a}})Barber, Cand{è}s, Ramdas, and
  Tibshirani]{foygel2021limits}
R.~F. Barber, E.~J. Cand{è}s, A.~Ramdas, and R.~J. Tibshirani.
\newblock The limits of distribution-free conditional predictive inference.
\newblock \emph{Information and Inference: A Journal of the IMA}, 10\penalty0
  (2):\penalty0 455--482, 2021{\natexlab{a}}.
\newblock \doi{10.1093/imaiai/iaaa017}.

\bibitem[Lei(2014)]{lei2014classification}
J.~Lei.
\newblock Classification with confidence.
\newblock \emph{Biometrika}, 101\penalty0 (4):\penalty0 755--769, 2014.
\newblock \doi{10.1093/biomet/asu038}.

\bibitem[Hechtlinger et~al.(2018)Hechtlinger, P{\'o}czos, and
  Wasserman]{hechtlinger2018cautious}
Y.~Hechtlinger, B.~P{\'o}czos, and L.~Wasserman.
\newblock Cautious deep learning.
\newblock \emph{arXiv preprint arXiv:1805.09460}, 2018.
\newblock \doi{10.48550/arXiv.1805.09460}.

\bibitem[Sadinle et~al.(2019)Sadinle, Lei, and Wasserman]{sadinle2019least}
M.~Sadinle, J.~Lei, and L.~Wasserman.
\newblock Least ambiguous set-valued classifiers with bounded error levels.
\newblock \emph{Journal of the American Statistical Association}, 114\penalty0
  (525):\penalty0 223--234, 2019.
\newblock \doi{10.1080/01621459.2017.1395341}.

\bibitem[Romano et~al.(2020{\natexlab{b}})Romano, Barber, Sabatti, and
  Cand{è}s]{romano2019malice}
Y.~Romano, R.~F. Barber, C.~Sabatti, and E.~J. Cand{è}s.
\newblock With malice towards none: Assessing uncertainty via equalized
  coverage.
\newblock \emph{Harvard Data Science Review}, 2\penalty0 (2),
  2020{\natexlab{b}}.
\newblock \doi{10.1162/99608f92.03f00592}.

\bibitem[Guan and Tibshirani(2022)]{guan2022prediction}
L.~Guan and R.~J. Tibshirani.
\newblock Prediction and outlier detection in classification problems.
\newblock \emph{Journal of the Royal Statistical Society: Series B (Statistical
  Methodology)}, 84\penalty0 (2):\penalty0 524--546, 2022.
\newblock \doi{10.1111/rssb.12443}.

\bibitem[Fraser and Guttman(1956)]{fraser1956tolerance}
D.~A.~S. Fraser and I.~Guttman.
\newblock Tolerance regions.
\newblock \emph{The Annals of Mathematical Statistics}, 27\penalty0
  (1):\penalty0 162--179, 1956.
\newblock \doi{10.1214/aoms/1177728355}.

\bibitem[Vovk(2015)]{vovk2015cross}
V.~Vovk.
\newblock Cross-conformal predictors.
\newblock \emph{Annals of Mathematics and Artificial Intelligence}, 74\penalty0
  (1):\penalty0 9--28, 2015.
\newblock \doi{10.1007/s10472-013-9368-4}.

\bibitem[Barber et~al.(2021{\natexlab{b}})Barber, Cand{è}s, Ramdas, and
  Tibshirani]{barber2021predictive}
R.~F. Barber, E.~J. Cand{è}s, A.~Ramdas, and R.~J. Tibshirani.
\newblock Predictive inference with the jackknife+.
\newblock \emph{The Annals of Statistics}, 49\penalty0 (1):\penalty0 486--507,
  2021{\natexlab{b}}.
\newblock \doi{10.1214/20-AOS1965}.

\bibitem[Johansson et~al.(2014)Johansson, Bostr{\"o}m, L{\"o}fstr{\"o}m, and
  Linusson]{johansson2014regression}
U.~Johansson, H.~Bostr{\"o}m, T.~L{\"o}fstr{\"o}m, and H.~Linusson.
\newblock Regression conformal prediction with random forests.
\newblock \emph{Machine learning}, 97\penalty0 (1):\penalty0 155--176, 2014.
\newblock \doi{10.1007/s10994-014-5453-0}.

\bibitem[Bostr{\"o}m et~al.(2017)Bostr{\"o}m, Linusson, L{\"o}fstr{\"o}m, and
  Johansson]{bostrom2017accelerating}
H.~Bostr{\"o}m, H.~Linusson, T.~L{\"o}fstr{\"o}m, and U.~Johansson.
\newblock Accelerating difficulty estimation for conformal regression forests.
\newblock \emph{Annals of Mathematics and Artificial Intelligence}, 81\penalty0
  (1):\penalty0 125--144, 2017.
\newblock \doi{10.1007/s10472-017-9539-9}.

\bibitem[Linusson et~al.(2020)Linusson, Johansson, and
  Bostr{\"o}m]{linusson2020efficient}
H.~Linusson, U.~Johansson, and H.~Bostr{\"o}m.
\newblock Efficient conformal predictor ensembles.
\newblock \emph{Neurocomputing}, 397:\penalty0 266--278, 2020.
\newblock \doi{10.1016/j.neucom.2019.07.113}.

\bibitem[Kim et~al.(2020)Kim, Xu, and Barber]{kim2020predictive}
B.~Kim, C.~Xu, and R.~F. Barber.
\newblock Predictive inference is free with the jackknife+-after-bootstrap.
\newblock In \emph{Advances in Neural Information Processing Systems},
  volume~33, pages 4138--4149, 2020.

\bibitem[Koenker and Bassett~Jr(1978)]{koenker1978regression}
R.~Koenker and G.~Bassett~Jr.
\newblock Regression quantiles.
\newblock \emph{Econometrica}, 46\penalty0 (1):\penalty0 33--50, 1978.
\newblock \doi{10.2307/1913643}.

\bibitem[Bassett~Jr and Koenker(1982)]{bassett1982empirical}
G.~Bassett~Jr and R.~Koenker.
\newblock An empirical quantile function for linear models with iid errors.
\newblock \emph{Journal of the American Statistical Association}, 77\penalty0
  (378):\penalty0 407--415, 1982.
\newblock \doi{10.1080/01621459.1982.10477826}.

\bibitem[Takeuchi et~al.(2006)Takeuchi, Le, Sears, and
  Smola]{takeuchi2006nonparametric}
I.~Takeuchi, Q.~V. Le, T.~D. Sears, and A.~J. Smola.
\newblock Nonparametric quantile estimation.
\newblock \emph{Journal of Machine Learning Research}, 7\penalty0
  (45):\penalty0 1231--1264, 2006.

\bibitem[Steinwart and Christmann(2011)]{steinwart2011estimating}
I.~Steinwart and A.~Christmann.
\newblock Estimating conditional quantiles with the help of the pinball loss.
\newblock \emph{Bernoulli}, 17\penalty0 (1):\penalty0 211--225, 2011.
\newblock \doi{10.3150/10-BEJ267}.

\bibitem[Hoeffding(1963)]{hoeffding1994probability}
W.~Hoeffding.
\newblock Probability inequalities for sums of bounded random variables.
\newblock \emph{Journal of the American Statistical Association}, 58\penalty0
  (301):\penalty0 13--30, 1963.
\newblock \doi{10.1080/01621459.1963.10500830}.

\bibitem[Bentkus(2004)]{bentkus2004hoeffding}
V.~Bentkus.
\newblock On {H}oeffding’s inequalities.
\newblock \emph{The Annals of Probability}, 32\penalty0 (2):\penalty0
  1650--1673, 2004.
\newblock \doi{10.1214/009117904000000360}.

\bibitem[Bauer(1991)]{bauer1991multiple}
P.~Bauer.
\newblock Multiple testing in clinical trials.
\newblock \emph{Statistics in Medicine}, 10\penalty0 (6):\penalty0 871--890,
  1991.
\newblock \doi{10.1002/sim.4780100609}.

\bibitem[Bonferroni(1936)]{bonferroni1936teoria}
C.~E. Bonferroni.
\newblock Teoria statistica delle classi e calcolo delle probabilita.
\newblock \emph{Pubblicazioni del R Istituto Superiore di Scienze Economiche e
  Commericiali di Firenze}, 8:\penalty0 3--62, 1936.

\bibitem[Holm(1979)]{holm1979simple}
S.~Holm.
\newblock A simple sequentially rejective multiple test procedure.
\newblock \emph{Scandinavian Journal of Statistics}, 6\penalty0 (2):\penalty0
  65--70, 1979.

\bibitem[Fanaee-T and Gama(2013)]{bike}
H.~Fanaee-T and J.~Gama.
\newblock Bike sharing dataset data set.
\newblock \url{https://archive.ics.uci.edu/ml/datasets/bike+sharing+dataset},
  2013.
\newblock Citation request: \cite{fanaee2014event}.

\bibitem[Achilles et~al.(2008)Achilles, Bain, Bellott, Boyd-Zaharias, Finn,
  Folger, Johnston, and Word]{star}
C.~M. Achilles, H.~P. Bain, F.~Bellott, J.~Boyd-Zaharias, J.~Finn, J.~Folger,
  J.~Johnston, and E.~Word.
\newblock Tennessee's student teacher achievement ratio ({STAR}) project, 2008.
\newblock
  \url{https://dataverse.harvard.edu/citation?persistentId=doi:10.7910/DVN/I9E0OF}.

\bibitem[Redmond(2009)]{community}
M.~Redmond.
\newblock Communities and crime data set.
\newblock \url{http://archive.ics.uci.edu/ml/datasets/communities+and+crime},
  2009.
\newblock Citation request: \cite{remond2002data}.

\bibitem[Yeh(2007)]{concrete}
I.-C. Yeh.
\newblock Concrete compressive strength data set.
\newblock
  \url{http://archive.ics.uci.edu/ml/datasets/concrete+compressive+strength},
  2007.
\newblock Citation request: \cite{yeh1998modelling}.

\bibitem[Fanaee-T and Gama(2014)]{fanaee2014event}
H.~Fanaee-T and J.~Gama.
\newblock Event labeling combining ensemble detectors and background knowledge.
\newblock \emph{Progress in Artificial Intelligence}, \penalty0 (2):\penalty0
  113–127, 2014.
\newblock \doi{10.1007/s13748-013-0040-3}.

\bibitem[Remond and Baveja(2002)]{remond2002data}
M.~Remond and A.~Baveja.
\newblock A data-driven software tool for enabling cooperative information
  sharing among police departments.
\newblock \emph{European Journal of Operational Research}, 141\penalty0
  (3):\penalty0 660--678, 2002.
\newblock \doi{10.1016/S0377-2217(01)00264-8}.

\bibitem[Yeh(1998)]{yeh1998modelling}
I.-C. Yeh.
\newblock Modeling of strength of high-performance concrete using artificial
  neural networks.
\newblock \emph{Cement and Concrete Research}, 28\penalty0 (12):\penalty0
  1797--1808, 1998.
\newblock \doi{10.1016/S0008-8846(98)00165-3}.

\end{thebibliography}

\clearpage

\appendix

\section{Proofs}\label{app:proofs}

\subsection{Proposition 1}\label{app:prop1}
\begin{proof}
See the proof of \cite[Theorem 1]{romano2019conformalized}. To address some notational differences, $n$ denotes the size of the full training set, $\mathcal{I}_1$ and $\mathcal{I}_2$ denote the set of indices of respectively the proper training and calibration set, $C(X_{n+1})$ denotes $\mathcal{S}_{\widehat{\lambda}_\mathrm{split}}(X_{n+1})$, the $E_i$ denote the conformity scores $r_i$ and $Q_{1-\alpha}(E,\mathcal{I}_2)$ denotes $\widehat{\lambda}_\mathrm{split}=\widehat{Q}(\alpha)$. Finally, note that the result also holds under the weaker assumption of exchangeability and both conditional and unconditional on the proper training set. This concludes the proof.

\end{proof}

\subsection{Proposition 2}\label{app:prop2}
\begin{proof}
We proof \cshref{prop:tol_reg_split} as a corollary of \cshref{prop:split_mc_tol}. Note that $\widehat{P}(\epsilon,\delta)$ equals $\widehat{Q}(\alpha)$ if and only if
\begin{equation}
    \frac{1}{n+1}\big(\sup\big\{k: \mathrm{Bin}(k; n, \epsilon)\leq\delta\big\}+1\big)\leq\alpha< \frac{1}{n+1}\big(\sup\big\{k: \mathrm{Bin}(k; n, \epsilon)\leq\delta\big\}+2\big),
\end{equation}
where the strict inequality follows from the ceiling function in $\widehat{Q}(\alpha)$. For all $\alpha$ in this interval, substitution into \cref{eq:delta_cond} yields
\begin{equation}
    \delta\geq\mathrm{Bin}(\sup\big\{k: \mathrm{Bin}(k; n, \epsilon)\leq\delta\big\}; n,\epsilon)
\end{equation}
which holds arbitrarily for all $\epsilon,\delta,n$. Then by \cshref{prop:split_mc_tol}, the split conformal prediction set $\mathcal{S}_{\widehat{\lambda}_\mathrm{split}}(X_{n+1})$ (\cref{eq:split_set}) with $\widehat{\lambda}_\mathrm{split}=\widehat{P}(\epsilon,\delta)$ (\cref{eq:P_hat}) is an $(\epsilon,\delta)$-tolerance region. 

If the non-conformity scores are almost surely distinct, then \cref{eq:delta_cond} is a necessary and sufficient condition for $\mathcal{S}_{\widehat{\lambda}_\mathrm{split}}(X_{n+1})$ with $\widehat{\lambda}_\mathrm{split}=\widehat{Q}(\alpha)$ to be an $(\epsilon,\delta)$-tolerance region. As \cref{eq:delta_cond} arbitrarily holds for all $\epsilon,\delta,n$ for $\widehat{\lambda}_\mathrm{split}=\widehat{P}(\epsilon,\delta)$, we obtain the last part of \cshref{prop:tol_reg_split}.

\end{proof}

\subsection{Proposition 3}\label{app:prop_new}
\begin{proof}
First we show part \textit{(i)}. This proof is an adaptation of the proof of \cite[Proposition 2a-2b]{vovk2012conditional}. Define
\begin{equation}
    r^*:=\sup\big\{r: \mathbb{P}\big[r(X_{n+1},Y_{n+1})> r \ \big| \ \{(X_i,Y_i)\}^n_{i=1}\big] > \epsilon\big\}.
\end{equation}
as the largest value of the score function such that the probability of observing a score larger than $r^*$ given the calibration set is larger than $\epsilon$. Further define
\begin{equation}
\begin{split}
    \epsilon'&:=\mathbb{P}\big[r(X_{n+1},Y_{n+1})> r^* \ \big| \ \{(X_i,Y_i)\}^n_{i=1}\big],\\
    \epsilon''&:=\mathbb{P}\big[r(X_{n+1},Y_{n+1})\geq r^* \ \big| \ \{(X_i,Y_i)\}^n_{i=1}\big].
\end{split}
\end{equation}
We must have $\epsilon'\leq\epsilon\leq\epsilon''$ with equality if and only if the random score function $r(X,Y)$ (\cref{eq:score}) follows a continuous distribution. 
    
For both $\epsilon'=\epsilon$ and $\epsilon'<\epsilon$, the probability of miscoverage conditional on the calibration set is greater than $\epsilon$ if and only if $\widehat{Q}(\alpha)< r^*$. Recall that $\widehat{Q}(\alpha)=r_{(\lceil (1-\alpha)(n+1)\rceil)}$, i.e. the $\lceil (1-\alpha)(n+1)\rceil$-th largest element of $\{r_i\}^n_{i=1}$. Then $\widehat{Q}(\alpha)< r^*$ if and only if at most $\lfloor \alpha(n+1)-1\rfloor$ points in the calibration set $\{(X_i,Y_i)\}^{n}_{i=1}$ are such that $r(X_i,Y_i)\geq r^*$. Since the sample is iid, the probability of at most $\lfloor \alpha(n+1)-1\rfloor$ calibration points such that $r(X_i,Y_i)\geq r^*$ equals
\begin{equation}\label{eq:bin_ineq}
    \mathbb{P}\Big[B''\leq\lfloor \alpha(n+1)-1\rfloor  \Big] \leq \mathbb{P}\Big[B\leq\lfloor \alpha(n+1)-1\rfloor  \Big],
\end{equation}
where $B''\sim\mathrm{Bin}(n,\epsilon'')$ and $B\sim\mathrm{Bin}(n,\epsilon)$ follow a Binomial distribution with $n$ trials and success probability $\epsilon''$ and $\epsilon$ respectively. The inequality follows from the property that for fixed $k$ and $m$, the Binomial cumulative distribution function $\mathrm{Bin}(k; m,p)$ is decreasing in the success probability $p$. See \cite{vovk2012conditional} (p.6) for a straightforward proof of this Binomial property.
    
We know that $\mathcal{S}_{\widehat{\lambda}_\mathrm{split}}(X_{n+1})$ is an $(\epsilon,\delta)$-tolerance region if
\begin{equation}\label{eq:tol_reg_split}
    \delta\geq\mathbb{P}\Big[\mathbb{P}\big[r(X_{n+1},Y_{n+1})> \widehat{Q}(\alpha) \ \big| \ \{(X_i,Y_i)\}^n_{i=1}\big] > \epsilon \Big].
\end{equation}
Using \cref{eq:bin_ineq}, we know that \cref{eq:tol_reg_split} holds if (\textit{not only if})
\begin{equation}\label{eq:tol_reg_split_suff}
    \delta\geq\mathbb{P}\Big[B\leq\lfloor \alpha(n+1)-1\rfloor \Big]=\mathrm{Bin}(\lfloor \alpha(n+1)-1\rfloor;n,\epsilon),
\end{equation}
which shows that the split conformal prediction set $\mathcal{S}_{\widehat{\lambda}_\mathrm{split}}(X_{n+1})$ is an $(\epsilon,\delta)$-tolerance region if condition \cref{eq:delta_cond} holds. \cref{eq:eps_cond} follow directly by inverting the Binomial distribution in \cref{eq:delta_cond} with respect to $\epsilon$. 
    
For the last part of part \textit{(i)}, assume that the non-conformity scores are almost surely distinct, i.e. that the random score function $r(X,Y)$ (\cref{eq:score}) follows a continuous distribution. Then it must hold that $\epsilon'=\epsilon=\epsilon''$ and thus that $\mathbb{P}\Big[B''\leq\lfloor \alpha(n+1)-1\rfloor  \Big] = \mathbb{P}\Big[B\leq\lfloor \alpha(n+1)-1\rfloor \Big]$ in \cref{eq:bin_ineq}. In turn, this implies that \cref{eq:tol_reg_split} holds \textit{if and only if} \cref{eq:tol_reg_split_suff} holds. 

Now we proof part \textit{(ii)}. Note that $\widehat{P}(\epsilon,\delta)$ equals $\widehat{Q}(\alpha)$ if and only if
\begin{equation}
    \frac{1}{n+1}\big(\sup\big\{k: \mathrm{Bin}(k; n, \epsilon)\leq\delta\big\}+1\big)\leq\alpha< \frac{1}{n+1}\big(\sup\big\{k: \mathrm{Bin}(k; n, \epsilon)\leq\delta\big\}+2\big),
\end{equation}
where the strict inequality follows from the ceiling function in $\widehat{Q}(\alpha)$. Due to this ceiling function, any $\alpha$ in the above interval results in the same quantile of $\{r_i\}^n_{i=1}$. To obtain the highest marginal coverage, \cshref{prop:marg_cov_split} can be applied where $\widehat{\lambda}_\mathrm{split}=\widehat{Q}(\alpha)$ and 
\begin{equation}
    \alpha= \frac{1}{n+1}\big(\sup\big\{k: \mathrm{Bin}(k; n, \epsilon)\leq\delta\big\}+1\big),
\end{equation}
i.e. the smallest value of $\alpha$ in the interval above. This results in part \textit{(ii)}.

This concludes the proof.

\end{proof}

\subsection{Proposition 4}\label{app:prop3}
\begin{proof}
To avoid handling ties, this proof only concerns univariate populations following a continuous distribution $F_Y$, i.e. we proof that for $0\leq r<s\leq n+1$,
\begin{equation}\label{eq:wilks_cov_cont}
    \mathbb{P}\big[Y_{(r)}\leq Y_{n+1}\leq Y_{(s)} \ \big|\  \{Y_{i}\}^n_{i=1}\big] \sim \mathrm{Beta}(s-r, n-s+r+1).
\end{equation}
For the discontinuous case, we refer to \cite[p. 192]{scheffe1945non}. In particular, they proof that 
\begin{equation}
    \mathbb{P}\big[Y_{(r)}\leq Y_{n+1}\leq Y_{(s)} \ \big|\  \{Y_{i}\}^n_{i=1}\big] \geq Z \geq \mathbb{P}\big[Y_{(r)}< Y_{n+1}< Y_{(s)} \ \big|\  \{Y_{i}\}^n_{i=1}\big],
\end{equation}
which clearly reduces to \cref{eq:wilks_cov_cont} if the $Y_i$ are almost surely distinct. The proof below is adapted from \cite[Chapter 8]{krishnamoorthy2009statistical}.
    
By a probability integral transform, if $\{Y_{i}\}^n_{i=1}$ is an iid sample from a univariate population following a continuous distribution $F_Y$, then $\{U_i\}^n_{i=1}$ with $U_i:=F_Y(Y_i)$ is a sample from a standard uniform distribution $\mathcal{U}[0,1]$. Furthermore, if $\{Y_{(i)}\}^n_{i=1}$ is the set of order statistics corresponding to the sample $\{Y_{i}\}^n_{i=1}$, then $\{U_{(i)}\}^n_{i=1}$ with $U_{(i)}:=F_Y(Y_{(i)})$ is the set of order statistics corresponding to the sample $\{U_i\}^n_{i=1}$. Note that the $Y_{(i)}$ and $U_{(i)}$ are almost surely distinct since we are sampling from a continuous distribution.

For the $r$-th order statistic $U_{(r)}$, $1\leq r\leq n$, we can combine the well-known pdf of an order statistic with the standard uniform distribution for the sample $\{U_{(i)}\}^n_{i=1}$ to obtain $U_{(r)}\sim\mathrm{Beta}(r,n-r+1)$. For the joint pdf of two order statistics $\big(U_{(r)}, U_{(s)}\big)$, $1\leq r<s\leq n$, we have pdf
\begin{equation}\label{eq:pdf_order_u_joint}
    f_{U_{(r)},U_{(s)}}(x,y)=cx^{r-1}(y-x)^{s-r-1}(1-y)^{n-s}, \quad 0<x<y<1,
\end{equation}
where $c=\frac{\Gamma(n+1)}{\Gamma(r)\Gamma(s-r)\Gamma(n-s+1)}$ is a normalizing constant. We refer to \cite{krishnamoorthy2009statistical} (p. 210-211) for a straightforward proof of this joint pdf.
    
Now consider the prediction interval $Y_{(r)}\leq Y_{n+1}\leq Y_{(s)}$ for $0\leq r<s\leq n+1$. We define $Y_{(0)}:=-\infty$ and $Y_{(n+1)}:=+\infty$ such that $U_{(0)}:=F_Y(Y_{(0)})=0$ and $U_{(n+1)}:=F_Y(Y_{(n+1)})=1$. The conditional probability that a new label $Y_{n+1}\sim F_Y$ is contained in $\big[Y_{(r)},Y_{(s)}\big]$ equals
\begin{equation}
\begin{split}
    \mathbb{P}\big[Y_{(r)}\leq Y_{n+1}\leq Y_{(s)} \ \big| \ \{Y_{i}\}^n_{i=1}\big]&=F_Y(Y_{(s)})-F_Y(Y_{(r)})\\
        &=U_{(s)}-U_{(r)}
\end{split}
\end{equation}
We recognize four cases. Fist, for $r=0$ and $s=n+1$, $Y_{n+1}$ is contained in $\big[Y_{(r)},Y_{(s)}\big]$ almost surely, but we ignore this non-informative boundary case. 
    
Second, for $r=0$ and $s<n+1$, the pdf of $U_{(s)}$ yields
\begin{equation}
    \mathbb{P}\big[-\infty<Y_{n+1}\leq Y_{(s)} \ \big|\ \{Y_{i}\}^n_{i=1}\big]=U_{(s)}\sim\mathrm{Beta}(s,n-s+1).
\end{equation}
Third, for $r>0$ and $s=n+1$ we similarly write
\begin{equation}
    \mathbb{P}\big[Y_{(r)}\leq Y_{n+1}<+\infty \ \big| \ \{Y_{i}\}^n_{i=1}\big]=1-U_{(r)}\sim\mathrm{Beta}(n-r+1,r).
\end{equation}
Finally, for $0<r<s<n+1$, we retrieve the pdf of $U_{(s)}-U_{(r)}$ through the joint pdf of $\big(U_{(r)}, U_{(s)}\big)$. Write $V=U_{(s)}-U_{(r)}$ and $W=U_{(s)}$ with inverse transformation $U_{(r)}=W-V$ and $U_{(s)}=W$ for $0<V<W<1$. Note that the Jacobian of the transformation is one. The joint pdf of $\big(V,W\big)$ is
\begin{equation}
    f_{V,W}(v,w)=c(w-v)^{r-1}v^{s-r-1}(1-w)^{n-s}, \quad 0<v<w<1.
\end{equation}
To integrate out $W$ consider the transformation $w=v+t(1-v)$ for $0<t<1$. This yields $w-v=t(1-v)$, $1-w=(1-v)(1-t)$ and $\mathrm{d}w=(1-v)\mathrm{d}t$. Then the marginal pdf of $V$ becomes
\begin{equation}
    \begin{split}
        f_V(v)&=cv^{s-r-1}\int^1_0(1-v)^{r-1}t^{r-1}(1-v)^{n-s}t^{n-s}(1-v)\mathrm{d}t\\
        &=\frac{\Gamma(n+1)}{\Gamma(s-r)\Gamma(n-s+r+1)}v^{s-r-1}(1-v)^{n-s+r}, \quad 0<v<1,
    \end{split}
\end{equation}
by recognizing a $\mathrm{Beta}(r, n-s+1)$ distribution and substituting $c=\frac{\Gamma(n+1)}{\Gamma(r)\Gamma(s-r)\Gamma(n-s+1)}$. Clearly, $V=U_{(s)}-U_{(r)}\sim\mathrm{Beta}(s-r, n-s+r+1)$ conditional on the sample $\{Y_{i}\}^n_{i=1}$. This concludes the proof. 

\end{proof}

\subsection{Proposition 5}\label{app:prop4}
\begin{proof}
If the sample $\{(X_i,Y_i)\}^{n+1}_{i=1}$ is iid, then the non-conformity scores $\{r_i\}^n_{i=1}$ are iid as well, conditional on the proper training set. Consider the interval $\big[r_{(0)},r_{(\lceil (1-\alpha)(n+1)\rceil)}\big]=\big(-\infty,r_{(\lceil (1-\alpha)(n+1)\rceil)}\big]$. Then by \cshref{prop:wilks_cov} we have
\begin{equation}\label{eq:dist_cov_score_disc}
    \mathbb{P}\big[r(X_{n+1},Y_{n+1})\in \big(-\infty,r_{(\lceil (1-\alpha)(n+1)\rceil)}\big] \ \big| \ \{r_i\}^n_{i=1}\big]\geq Z,
\end{equation}
where $Z \sim \mathrm{Beta}(\lceil (1-\alpha)(n+1)\rceil,\lfloor \alpha(n+1)\rfloor)$, with equality if the $r_i$ are almost surely distinct.
    
For any $(X_{n+1}, Y_{n+1})\in\mathcal{X}\times\mathcal{Y}$, \cref{eq:split_set_score} shows that $Y_{n+1}\in \mathcal{S}_{\widehat{\lambda}_\mathrm{split}}(X_{n+1})$ if and only if $r(X_{n+1},Y_{n+1})\in\big(-\infty,r_{(\lceil (1-\alpha)(n+1)\rceil)}\big]$. Then \cref{eq:dist_cov_exp_nonc} follows directly from \cref{eq:dist_cov_score_disc} with equality if the non-conformity scores are almost surely distinct. This concludes the proof. 

\end{proof}

\subsection{Proposition 6}\label{app:prop5}

\begin{proof}
This is an adaptation of \cite[Section 3.2]{angelopoulos2021gentle}. Conditional on the calibration set, the value of $\widehat{\lambda}_{\mathrm{split},j}=\widehat{Q}_j(\alpha)$ and thus the prediction set $\mathcal{S}_{\widehat{\lambda}_{\mathrm{split}, j}}(\cdot)$ is fixed. Then assuming the sample $\{(X_{i,j},Y_{i,j})\}_{i=1}^{n+n_\mathrm{test}}$ is iid, the empirical coverage $C_j$ is the average of $n_\mathrm{test}$ iid indicator random variables and thus $C_j\sim\frac{1}{n_\mathrm{test}}\mathrm{Bin}(n_\mathrm{test},\mu)$. The conditional mean $\mu$ of $C_j$ is
\begin{equation}
    \mu=\mathbb{E}\big[C_j\ \big| \ \{(X_{i,j},Y_{i,j})\}^n_{i=1}\big]=\mathbb{P}\big[Y_{n+1}\in \mathcal{S}_{\widehat{\lambda}_\mathrm{split}}(X_{n+1}) \ \big| \ \{(X_i,Y_i)\}^n_{i=1}\big],
\end{equation}
which by \cshref{prop:dist_exp_cov} we know stochastically dominates the random variable $Z$, with equality $\mu=Z$ if the non-conformity scores are almost surely distinct. This concludes the proof.

\end{proof}

\section{Computational Tables}\label{sec:tables}

\begin{table}[h!]
\setlength\tabcolsep{4pt}
\centering
\begin{tabular}{ccccccccccc}
\toprule
\multicolumn{1}{c}{} & \multicolumn{5}{c}{\textbf{$n=100$}} & \multicolumn{5}{c}{$n=1000$} \\
\cmidrule(rl){2-6} \cmidrule(rl){7-11} 
$\delta$ & $\epsilon: 10\%$ & $5\%$ & $1\%$ & $0.5\%$ & $0.1\%$ & $\epsilon: 10\%$ & $5\%$ & $1\%$ & $0.5\%$ & $0.1\%$  \\
\midrule
\rowcolor{lavender}
$10\%$ & 5 & 1 & 0 & 0 & 0 & 87 & 40 & 5 & 1 & 0  \\
$5\%$ & 4 & 1 & 0 & 0 & 0 & 84 & 38 & 4 & 1 & 0\\
\rowcolor{lavender}
$1\%$ & 3 & 0 & 0 & 0 & 0 & 78 & 34 & 2 & 0 & 0  \\
$0.5\%$ & 2 & 0 & 0 & 0 & 0 & 75 & 32 & 2 & 0 & 0  \\
\rowcolor{lavender}
$0.1\%$ & 1 & 0 & 0 & 0 & 0 & 71 & 29 & 1 & 0 & 0 \\
\bottomrule
\end{tabular}

\begin{tabular}{ccccccccccc}
\\
\toprule
\multicolumn{1}{c}{} & \multicolumn{5}{c}{\textbf{$n=10000$}} & \multicolumn{5}{c}{$n=100000$} \\
\cmidrule(rl){2-6} \cmidrule(rl){7-11} 
$\delta$ & $\epsilon: 10\%$ & $5\%$ & $1\%$ & $0.5\%$ & $0.1\%$ & $\epsilon: 10\%$ & $5\%$ & $1\%$ & $0.5\%$ & $0.1\%$  \\
\midrule
\rowcolor{lavender}
$10\%$ &  961 & 471 & 86 & 40 & 5 & 9878 & 4911 & 959 & 471 & 86   \\
$5\%$ & 950 & 463 & 83 & 38 & 4 & 9843 & 4886 & 948 & 463 & 83 \\
\rowcolor{lavender}
$1\%$ & 930 & 449 & 77 & 33 & 2 & 9779 & 4839 & 927 & 448 & 77  \\
$0.5\%$ & 922 & 444 & 74 & 32 & 2 & 9755 & 4822 & 919 & 442 & 74  \\
\rowcolor{lavender}
$0.1\%$ & 907 & 433 & 70 & 29 & 1 & 9707 & 4787 & 903 & 432 & 70 \\
\bottomrule
\end{tabular}

\caption[Computational table for $\sup\big\{k: \mathrm{Bin}(k; n, \epsilon)\leq\delta\big\}$.]{Solution to $\sup\big\{k: \mathrm{Bin}(k; n, \epsilon)\leq\delta\big\}$ for pre-specified $\epsilon, \delta\in(0,1)$ and a calibration set of size $n$.}
\end{table}

\begin{table}[h!]
\setlength\tabcolsep{2.5pt}
\centering
\begin{tabular}{ccccccccccc}
\toprule
\multicolumn{1}{c}{\%} & \multicolumn{5}{c}{\textbf{$n=100$}} & \multicolumn{5}{c}{$n=1000$} \\
\cmidrule(rl){2-6} \cmidrule(rl){7-11} 
$\delta$ & $\alpha: 10\%$ & $5\%$ & $1\%$ & $0.5\%$ & $0.1\%$ & $\alpha:10\%$ & $5\%$ & $1\%$ & $0.5\%$ & $0.1\%$ \\
\midrule
\rowcolor{lavender}
$10\%$ & 13.8351 & 7.8347 & 2.2762 & 0.0000 & 0.0000 & 11.2203 & 5.8942 & 1.4169 & 0.7977 & 0.2299  \\
$5\%$ & 15.1795 & 8.9196 & 2.9513 & 0.0000 & 0.0000 & 11.5924 & 6.1758 & 1.5652 & 0.9129 & 0.2991  \\
\rowcolor{lavender}
$1\%$ & 17.8746 & 11.1704 & 4.5007 & 0.0000 & 0.0000 & 12.3092 & 6.7257 & 1.8691 & 1.1560 & 0.4594  \\
$0.5\%$ & 18.9152 & 12.0632 & 5.1604 & 0.0000 & 0.0000 & 12.5776 & 6.9342 & 1.9888 & 1.2540 & 0.5284  \\
\rowcolor{lavender}
$0.1\%$ & 21.1465 & 14.0165 & 6.6745 & 0.0000 & 0.0000 & 13.1413 & 7.3760 & 2.2503 & 1.4714 & 0.6883 \\
\bottomrule
\end{tabular}

\begin{tabular}{ccccccccccc}
\\
\toprule
\multicolumn{1}{c}{\%} & \multicolumn{5}{c}{\textbf{$n=10000$}} & \multicolumn{5}{c}{$n=100000$} \\
\cmidrule(rl){2-6} \cmidrule(rl){7-11} 
$\delta$ & $\alpha: 10\%$ & $5\%$ & $1\%$ & $0.5\%$ & $0.1\%$ & $\alpha:10\%$ & $5\%$ & $1\%$ & $0.5\%$ & $0.1\%$ \\
\midrule
\rowcolor{lavender}
$10\%$ & 10.3850 & 5.2806 & 1.1293 & 0.5921 & 0.1420 & 10.1216 & 5.0884 & 1.0405 & 0.5287 & 0.1130  \\
$5\%$ & 10.4968 & 5.3629 & 1.1689 & 0.6213 & 0.1569 & 10.1563 & 5.1138 & 1.0522 & 0.5372 & 0.1169  \\
\rowcolor{lavender}
$1\%$ & 10.7085 & 5.5196 & 1.2456 & 0.6783 & 0.1877 & 10.2217 & 5.1616 & 1.0746 & 0.5533 & 0.1247  \\
$0.5\%$ & 10.7865 & 5.5776 & 1.2744 & 0.7001 & 0.1998 & 10.2457 & 5.1791 & 1.0828 & 0.5593 & 0.1276  \\
\rowcolor{lavender}
$0.1\%$ & 10.9486 & 5.6985 & 1.3353 & 0.7462 & 0.2264 & 10.2953 & 5.2154 & 1.1000 & 0.5717 & 0.1337 \\
\bottomrule
\end{tabular}

\caption[Computational table for $\epsilon \geq \inf\big\{p: \mathrm{Bin}(\lfloor \alpha(n+1)-1\rfloor; n, p)\leq \delta\big\}$.]{Smallest significance level $\epsilon\in(0,1)$ (\%), rounded up to 4 digits, that satisfies $\epsilon \geq \inf\big\{p: \mathrm{Bin}(\lfloor \alpha(n+1)-1\rfloor; n, p)\leq \delta\big\}$
for pre-specified $\alpha,\delta\in(0,1)$ and a calibration set of size $n$.}
\end{table}


\end{document}